\def\MODE{1} 
\newcommand{\Exp}{\mathbb{E}}
\DeclarePairedDelimiterX{\inp}[2]{\langle}{\rangle}{#1, #2}
\newtheorem{theorem}{Theorem}
\newtheorem{remark}{Remark}
\newtheorem{lemma}[theorem]{Lemma}
\newtheorem{definition}{Definition}
\newcommand{\eat}[1]{}
\newcommand{\R}{\mathbb{R}}
\numberwithin{equation}{section}
\newlength{\dhatheight}
\newcommand{\WH}[1]{}
\newcommand{\ST}[1]{}
\newcommand{\XP}[1]{}
\newcommand{\systemname}{Frugal\textsc{ML}}
\title{  \systemname{}: How to Use ML Prediction APIs \\ More Accurately and Cheaply}
\author{
Lingjiao Chen, Matei Zaharia, James Zou\\
Stanford University
} 
\date{}
\begin{document}

\maketitle

\begin{abstract}
Prediction APIs offered for a fee are a fast-growing industry and an important part of machine learning as a service.
While many such services are available, the
 heterogeneity in their price and  performance makes it challenging for users to decide which API or combination of APIs to use for their own data and budget.
 We  take a  first step towards addressing this challenge by proposing  \systemname{},  a principled framework that jointly learns the strength and weakness of each API on different data, and performs an efficient optimization to automatically identify the best sequential strategy to adaptively use the available APIs within a budget constraint.
 Our theoretical analysis shows that natural sparsity in the formulation can be leveraged to make \systemname{} efficient.
 We conduct systematic experiments using ML APIs from Google, Microsoft, Amazon, IBM, Baidu and other providers for tasks including facial emotion recognition, sentiment analysis and speech recognition. 
 Across various tasks, \systemname{} can achieve up to 90\% cost reduction while matching the accuracy of the best single API, or up to 5\% better accuracy while matching the best API's cost.
\end{abstract}

\section{Introduction}\label{Sec:Intro}
{Machine learning as a service (MLaaS) is a rapidly growing industry. For example, one could use Google prediction API~\cite{GoogleAPI} to classify an image for \$0.0015 or to classify the sentiment of a text passage for \$0.00025. MLaaS services are appealing because using such APIs reduces the need to develop one's own ML models. The MLaaS market size was estimated at \$1 billion in 2019, and it is expected to grow to \$8.4 billion by 2025 \cite{MLasS_MarketInfo}. 

Third-party ML APIs come with their own challenges, however. A major challenge is that different companies charge quite different amounts for similar tasks. For example, for image classification, Face++ charges \$0.0005 per image \cite{FacePPAPI}, which is 67\% cheaper than Google~\cite{GoogleAPI}, while Microsoft charges \$0.0010 \cite{MicrosoftAPI}. Moreover, the prediction APIs of different providers perform better or worse on different types of inputs. For example, accuracy disparities in  gender classification were observed for different skin colors \cite{pmlr-v81-buolamwini18a,kim2019multiaccuracy}. As we will show later in the paper, these APIs' performance also varies by class---for example, we found that on the FER+ dataset, the Face++ API had the best accuracy on \emph{surprise} images while the Microsoft API had the best performance on \emph{neutral} images. The more expensive APIs are not uniformly better; and APIs tend to have specific classes of inputs where they perform better than alternatives. 
This heterogeneity in price and in performance makes it challenging for users to decide which API or combination of APIs to use for their own data and budget.  

In this paper, we propose \systemname{}, a principled framework to address this challenge. \systemname{} jointly learns the strength and weakness of each API on different data, then performs an efficient optimization to automatically identify the best adaptive strategy to use all the available APIs given the user's budget constraint. \systemname{} leverages the modular nature of APIs by designing adaptive strategies that can call APIs sequentially. For example, we might first send an input to API A. If A returns the label ``dog'' with high confidence---and we know A tends to be accurate for dogs---then we stop and report ``dog''. But if A returns ``hare'' with lower confidence, and we have learned that A is less accurate for ``hare,'' then we might adaptively select a second API B to make additional assessment.

\systemname{} optimizes such adaptive strategies to substantially improve prediction performance over simpler approaches such as model cascades with a fixed quality threshold (Figure~\ref{fig:FAME:Example}). Through  experiments with real commercial ML APIs on diverse tasks, we observe that \systemname{} typically reduces costs more than 50\% and sometimes up to 90\%. Adaptive strategies are challenging to learn and optimize, because the choice of the 2\textsuperscript{nd} predictor, if one is chosen, could depend on the prediction and confidence of the first API, and because \systemname{} may need to allocate different fractions of its budget to predictions for different classes. We prove that under quite general conditions, there is natural sparsity in this problem that we can leverage to make \systemname{} efficient. }

\eat{For the past decades, large companies have transformed machine learning (ML) breakthrough to many real products, such as Google AlphaGo, Amazon Alexa, and Apple Siri. 
However, building ML applications remains difficult for and inaccessible to most individuals and small companies \cite{bailis2017infrastructure}, primarily due to the hidden expense and efforts for expert hiring, data collecting, model training, and system building.
To tackle this problem, there is a growing industry recently, which provides  ML as a service (MLaaS).
Given a ML task (say, facial emotion recognition), one simply need to call such a ML service to get the desired answer (say, smile or angry), with no need to hire ML experts to train his/her own models.
Currently, many giants in ML have provided their own MLaaS, such as  Google \cite{GoogleAPI}, Microsoft \cite{MicrosoftAPI}, IBM \cite{IBMAPI},  Amazon \cite{AmazonAPI} and Face++ \cite{FacePPAPI}, and the market size is expected to grow rapidly \cite{MLasS_MarketInfo}.

However, those services may not be as perfect as they appear. First, an ML service typically comes with a cost per query, which can be prohabitively large as the number of service calls increases. 
Second, the accuracy of those services may highly depend on  tasks and datasets. 
Simply using one ML service for all tasks and datasets may lead to poor accuracy.

Thus, we ask the following question: \textit{how to obtain a strategy for accurate and economical predictions from  the machine learning as a service marketplace?}

The above question poses unique challenges in \textit{data availability, privacy, statistical and computational efficiency, and flexibility}.
Data are centric to designing such strategy, unfortunately however, there exists no publicly available datasets containing both the features, labels, as well as the prediction output of major ML services, as MLaaS has just emerged recently.
Second, to protect user privacy, such strategy, given the prediction output from ML services, should be independent of the user data.
Thus, naively building a ML model to map features to labels is not desired, as it overuses the users' (private) data.
Third, as both collecting large labeled datasets and long time training are also expensive, the strategy should be generated only from a small number of samples with relatively short time.
Finally, such strategy must be flexible, i.e., enables users with different budget to achieve maximum accuracy from the market.
}
\begin{figure}
	\centering
	\includegraphics[width=1\linewidth]{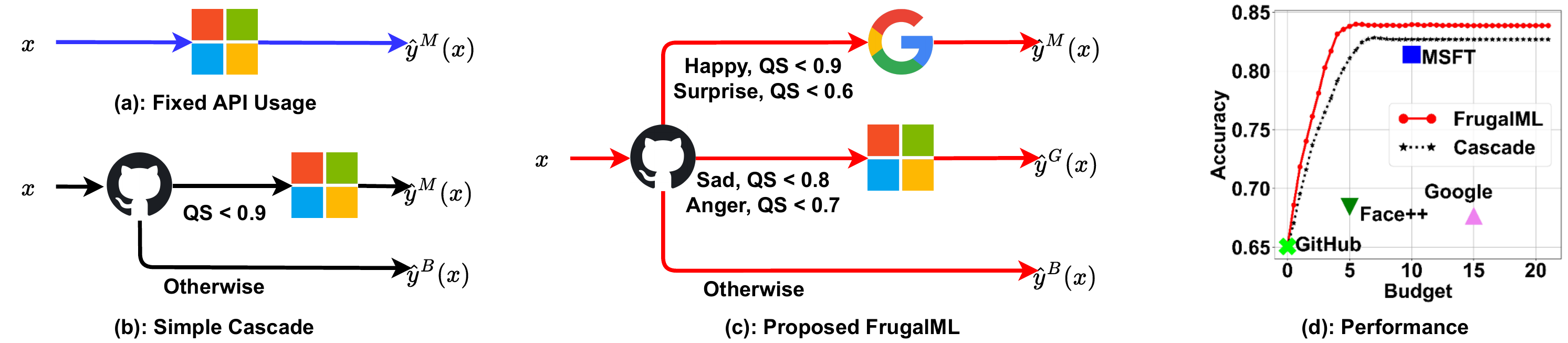}
	\caption{Comparison of different approaches to use ML APIs. Naively calling a fixed API in (a) provides a fixed cost and accuracy. The simple cascade in (b) uses the quality score (QS) from a low-cost open source model to decide whether to call an additional service. Our proposed \systemname{} approach, in (c), exploits both the quality score and  predicted label to select APIs. Figure (d) shows the benefits of \systemname{} on FER+, a facial emotion dataset. }
	\label{fig:FAME:Example}
\end{figure}

\textbf{Contributions} To sum up, our contributions are:
\begin{enumerate}

    \item We formulate and study the problem of learning to optimally use commercial ML APIs given a budget. This is a growing area of importance and is under-explored. 

    \item We propose \systemname{}, a framework that jointly learns the strength and weakness of each API, and performs an optimization to identify the best strategy for using those APIs within a budget constraint.  
    By leveraging natural sparsity in this optimization problem, we design an efficient algorithm to solve it with provable guarantees.
    
    \item We evaluate  \systemname{}  using real-world APIs from diverse providers (e.g., Google, Microsoft, Amazon, and Baidu) for classification tasks including facial emotion recognition, text sentiment analysis, and speech recognition. 
    We find that \systemname{} can match the accuracy of the best individual API with up to 90\% lower cost, or significantly improve on this accuracy, up to 5\%, with the the same cost.
    
    \item We release our dataset of 612,139 samples annotated by commercial APIs as a broad resource to further investigate differences across APIs and improve usage.
\end{enumerate}

\paragraph{Related Work.} 
\textbf{MLaaS:} With the growing importance  of MLaaS APIs \cite{AmazonAPI, BaiduAPI,FacePPAPI, GoogleAPI, IBMAPI, MicrosoftAPI}, existing research has largely focused on evaluating individual API for their performance  \cite{MLasS_EmpiricalAnalysis2017},  robustness  \cite{MLasS_GoogleNotRobust_2017}, and applications \cite{ pmlr-v81-buolamwini18a, MLasS_GoogleDigitalMedia_2019, MLasS_Google_Microsoft_Blind18}.
On the other hand, \systemname{} aims at finding strategies to select from or use multiple APIs to reduce costs and increase accuracy.   

\textbf{Mixtures of Experts:}  A natural approach to exploiting multiple predictors is mixture of experts \cite{MixtureExpert_JJ94, MixtureExpert_First1991, MixtureExpertsSurvey12}, which uses a gate function to decide which expert to use.
Substantial research has focused on developing gate function models, such as SVMs \cite{mixtureexpertSVM02, Mixture_Experts_SVMNIPS}, Gaussian Process \cite{MixtureExpertGP15,MixtureExpert_DP2011}, and neutral networks  
\cite{MixtureExpert_DNN_MMDLHD17, MixtureExpert_NN2019}.
However, applying mixture of experts for MLaaS would result in fixed cost and thus would not allow users to specify a budget constraint as in \systemname{}. 
As we will show later, sometimes \systemname{} with  a budget constraint can even outperform mixture of experts algorithms while using less budget.

\textbf{Model Cascades:} Cascades consisting of a sequence of models are useful to balance the quality and runtime of inference \cite{Viola01robustreal_time, ModelCascade_NIPS2001, PedestrianDetectionCascadeCai15,Noscope_kang17, FacePointCascade13, ModelCascade_Linear11, CascadeXu14,ModelCascade_facedetection}.
While model cascades use predicted quality score \textit{alone} to avoid calling computationally expensive models,
\systemname{}' strategies can utilize {both quality score and predicted class} to select a downstream expensive add-on service.
Designing such strategies requires solving a significantly harder optimization problem, e.g., choosing how to divide the available budget between classes (\S\ref{Sec:FAME:Theory}), but also improves performance substantially over using the quality score alone (\S\ref{Sec:FAME:Experiment}). 

\section{Preliminaries}\label{Sec:FAME:Preli}

\paragraph{Notation.}
In our exposition, we denote matrices and vectors in bold, and scalars, sets,  and functions in standard script. We let $\mathbf{1}_m$ denote the $m\times 1$ all ones vector, while $\mathbf{1}_{n\times m}$ denotes the all ones $n\times m$ matrix. We define $\mathbf{0}_{m}, \mathbf{0}_{n\times m}$ analogously.
The subscripts are omitted when clear from context. 
Given a matrix $\mathbf{A} \in \mathbb{R}^{n\times m}$, we let $\mathbf{A}_{i,j}$ denote its entry at location $(i,j)$, $\mathbf{A}_{i,\cdot} \in \mathbb{R}^{1\times m}$ denote its $i$th row, and $\mathbf{A}_{\cdot, j} \in \mathbb{R}^{n\times 1}$ denote its $j$th column. Let $[n]$ denote $\{1,2,\cdots,n\}$.
Let $\mathbbm{1}$ represent the indicator function.

\paragraph{ML Tasks.}
Throughout this paper, we focus on (multiclass) classification tasks, where the goal is to classify a data point $x$ from a distribution $D$ into $L$ label classes. 
Many real world ML APIs aim at such tasks, including facial emotion recognition, where $x$ is a face image and label classes are  emotions (happy, sad, etc), and  text sentiment analysis, where $x$ is a text passage and the label classes are attitude sentiment (either positive or negative). 

\paragraph{MLaaS Market.}
Consider a MLaaS market consisting of $K$ different ML services which aim at the same classification task.
Taken a data point $x$ as input, the $k$th service returns to the user a predicted label $y_k(x)\in [L]$ and its quality score $q_k(x) \in [0,1]$, where
larger score indicates higher confidence of its prediction. 
This is typical for many popular APIs.
There is also a unit cost associated with each service. 
Let the vector $\mathbf{c} \in \R^{K}$
denote the unit cost of all services.
Then $\mathbf{c}_k=0.005$ simply means that  users need to pay $0.005$ every time they call the $k$th service. 
We use $y(x)$ to denote $x$'s true label, and let $r^k(x)\triangleq \mathbbm{1}_{y_k(x)=y(x)}$ be the reward of using the $k$ service on $x$.

\eat{\paragraph{Quality Score Enhanced Policy.}
We consider a set of policies based on the quality score given by the base model.
If the quality score is larger than some threshold $\hat{q}$, then the base model is good enough and thus we simply use the base model's prediction.
Otherwise, we call one of the ML services. 
Formally speaking, letting $a_t$ be the action at time $t$, then we have
\begin{equation*}
\begin{split}
a_t =
\begin{cases}
0,& \textit{if } q_{t}\geq \hat{q} \\
\hat{a}_t \in A,              & \text{otherwise}
\end{cases}
\end{split}
\end{equation*}
where $A=\{1,2,\cdots, K\}$ denotes all the ML services. 
Note that $a_t$ is uniquely determined by $\hat{q}_t$ and $\hat{a}_t$.
This can be viewed as a cascading of the base model and the ML services. 
We call such policies \textit{quality score enhanced policies}.
}

\eat{
\paragraph{Budget-limited Accuracy Maximization.}
Now we are ready to describe the budget-limited accuracy maximization problem.
Given a  data point $x$, we choose one ML service $a(x)$ to make a prediction and receive a reward $r_{a(x)}$.
Our goal is to choose action $a(x)$ to maximize the expected accuracy, such that the expected  cost is bounded by a given budget $B$.
That is, 
\begin{equation*}
\begin{split}
\max_{a(x)} \textit{} &   \Exp_{x \sim D_x} \left[ r_{a(x)} \right] \\
s.t. &  \Exp_{x \sim D_x} c_{a(x)} \leq B
\end{split}
\end{equation*}
where $c_0\triangleq 0$.
Note that an important challenge is that the reward is not known before an ML servicce is called.}
\eat{\section{Alternative Approach}
A simple alternative approach is to always choose the service with the highest accuracy if it is affordable. 
More precisely, it is modeled as follows.
\begin{equation*}
    \begin{split}
        \max_{i:c_i\leq B} \textit{ }& \min \bar{r}_i \\
    \end{split}
\end{equation*}
While this is a reasonable approach, it leaves a lot of room for accuracy improvement.
}

\section{\systemname: a Frugal Approach to Adaptively Leverage ML Services}\label{Sec:FAME:Theory}

\begin{figure}
	\centering
	\includegraphics[width=1.0\linewidth]{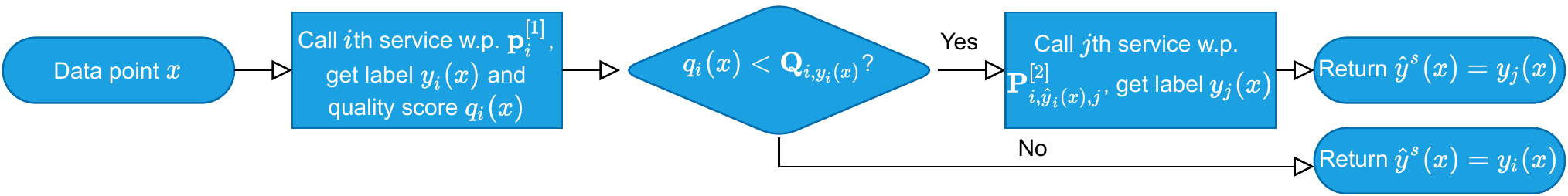}
	\caption{In \systemname{}, a base service is first selected and called. If its quality score is smaller than the threshold for its predicted label, \systemname{} chooses an add-on service to invoke and returns its prediction. Otherwise, the base service's prediction is returned.}
	\label{fig:FAME:Framework}
\end{figure}
In this section, we present \systemname{}, a formal framework for API calling strategies to obtain accurate and cheap predictions from a MLaaS market. All proofs are left to the  appendix.
We generalize the scheme in Figure  \ref{fig:FAME:Example} (c) to $K$ ML services and $L$ label classes.
Let a tuple $s\triangleq (\mathbf{p}^{[1]}, \mathbf{Q}, \mathbf{P}^{[2]})$ represent a calling strategy produced by \systemname{}.
Given an input data $x$,  \systemname{} first calls a \textit{base service}, denoted by $A^{[1]}_s$, which with probability  $\mathbf{p}^{[1]}_i$ is the $i$th service and returns quality score $q_i(x)$ and label $y_i(x)$.
Let $D_s$ be the indicator of whether the quality score is smaller than the threshold value $\mathbf{Q}_{i, y_i(x)}$.
If $D_s=1$, then  \systemname{} invokes an \textit{add-on service}, denoted by $A^{[2]}_s$,  with probability $\mathbf{P}^{[2]}_{i, y_i(x),j}$ being the $j$th service and producing $y_j(x)$ as the predicted label $\hat{y}^{s}(x)$.
Otherwise, \systemname{} simply returns label $\hat{y}^{s}(x) = y_i(x)$ from the base service.
This process is summarized in Figure \ref{fig:FAME:Framework}. 
Note that the strategy is adaptive: the choice of the add-on API can depend on the predicted label and quality score of the base model.

The set of possible strategies can be parametrized as  $S\triangleq \{(\mathbf{p}^{[1]}, \mathbf{Q}, \mathbf{P}^{[2]})| \mathbf{p}^{[1]} \succcurlyeq \mathbf{0} \in \R^{K}, 
\mathbf{1}^T\mathbf{p}^{[1]} =1,
\mathbf{Q} \in \R^{K\times L}, \mathbf{0} \preccurlyeq \mathbf{Q} \preccurlyeq \mathbf{1}, \mathbf{P}^{[2]} \in \R^{K\times L \times K},  \mathbf{P}^{[2]} \succcurlyeq \mathbf{0}, \mathbf{1}^T \mathbf{P}^{[2]}_{k,\ell,\cdot}  = 1    \}$. Our goal is to choose the optimal strategy $s^*$ that maximizes the expected accuracy while satisfies the user's budget constraint $b$.
This is formally stated as below.

\begin{definition}\label{def:FAME:prob}
Given a user budget $b$, the optimal \systemname{} strategy $s^* = (\mathbf{p}^{[1]*}, \mathbf{Q}^*, \mathbf{P}^{[2]*})$ is 
\begin{equation}\label{prob:FAME:optimaldefinition}
    s^* \triangleq \arg \max_{s\in S} \Exp[{r}^s(x)] \textit{  s.t. } \Exp[\eta^{[s]}(x,\mathbf{c})] \leq b, 
\end{equation}
where $r^s(x)\triangleq \mathbbm{1}_{\hat{y}^{s}(x)=y(x)}$ is the reward  and $\eta^{[s]}(x,\mathbf{c})$  the total cost of strategy $s$ on $x$. 
\end{definition}
\begin{remark}
The above definition can be generalized to wider settings.
For example, instead of 0-1 loss, the reward  can be negative square loss to handle regression tasks.
We pick the concrete form for demonstration purposes.
The cost of strategy $s$, $\eta^{[s]}(x,\mathbf{c})$, is the sum of all services called on $x$. For example, if service $1$ and $2$ are called for predicting $x$, then $\eta^{[s]}(x,\mathbf{c})$ becomes $\mathbf{c}_1+\mathbf{c}_2$.
\end{remark}

Given the above formulation, a natural question is how to solve it efficiently.
In the following, 
We first highlight an interesting property of the optimal strategy, \textit{sparsity}, 
which inspires the design of the efficient solver, and then present the algorithm for the solver.

\subsection{Sparsity Structure in the Optimal Strategy}
We show that if problem \ref{prob:FAME:optimaldefinition} is feasible and has unique optimal solution, then we must have $\|\mathbf{p}^{[1]*}\|\leq 2$.
In other words, the optimal strategy should only choose the base service from at most two services (instead of $K$) in the MLaaS market. This is formally stated in Lemma \ref{lemma:FAME:sparsitymainpaper}.
\begin{lemma}\label{lemma:FAME:sparsitymainpaper}
If problem \ref{prob:FAME:optimaldefinition} is feasible, 
then there exists one optimal solution $s^* = (\mathbf{p}^{[1]*}, \mathbf{Q}^*, \mathbf{P}^{[2]*})$ such that $\|\mathbf{p}^{[1]*}\|\leq 2$.
\end{lemma}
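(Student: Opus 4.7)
The plan is to reduce the problem to a linear program in $\mathbf{p}^{[1]}$ alone and invoke the fundamental theorem of linear programming to obtain a basic feasible solution with the desired sparsity. Concretely, I would start from an arbitrary optimal triple $s^* = (\mathbf{p}^{[1]*}, \mathbf{Q}^*, \mathbf{P}^{[2]*})$ (which exists by feasibility and the compactness of the subproblem, or alternatively by assumption), and then \emph{freeze} $\mathbf{Q}^*$ and $\mathbf{P}^{[2]*}$ and re-optimize only over the base-service distribution $\mathbf{p}^{[1]}$.

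The key observation is that once $(\mathbf{Q}^*, \mathbf{P}^{[2]*})$ is fixed, both the expected reward and the expected cost are linear in $\mathbf{p}^{[1]}$. Indeed, by the law of total expectation, conditioning on the realized base service index $i$, one can define
\begin{equation*}
R_i \triangleq \mathbb{E}\bigl[r^{s}(x)\,\big|\,A^{[1]}_s = i\bigr], \qquad C_i \triangleq \mathbb{E}\bigl[\eta^{[s]}(x,\mathbf{c})\,\big|\,A^{[1]}_s = i\bigr],
\end{equation*}
which depend only on the frozen $(\mathbf{Q}^*, \mathbf{P}^{[2]*})$, the service costs, and the joint distribution of $(x, y_1(x), \ldots, y_K(x), q_1(x), \ldots, q_K(x))$, but not on $\mathbf{p}^{[1]}$. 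Thus the restricted problem is the linear program
\begin{equation*}
\max_{\mathbf{p}^{[1]} \in \mathbb{R}^{K}} \; \sum_{i=1}^{K} R_i \mathbf{p}^{[1]}_i \quad \text{s.t.}\quad \sum_{i=1}^{K} C_i \mathbf{p}^{[1]}_i \leq b, \;\; \mathbf{1}^{T}\mathbf{p}^{[1]} = 1, \;\; \mathbf{p}^{[1]} \succcurlyeq \mathbf{0}.
\end{equation*}

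Since $\mathbf{p}^{[1]*}$ is feasible for this LP and attains the original optimum, by the fundamental theorem of linear programming there exists a basic feasible solution $\tilde{\mathbf{p}}^{[1]}$ achieving value at least as large. A vertex of this polytope is determined by $K$ linearly independent active constraints; the equality constraint is always active, so at least $K-1$ of the remaining constraints (the $K$ non-negativity inequalities plus the single budget inequality) must be active as well. Hence at most two coordinates of $\tilde{\mathbf{p}}^{[1]}$ can be strictly positive, i.e.\ $\|\tilde{\mathbf{p}}^{[1]}\|_0 \leq 2$. The triple $\tilde{s} \triangleq (\tilde{\mathbf{p}}^{[1]}, \mathbf{Q}^*, \mathbf{P}^{[2]*})$ still satisfies the budget (LP feasibility) and achieves the original optimum (since $\mathbf{p}^{[1]*}$ was LP-feasible and $\tilde{\mathbf{p}}^{[1]}$ is LP-optimal), so it is an optimal \systemname{} strategy with the claimed sparsity.

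The only subtlety I anticipate is verifying the linearization step cleanly: one must confirm that freezing the threshold matrix $\mathbf{Q}^*$ and the add-on tensor $\mathbf{P}^{[2]*}$ really does decouple the downstream randomness (quality-score comparison, add-on draw, reward and cost) from $\mathbf{p}^{[1]}$, so that the reward and cost are genuinely weighted sums of per-base-service conditional expectations. This is essentially a bookkeeping exercise following the branching described in Figure~\ref{fig:FAME:Framework}, but it is where care is needed; after that, the basic-feasible-solution argument is standard.
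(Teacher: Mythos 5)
Your proposal is correct and takes essentially the same route as the paper: the ``bookkeeping'' step you flag is exactly Lemma~\ref{lemma:FAME:expectedacccostmainpaper}, which expands reward and cost by the law of total expectation so that, with $(\mathbf{Q}^*,\mathbf{P}^{[2]*})$ frozen, problem~\ref{prob:FAME:optimaldefinition} becomes a linear program in $\mathbf{p}^{[1]}$ over the simplex with one additional budget constraint. The only cosmetic difference is that where you invoke the fundamental theorem of linear programming and count active constraints at a vertex, the paper proves the same $2$-sparsity fact directly (Lemma~\ref{Lemma:FAME:LPSparseStructure}, via an explicit two-coordinate reallocation and a KKT argument), so no substantive gap remains.
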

To see this, let us first expand $\Exp[r^s(x)]$ and $\Exp[\eta^s(x)]$ by  the law of total expectation.
\begin{lemma}\label{lemma:FAME:expectedacccostmainpaper}
The expected accuracy is $\Exp[r^s(x)]=\sum_{i=1}^{K} \Pr[A^{[1]}_s=i] \Pr[D_s=0|A^{[1]}_s=i] \Exp[r^i(x)|D_s=0,A^{[1]}_s=i]        + \sum_{i,j=1}^{K} \Pr[A^{[1]}_s=i] \Pr[D_s=1|A^{[1]}_s=i]
      \Pr[A_s^{[2]}=j|D_s=1,A_s^{[1]}=i]\Exp[r^j(x)|D_s=1,A_s^{[1]}=i)]$.
The expected cost is
$\Exp[\eta^s(x)]=\sum_{i=1}^{K} \Pr[A_s^{[1]}=i] \Pr[D_s=0|A_s^{[1]}=i] \mathbf{c}_i    + \sum_{i,j=1}^{K} \Pr[A_s^{[1]}=i] \Pr[D_s=1|A_s^{[1]}=i]
      \Pr[A_s^{[2]}=j|D_s=1,A_s^{[1]}=i]\left(\mathbf{c}_i+\mathbf{c}_j\right)$.
      
\end{lemma}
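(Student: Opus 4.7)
The plan is to apply the tower property (law of total expectation) by conditioning successively on the three random events that together determine the execution of strategy $s$ on input $x$: the choice of base service $A^{[1]}_s$, the binary indicator $D_s$ of whether the add-on is invoked, and, when $D_s=1$, the choice of add-on service $A^{[2]}_s$. By definition of $S$, these are independent of $x$ given the previously revealed quantities (the $\mathbf{p}^{[1]}$, $\mathbf{Q}$, $\mathbf{P}^{[2]}$ components only use randomness external to $x$ together with $y_i(x)$ and $q_i(x)$), so the conditional expectations reduce to expectations of single-service rewards/costs.

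For the accuracy identity, I would begin with
\begin{equation*}
\Exp[r^s(x)] = \sum_{i=1}^{K} \Pr[A^{[1]}_s = i]\,\Exp[r^s(x)\mid A^{[1]}_s=i],
\end{equation*}
then split each inner expectation via $D_s\in\{0,1\}$. On the event $\{D_s=0,\,A^{[1]}_s=i\}$ the returned label is $\hat y^s(x)=y_i(x)$, so $r^s(x)=r^i(x)$; on $\{D_s=1,\,A^{[1]}_s=i\}$ a further conditioning on $A^{[2]}_s=j$ gives $\hat y^s(x)=y_j(x)$, hence $r^s(x)=r^j(x)$. Summing these two branches over $i$ (and over $j$ in the second branch) yields the claimed decomposition of $\Exp[r^s(x)]$.

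For the cost identity, I would run exactly the same conditioning tree, now tracking which services are billed: on $\{D_s=0,\,A^{[1]}_s=i\}$ only the base service is called, contributing cost $\mathbf{c}_i$; on $\{D_s=1,\,A^{[1]}_s=i,\,A^{[2]}_s=j\}$ both are called, contributing $\mathbf{c}_i+\mathbf{c}_j$. Taking expectations branch-by-branch and summing reproduces the stated formula for $\Exp[\eta^s(x)]$.

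There is no substantive obstacle here; the work is purely bookkeeping. The only point to be careful about is to verify that the definitional substitutions $r^s(x)=r^i(x)$ and $r^s(x)=r^j(x)$ on the respective events are exactly the content of the strategy's decision rule (Figure~\ref{fig:FAME:Framework}), and that the quantities $\Pr[A^{[1]}_s=i]$, $\Pr[D_s=\cdot \mid A^{[1]}_s=i]$, and $\Pr[A^{[2]}_s=j \mid D_s=1, A^{[1]}_s=i]$ are well-defined marginals/conditionals of the parameters $\mathbf{p}^{[1]},\mathbf{Q},\mathbf{P}^{[2]}$ (taking expectation over $y_i(x)$ for the $D_s$ probability, since the threshold $\mathbf{Q}_{i,y_i(x)}$ depends on the base label). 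Once these identifications are in place, the two formulas follow immediately.
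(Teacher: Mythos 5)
Your proposal is correct and follows essentially the same route as the paper's proof: iterated application of the law of total expectation, conditioning first on $A^{[1]}_s$, then on $D_s$, then (when $D_s=1$) on $A^{[2]}_s$, with the substitutions $r^s(x)=r^i(x)$ or $r^j(x)$ and costs $\mathbf{c}_i$ or $\mathbf{c}_i+\mathbf{c}_j$ on the respective branches. Nothing further is needed.
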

Note that both  $\Exp[r^s(x)]$ and $\Exp[\eta^s(x)]$ are linear in $\Pr[A_s^{[1]}=i]$, which by definition equals $\mathbf{p}^{[1]}_i$.
Thus, fixing $\mathbf{Q}$ and $\mathbf{P}^{[2]}$,  problem \ref{prob:FAME:optimaldefinition} becomes a  linear programming in $\mathbf{p}^{[1]}$.
Intuitively, the corner points of its feasible region must be 2-sparse, since except $\Exp[\eta^s(x)]\leq b$ and $\mathbf{1}^T\mathbf{p}^{[1]}\leq 1$ , all other constraints ($ \mathbf{p}^{[1]}\succcurlyeq \mathbf{0} $) force sparsity.
As the optimal solution of a linear programming should be a corner point, $\mathbf{p}^{[2]}$ must also be 2-sparse.

This sparsity structure helps reduce the computational complexity for solving problem \ref{prob:FAME:optimaldefinition}. 
In fact, the sparsity structure implies problem \ref{prob:FAME:optimaldefinition} becomes equivalent to a \textit{master problem} 
\begin{equation}\label{prob:FAME:master}
    \max_{(i_1, i_2, p_1, p_2, b_1, b_2) \in \mathit{C} }\textit{ }  p_1 g_{i_1}(b_1/p_1) + p_2 g_{i_2}(b_2/p_2)  \textit{ } s.t. b_1+b_2\leq b
\end{equation}
where $\mathit{c}=\{(i_1,i_2,p_1,p_2,b_1,b_2)|i_1,i_2\in[K],p_1,p_2\geq0, p_1+p_2=1, b_1,b_2\geq 0\}$, and $g_i(b')$ is the optimal value of the \textit{subproblem}
\begin{equation}\label{prob:FAME:subproblem1}
    \begin{split}
       & \max_{\mathbf{Q},\mathbf{P}^{[2]}:s=(\mathbf{e}_i,\mathbf{Q},\mathbf{P}^{[2]})\in S}  \Exp[r^s(x) \textit{ } s.t. \textit{ }   \Exp[\eta^s(x)] \leq b'\\ 
    \end{split}
\end{equation}
Here, the master problem decides which two services ($i_1,i_2$) can be the base service, how often ($p_1, p_2$) they should be invoked, and how large budgets ($b_1, b_2$) are assigned, 
while for a fixed base service $i$ and budget $b'$,  the subproblem  maximizes the expected reward.

\subsection{A Practical Algorithm}\label{sec:FAME:thm:algorithm}
Now we are ready to give the sparsity-inspired algorithm for generating an approximately optimal strategy $\hat{s}$, summarized in Algorithm \ref{Alg:FAME:TrainingAlgorithm}. 
\begin{algorithm}
\caption{\systemname{} Strategy Training.}
	\label{Alg:FAME:TrainingAlgorithm}
	\SetKwInOut{Input}{Input}
	\SetKwInOut{Output}{Output}
	\Input{$K, M, \mathbf{c}, b$,  $\{y(x_i), \{q_k(x_i), y_k(x_i)\}_{k=1}^{K}\}_{i=1}^N$}
	\Output{\systemname{} strategy tuple  $\hat{s} = \left(\hat{\mathbf{p}}^{[1]}, \hat{\mathbf{Q}}, \hat{\mathbf{P}}^{[2]}  \right)$}  
  \begin{algorithmic}[1]
  	\STATE Estimate  $\Exp[r_i(x)|D_s,A_s^{[1]}]$ from the training data $\{y(x_i), \{q_k(x_i), y_k(x_i)\}_{k=1}^{K}\}_{i=1}^N$
 
\STATE 	For $i\in[K]$, $b'_m\in[0,\frac{\|\mathbf{2c}\|_\infty}{M},\cdots,  \|2\mathbf{c}\|_\infty]$, solve  problem \ref{prob:FAME:subproblem1} to find optimal value $g_i(b'_m)$ 

\STATE	For $i\in[K]$, construct function $g_i(\cdot)$ by linear interpolation on $b'_0, b'_1,\cdots, b'_M$.
	
\STATE	Solve problem \ref{prob:FAME:master} to find optimal solution $i_1^*, i_2^*, p_1^*, p_2^*, b_1^*, b_2^*$
 
\STATE    For $t\in [2]$, let $i=i_t^*, b' = b_t^*/p_t^*$, solve problem \ref{prob:FAME:subproblem1} to find the optimal solution $\mathbf{Q}^{}_{[i_t^*]}, \mathbf{P}^{[2]}_{[i_t^*]}$
    
\STATE  $\hat{\mathbf{p}}^{[1]} = p_1^* \mathbf{e}_{i_1^*}   + p_2^* \mathbf{e}_{i_2^*}, \hat{\mathbf{Q}} = \mathbf{Q}^{}_{[i_1^*]}+\mathbf{Q}^{}_{[i_2^*]}, \hat{\mathbf{P}}^{[2]} = \mathbf{P}^{[2]}_{[i_1^*]}+\mathbf{P}^{[2]}_{[i_2^*]}$
 
  \STATE  Return  $\hat{s} = \left(\hat{\mathbf{p}}^{[1]}, \hat{\mathbf{Q}}, \hat{\mathbf{P}}^{[2]}  \right)$
  \end{algorithmic}
\end{algorithm}

Algorithm \ref{Alg:FAME:TrainingAlgorithm} consists of three main steps. First,  the conditional accuracy $\Exp[r_i(x)|D_s,A_s^{[i]}]$ is estimated from the training data (line 1).
Next (line 2 to line 4), we find the optimal solution $i_1^*, i_2^*, p_1^*, p_2^*, b_1^*, b_2^*$ to problem \ref{prob:FAME:master}.
To do so, we first evaluate $g_i(b')$ for $M+1$ different budget values  (line 2), and then construct the functions   $g_i(\cdot)$ via linear interpolation (line 3) while enforce $g_i(b')=0, \forall b' \leq \mathbf{c}_i$.
Given (piece-wise linear) $g_i(\cdot)$, problem \ref{prob:FAME:master} can be solved by enumerating a few linear programming  (line 4).
Finally, the algorithm seeks to find the optimal solution in the original domain of the strategy, by solving subproblem \ref{prob:FAME:subproblem1} for base service being $i_1^*$ and $i_2^*$ separately (line 5), and then align those solutions appropriately (line 6).  
We leave the details of solving subproblem \ref{prob:FAME:subproblem1} to the supplement material due to space constraint. 
Theorem \ref{thm:FAME:mainbound} provides the performance analysis of Algorithm \ref{Alg:FAME:TrainingAlgorithm}.
\begin{theorem}\label{thm:FAME:mainbound}
Suppose $\Exp[r_i(x)|D_s,A_s^{[1]}]$ is Lipschitz continuous with constant $\gamma$ w.r.t. each element in $\mathbf{Q}$.
Given $N$ i.i.d. samples  $\{y(x_i), \{(y_k(x_i), q_k(x_i))\}_{k=1}^{K}\}_{i=1}^{N}$, the computational cost of Algorithm \ref{Alg:FAME:TrainingAlgorithm} is $O\left(NMK^2+K^3M^3L+M^LK^2\right)$. 
With probability $1-\epsilon$, the produced strategy $\hat{s}$ satisfies  
 $\Exp[r^{\hat{s}}(x)]-\Exp[r^{s^*}(x)]\geq  - O\left(\sqrt{\frac{\log \epsilon + \log M +\log K +\log L }{N}} + \frac{\gamma }{M}\right)$, and 
$\Exp[\eta^{[\hat{s}]}(x,\mathbf{c})]  \leq b$.
\end{theorem}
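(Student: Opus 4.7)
The plan is to prove Theorem~\ref{thm:FAME:mainbound}'s three claims --- runtime, reward suboptimality, and budget feasibility --- separately, exploiting the decomposition of Algorithm~\ref{Alg:FAME:TrainingAlgorithm} into empirical estimation, grid-based subproblem solving, piecewise-linear interpolation, and master-problem enumeration.

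For the runtime I walk through the algorithm line by line. Line~1 computes, in a single sweep over the $N$ samples, the empirical conditional accuracies for each tuple (base service, add-on service, predicted label, threshold bin); this uses $O(MK^2)$ counters and accounts for the $O(NMK^2)$ term. Line~2 invokes the subproblem solver $K(M+1)$ times; as made precise in the appendix, each instance splits across the $L$ thresholds and reduces to enumerating $O(M^L)$ grid choices of $\mathbf{Q}$ (contributing $O(M^L K^2)$) plus an interior linear program over $\mathbf{P}^{[2]}$ whose aggregate cost is $O(K^3 M^3 L)$. Lines~3--6 are dominated by these: interpolation is $O(KM)$, the master problem in line~4 is solved by enumerating the $K^2$ pairs $(i_1,i_2)$ and, for each, optimizing a split of $b$ between two piecewise-linear functions with $M$ pieces, and the final assembly in lines~5--6 is $O(KL)$.

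For the reward bound I split the suboptimality $\Exp[r^{s^*}(x)]-\Exp[r^{\hat{s}}(x)]$ into statistical and discretization components. By Lemma~\ref{lemma:FAME:expectedacccostmainpaper}, the expected reward of any strategy is a linear combination of conditional accuracies $\Exp[r_j(x)\mid D_s, A_s^{[1]}=i]$ whose coefficients are determined by $(\mathbf{p}^{[1]},\mathbf{Q},\mathbf{P}^{[2]})$. Applying Hoeffding's inequality to each of the $O(K^2LM)$ empirical conditional accuracies indexed by a threshold on the $M+1$ grid, and union-bounding, yields uniform deviation $O(\sqrt{\log(KLM/\epsilon)/N})$ with probability $1-\epsilon$. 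The Lipschitz assumption in $\mathbf{Q}$ with constant $\gamma$ then guarantees that rounding $s^*$'s thresholds to the nearest grid point $\tilde{s}^*$ loses at most $O(\gamma/M)$ in expected reward. Chaining these: $\Exp[r^{\hat{s}}] \ge \hat{\Exp}[r^{\hat{s}}] - \text{err}_N \ge \hat{\Exp}[r^{\tilde{s}^*}] - \text{err}_N \ge \Exp[r^{\tilde{s}^*}] - 2\,\text{err}_N \ge \Exp[r^{s^*}] - 2\,\text{err}_N - O(\gamma/M)$, where the middle inequality uses that $\hat{s}$ is the empirical maximizer over the grid, producing the stated bound.

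Budget feasibility follows from the observation in Lemma~\ref{lemma:FAME:expectedacccostmainpaper} that the expected cost is a linear combination of the same empirical conditional probabilities used for reward, so imposing the budget in each subproblem (line~2), in the master problem (line~4), and in the reconstruction (line~5) is equivalent to imposing $\Exp[\eta^{[\hat{s}]}(x,\mathbf{c})]\le b$; a uniform Hoeffding bound on the cost coefficients lets any residual statistical gap be absorbed by tightening the empirical budget by $O(\sqrt{\log/N})$. The hardest step, I expect, is the reward chain above: one has to show simultaneously that the rounded $\tilde{s}^*$ remains budget-feasible (so that it is a valid competitor for the empirical maximizer $\hat{s}$) and that the Lipschitz rounding error $O(\gamma/M)$ does not cascade through the two-level sparsity structure (master problem in $(i_1,i_2,p_1,p_2,b_1,b_2)$ composed with subproblems in $(\mathbf{Q},\mathbf{P}^{[2]})$) --- this requires checking that the piecewise-linear interpolant $g_i(\cdot)$ inherits a matching Lipschitz constant from its grid samples and that the optimal split $(b_1^*,b_2^*)$ is stable under this interpolation.
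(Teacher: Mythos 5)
Your high-level architecture matches the paper's: separate the error into a statistical term (concentration plus a union bound over the $O(K^2LM)$ estimated conditional quantities) and a discretization term controlled by the Lipschitz assumption, then handle the budget by tightening the empirical constraint. The runtime accounting is also essentially the paper's. However, there is a genuine gap at the step you yourself flag as hardest. Your chain hinges on the inequality $\hat{\Exp}[r^{\hat{s}}]\geq \hat{\Exp}[r^{\tilde{s}^*}]$, justified by calling $\hat{s}$ ``the empirical maximizer over the grid.'' Algorithm~\ref{Alg:FAME:TrainingAlgorithm} is not a grid search over strategies: it outputs the result of a master/subproblem decomposition with linear interpolation of the value functions $g_i(\cdot)$, and showing that this output is even \emph{approximately} empirically optimal (within $O(\gamma/M)$ of the empirical optimum $s'$) is the bulk of the paper's proof. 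That argument needs the sparsity lemmas (the optimal $\mathbf{p}^{[1]}$ is $2$-sparse, and the inner problem~\ref{prob:FAME:fixbaselabel} has a $2$-sparse optimal $\pmb\Pi$ so that Algorithm~\ref{Alg:FAME:Algorithm_fixbaseandlabel} is \emph{exact}), Lipschitz continuity of the parametric value functions $\hat{h}_{k,\ell}(\beta)$ and $g_i'(b')$ in the budget (proved via Lipschitz continuity of parametric LP optima, Lemma~\ref{Lemma:FAME:LPContinousProperty}), and the interpolation error bound of Lemma~\ref{lemma:FAME:interpolationerrorbound}. Note also that the algorithm's discretization is over budget allocations ($\beta_m$, the split $\sum_\ell t_\ell=M$, and the grid for $g_i$), not a direct grid over $\mathbf{Q}$, so ``rounding $s^*$'s thresholds to the nearest grid point'' does not by itself describe a competitor that the algorithm considers; converting the Lipschitz-in-$\mathbf{Q}$ assumption into an $O(\gamma/M)$ loss requires exactly the budget-Lipschitz machinery above.

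The second gap is the strict budget claim $\Exp[\eta^{[\hat{s}]}(x,\mathbf{c})]\leq b$. Enforcing the budget inside the subroutines only constrains the \emph{empirical} cost, so it is not ``equivalent'' to the population constraint; you correctly propose tightening the empirical budget by the statistical slack $b_e=O(\sqrt{(\log\epsilon+\log K+\log L)/N})$, but you do not account for the consequences. After tightening, your rounded competitor $\tilde{s}^*$ (which may saturate the true budget $b$) is no longer feasible for the empirical problem, and the achievable reward shrinks; one must show the reward loss from reducing the budget by $b_e$ is only $O(b_e)$, which is the paper's Lemma~\ref{lemma:FAME:deltabudget} (again a parametric-LP Lipschitz argument) and is what lets the penalty be absorbed into the $O(\sqrt{\cdot/N})$ term. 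Without that perturbation lemma, the feasibility of your competitor and the final accuracy bound both break.
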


As Theorem \ref{thm:FAME:mainbound} suggests, the parameter $M$ is used to balance between computational cost and accuracy drop of $\hat{s}$.

For practical cases where $K$ and $L$ (the number of classes) are around ten and $N$ is more than a few thousands, we have found  
$M=10$ is a good value for good accuracy and small computational cost. 
Note that the coefficient of the $K^L$ terms is small: in experiments, we observe it takes only a few seconds for $L=31, M=40$.  
For datasets with very large number of possible labels, we can always cluster those labels into a few ''supclasses'', or adopt approximation algorithms to reduce $O(M^L)$ to $O(M^2)$ (see details in the supplemental materials). 
In addition, slight modification of $\hat{s}$ can satisfy  \textit{strict budget constraint}: if budgets allows, use $\hat{s}$ to pick APIs; otherwise, switch to the cheapest API.

\section{Experiments}\label{Sec:FAME:Experiment}
We compare the accuracy and incurred costs of \systemname{} to that of real world ML services for various tasks.
Our goal is four-fold: (i) understanding when and why \systemname{} can reduce cost without hurting accuracy, (ii) evaluating the cost savings by \systemname{}, (iii) investigating the trade-offs between accuracy and cost achieved by \systemname{}, and (iv) measuring the effect of training data size on \systemname{}'s performance.

\paragraph{Tasks,  ML services, and Datasets.} 
We focus on three common ML tasks in different application domains: facial emotion recognition (\textit{FER}) in computer vision, sentiment analysis  (\textit{SA}) in natural langauge processing), and speech to text (\textit{STT}) in speech recognition. 
The ML services used for each task as well as their prices are summarized in Table \ref{tab:MLservice}.
For each task we also found a small open source model from GitHub, which is much less expensive to execute per data point than the commercial APIs.
{{
\begin{table}[t]
	\centering
	\small
	\caption{\small ML services used for each task. Price unit: USD/10,000 queries. A publicly available (and thus free) GitHub model is also used per task: a  convolutional neural network (CNN) \cite{FER_github} pretrained on FER2013 \cite{Dataset_FER2013} for \textit{FER} , a rule based tool (Bixin \cite{SA_Chinese_github} for Chinese and Vader \cite{SA_English_github, VanderICWSM2014} for English ) for  \textit{SA}, and a recurrent neural network (DeepSpeech)
	\cite{STT_Deepspeech_github,DeepSpeech_ICML16} pretrained  on Librispeech \cite{panayotov2015librispeech} for \textit{STT}. } 
	\begin{tabular}{|c||c|c|c|c|c|c|}
		\hline
		Tasks  & ML service & Price & ML service & Price & ML service & Price \bigstrut \\
		\hline
		\hline
		\textit{FER} & Google Vision \cite{GoogleAPI} & 15    & MS  Face \cite{MicrosoftAPI}& 10    & Face++ \cite{FacePPAPI}& 5 \bigstrut\\
		\hline
		\textit{SA} & Google NLP \cite{GoogleNLPAPI}& 2.5     & AMZN Comp \cite{AmazonAPI} & 0.75& Baidu NLP \cite{BaiduAPI} & 3.5  \bigstrut\\
		\hline
		\textit{STT} & Google Speech \cite{GoogleSpeechAPI}& 60    & MS Speech  \cite{MicrosoftSpeechAPI}& 41    & IBM Speech \cite{IBMAPI}& 25 \bigstrut\\
		\hline
	\end{tabular}%
	\label{tab:MLservice}%
\end{table}%
}
}
Table \ref{tab:DatasetStats} lists the statistics for all the datasets used for different tasks.
More details  can be found in the appendix.
\begin{table}[t]
  \centering
  \small
  \caption{\small Datasets sample size and number of classes.}
    \begin{tabular}{|c||c|c|c||c|c||c|}
    \hline
    Dataset & Size & \# Classes & Dataset & Size & \# Classes & Tasks \bigstrut\\
    \hline
     \hline
    FER+ \cite{dataset_FERP_BarsoumZCZ16}  & 6358  & 7     & RAFDB \cite{Dataset_FAFDB_li2017reliable} & 15339 & 7     & \multirow{2}[4]{*}{\textit{FER}} \bigstrut\\
\cline{1-6}    EXPW \cite{Dataset_EXPW_SOCIALRELATION_ICCV2015}  & 31510 & 7     & AFFECTNET \cite{Dataset_AFFECTNET_MollahosseiniHM19}& 287401 & 7     & \bigstrut \\
    \hline
    YELP \cite{Dataset_SEntiment_YELP} & 20000  & 2     & SHOP \cite{Dataset_SENTIMENT_SHOP}& 62774 & 2     & \multirow{2}[4]{*}{\textit{SA}} \bigstrut \\
\cline{1-6}    IMDB \cite{Dataset_SEntiment_IMDB_ACL_HLT2011} & 25000 & 2     & WAIMAI \cite{Dataset_SENTIMENT_WAIMAI}& 11987 & 2     &  \bigstrut\\
    \hline
    DIGIT \cite{Dataset_Speech_DIGIT} & 2000  & 10    & AUDIOMNIST \cite{Dataset_Speech_AudioMNIST_becker2018interpreting} & 30000 & 10      & \multirow{2}[4]{*}{\textit{STT}} \bigstrut \\
\cline{1-6}  FLUENT \cite{Dataset_Speech_Fluent_LugoschRITB19} & 30043  & 31   &   COMMAND \cite{Dataset_Speech_GoogleCommand} & 64727 & 31   & \bigstrut\\
    \hline
    \end{tabular}%
  \label{tab:DatasetStats}%
\end{table}%
\begin{figure}[htbp]
	\centering
	\includegraphics[width=0.78\linewidth]{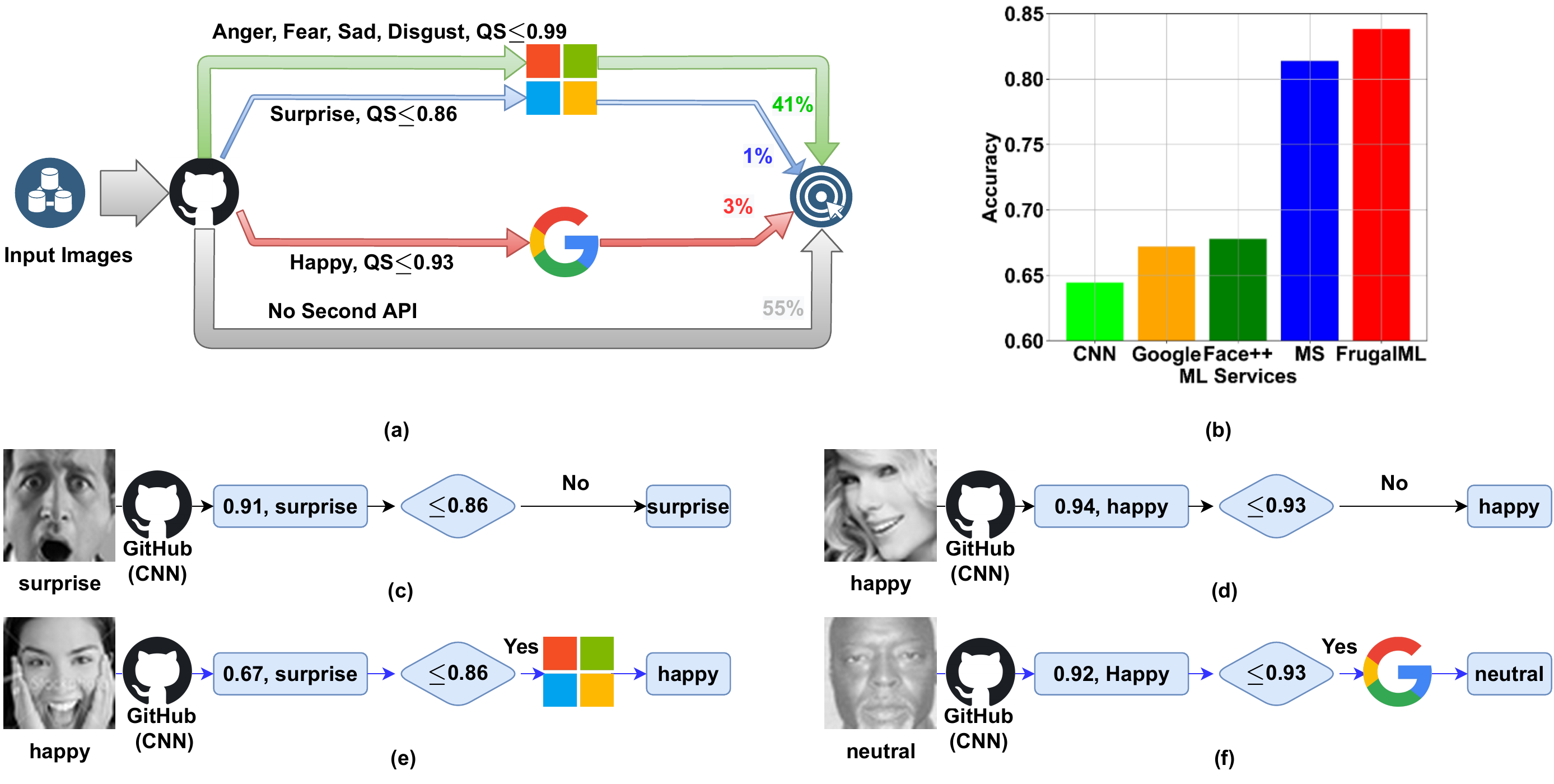}
	\caption{A \systemname{} strategy learned  on the dataset FER+. (a): data flow.
	(b):  accuracy of all ML services and \systemname{} which matches the cost of the cheapest API (FACE++). (c-f): \systemname{} prediction process on a few testing data.  
	As shown in (a), on most data (55\%),  calling the cheap open source CNN from GitHub is  sufficient.
	Thus, \systemname{} incurs  <50\% cost than the most accurate API (Microsoft).
    Note that unique quality score thresholds for different labels predicted by the base service are learned: e.g.,, given label, ''surprise``, 0.86 is used to determine whether (e) or not (c) to call Microsoft, while for label ``happy'', the learned threshold is 0.93 ((d) and (f)).  
    Such unique thresholds are critical for both accuracy improvement and cost reduction: universally using 0.86 leads to misclassification on (f), while globally adopting 0.93 creates extra cost by called unnecessary add-on service on (c).
	}
	\label{fig:FAME:casestudyflow}
	\end{figure}
	
	\paragraph{Facial Emotion Recognition: A Case Study.}
Let us start with facial emotion recognition on the FER+ dataset.
We set budget $b=5$, the price of FACE++, the cheapest API (except the open source CNN model from GitHub) and obtain a \systemname{} strategy by training on half of FER+. 
Figure \ref{fig:FAME:casestudyflow} demonstrates the learned  \systemname{} strategy.
Interestingly, as shown in Figure \ref{fig:FAME:casestudyflow}(b), \systemname{}'s accuracy is higher than that of the best ML service (Microsoft Face), while its cost is much lower. 
This is because base service's quality score, utilized by \systemname{}, is a better signal than raw image to identify if its prediction is trustworthy. 
Furthermore, the quality score threshold, produced by \systemname{} also depends on label predicted by the base service.
This flexibility helps to increase accuracy as well as to reduce costs.
For example, using a universal threshold $0.86$ leads to misclassfication on Figure \ref{fig:FAME:casestudyflow}(f),
while $0.93$ causes unnecessary add-on service call on Figure \ref{fig:FAME:casestudyflow} (c).

For comparison, we also train a mixture of experts strategy with a softmax gating network and the majority voting ensemble method. 
The learned mixture of experts always uses Microsoft API, leading to the same accuracy (81\%) and same cost (\$10).
The accuracy of majority voting on the test data is slightly better at 82\%, but substantially worse than the performance of \systemname{} using a small budget of $\$5$.
 Majority vote, and other standard ensemble methods, needs to collect the prediction of all services, resulting in a cost (\$30) which is 6 times the cost of \systemname{}. 
Moreover, both mixture of experts and ensemble method require fixed cost, while \systemname{} gives the users flexibility to choose a budget.

\begin{table}[htbp]
    \small
  \centering
  \caption{\small Cost savings achieved by \systemname{} that reaches same accuracy as the best commercial API.}
    \begin{tabular}{|c||c|c|c|c|c||c|c|c|c|}
        \hline
    Dataset & Acc & Price  & Cost& \multicolumn{1}{c|}{Save} & Dataset & Acc & Price  & Cost & \multicolumn{1}{c|}{Save} \bigstrut \\
        \hline
        \hline
    FER+  & 81.4  & 10    & 3.3  & \textbf{67\%}  & RAFDB & 71.7  & 10    & 4.3  & \textbf{57\%} \bigstrut\\
        \hline
	    EXPW  & 72.7  & 10    & 5.0  & \textbf{50\%}  & AFFECTNET & 72.2  & 10    & 4.7  & \textbf{53\%} \bigstrut\\
        \hline
    YELP  & 95.7  & 2.5   &     1.9  & \textbf{24\%}      & SHOP  & 92.1  & 3.5   &    1.9   &  \textbf{46\%}\bigstrut\\
    \hline
    IMDB  & 86.4  & 2.5   &      1.9 &    \textbf{24\%}   & WAIMAI & 88.9  & 3.5   & 1.4    & \textbf{60\%} \bigstrut\\
    \hline
    DIGIT & 82.6  & 41    & 23    & \textbf{44\%}  & COMMAND & 94.6  & 41    & 15    & \textbf{63\%} \bigstrut\\
    \hline
    FLUENT & 97.5  & 41    &   26    &    \textbf{37\%}   & AUDIOMNIST & 98.6  & 41    & 3.9    & \textbf{90\%} \bigstrut\\
    \hline
    \end{tabular}%
  \label{tab:FAME:costsaving}%
\end{table}%
\paragraph{Analysis of Cost Savings.} Next, we evaluate how much cost can be saved by \systemname{} to reach the highest accuracy produced by a single API  on different tasks, to obtain some qualitative sense of \systemname{}.
As shown in Table \ref{tab:FAME:costsaving},  \systemname{} can typically save more than half of the cost.
In fact, the cost savings can be as high as 90\% on the AUDIOMNIST dataset.
This is likely because the base service's quality score is highly correlated to its prediction accuracy, and thus \systemname{} only needs to call expensive services for a few difficult data points.
A relatively small saving is reached for \textit{SA} tasks (e.g., on IMDB). 
This might be that the quality score of the rule based \textit{SA} tool is not highly reliable.
Another possible reason is that \textit{SA} task has only two labels (positive and negative), limiting the power of \systemname{}.

\paragraph{Accuracy and Cost Trade-offs.}
Now we dive deeply into the accuracy and cost trade-offs achieved by \systemname{}, shown in Figure \ref{fig:FAME:AccCostTradeoff}.
Here we also compare with two oblations to \systemname{}, ``Base=GH'', where the base service is forced to be the GitHub model, and ``QS only'', which further forces a universal quality score threshold across all labels.

While using any single ML service incurs a fixed cost, \systemname{} allows users to pick any point in its trade-off curve, offering substantial flexibility.
In addition to cost saving, \systemname{} sometimes can achieve higher accuracy than any ML services it calls.
For example, on FER+ and AFFECTNET, more than 2\% accuracy improvement can be reached with small cost, and on RAFDB, when a large cost is allowed, more than 5\% accuracy improvement is gained.
It is also worthy noting that each component in \systemname{} helps improve the accuracy.
On WAIMAI, for instance, ``Base=GH'' and ''QS only'' lead to significant accuracy drops.
For speech datasets such as COMMAND, the drop is negligible, as there is no significant accuracy difference between different labels (utterance).  
Another interesting observation is that there is no universally ``best'' service for a fixed task.
For \textit{SA} task, Baidu NLP achieves the highest accuracy for WAIMAI and SHOP datasets, but Google NLP has best performance on YELP and IMDB.
Fortunately,  \systemname{} adaptively learns  the optimal strategy. 

\begin{figure}
\centering     
\begin{subfigure}[FER+]{\label{fig:a}\includegraphics[width=0.242\linewidth]{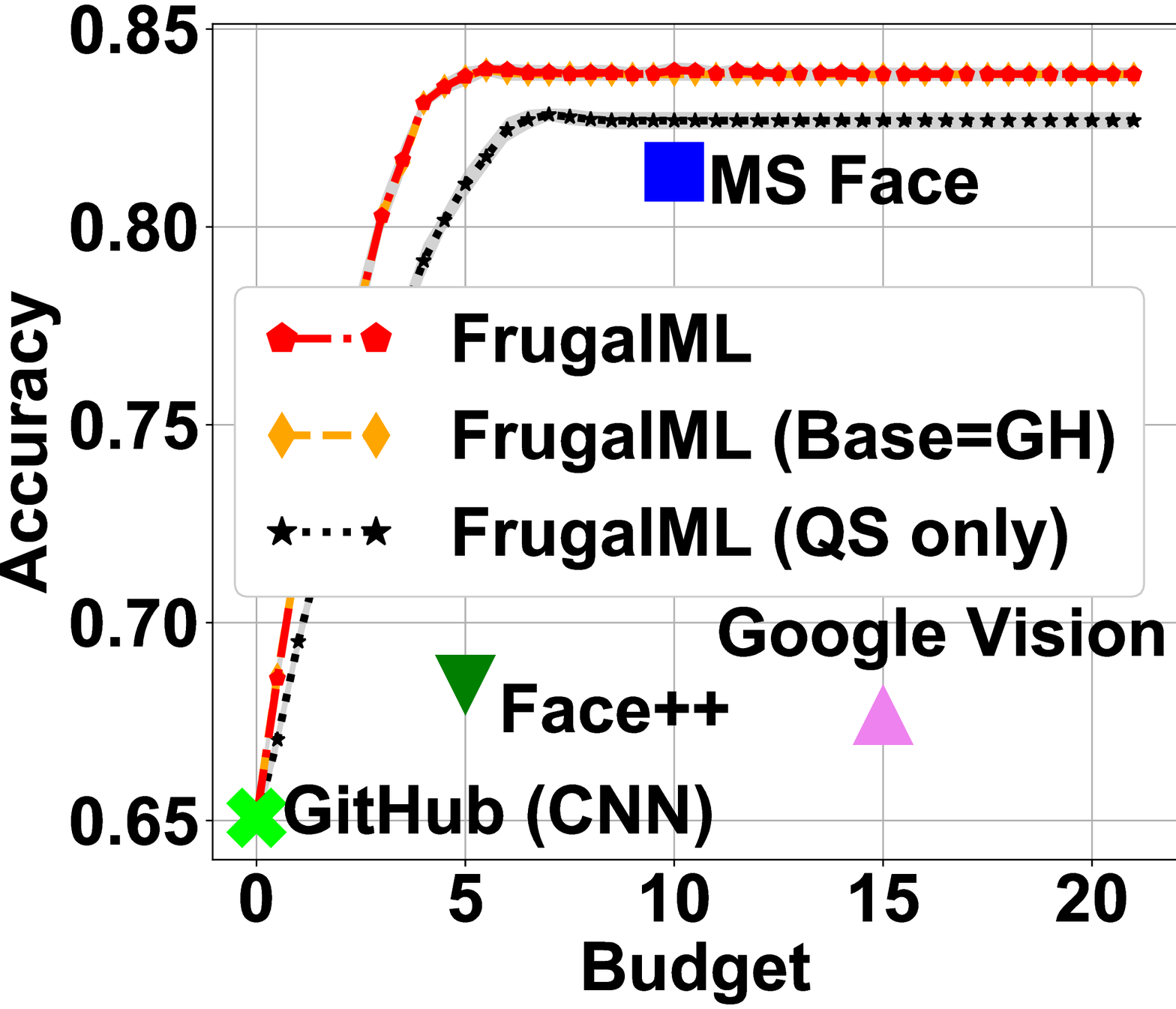}}
\end{subfigure}
\begin{subfigure}[EXPW]{\label{fig:b}\includegraphics[width=0.242\linewidth]{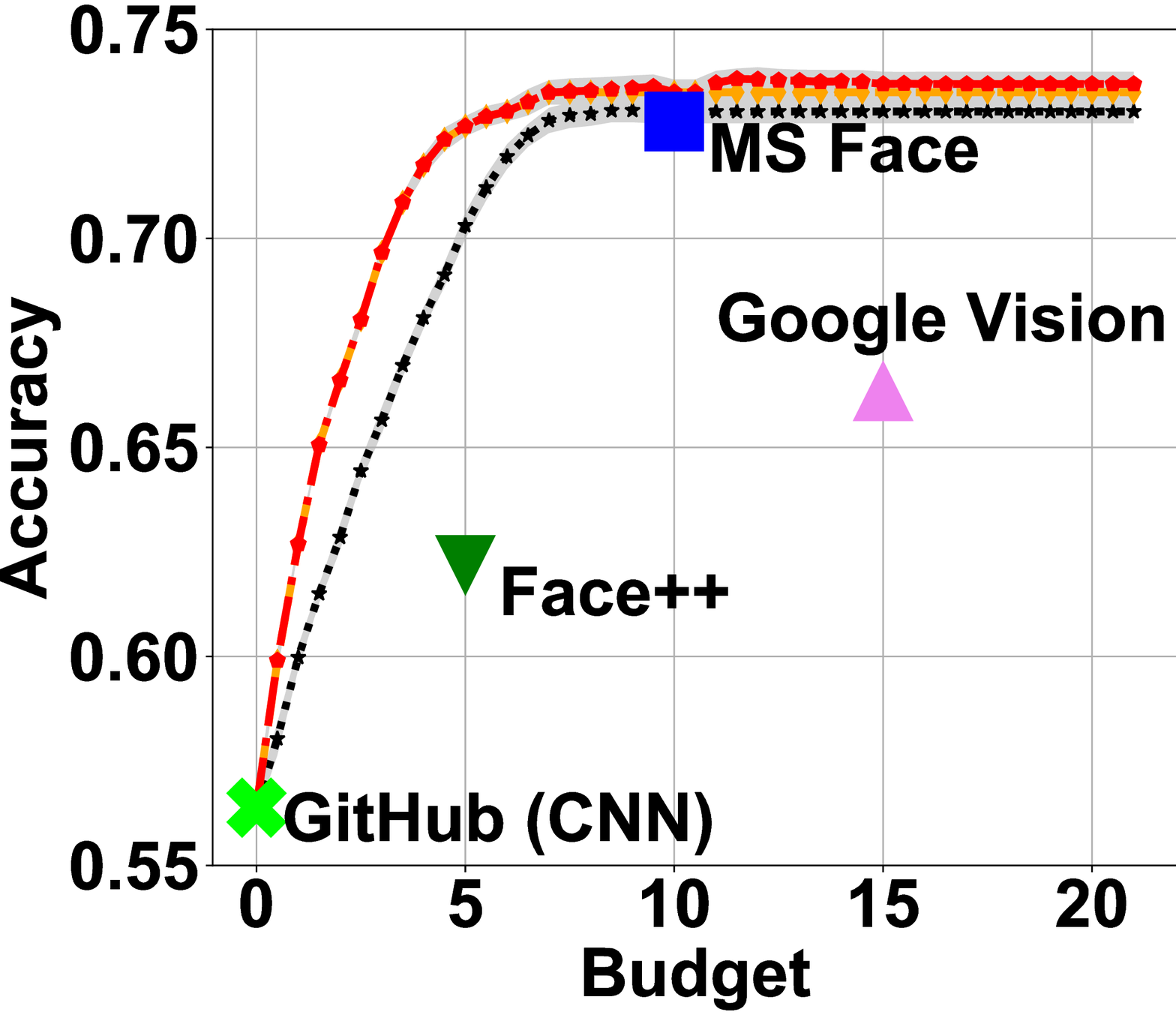}}
\end{subfigure}
\begin{subfigure}[RAFDB]{\label{fig:c}\includegraphics[width=0.242\linewidth]{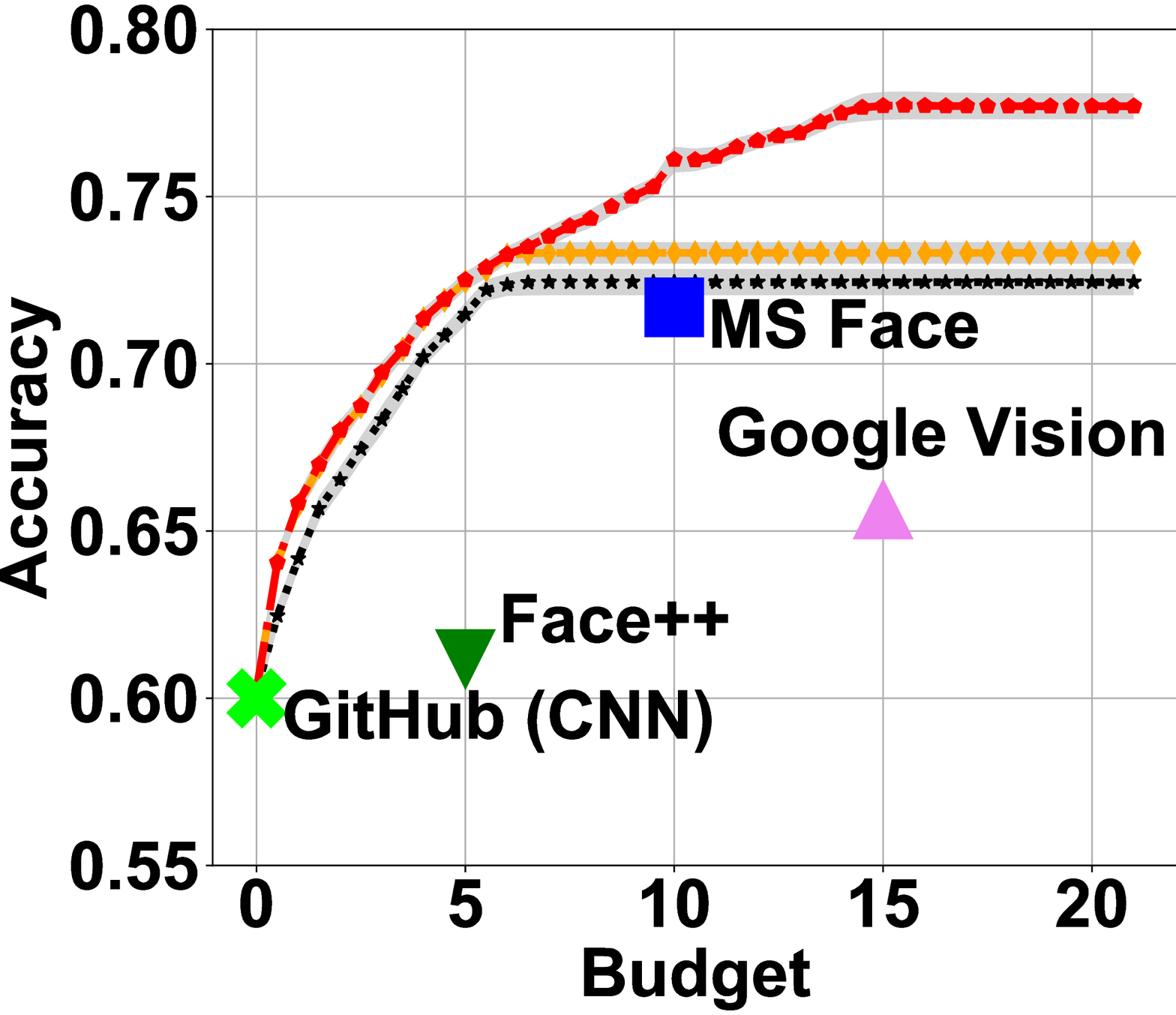}}
\end{subfigure}
\begin{subfigure}[AFFECTNET]{\label{fig:d}\includegraphics[width=0.242\linewidth]{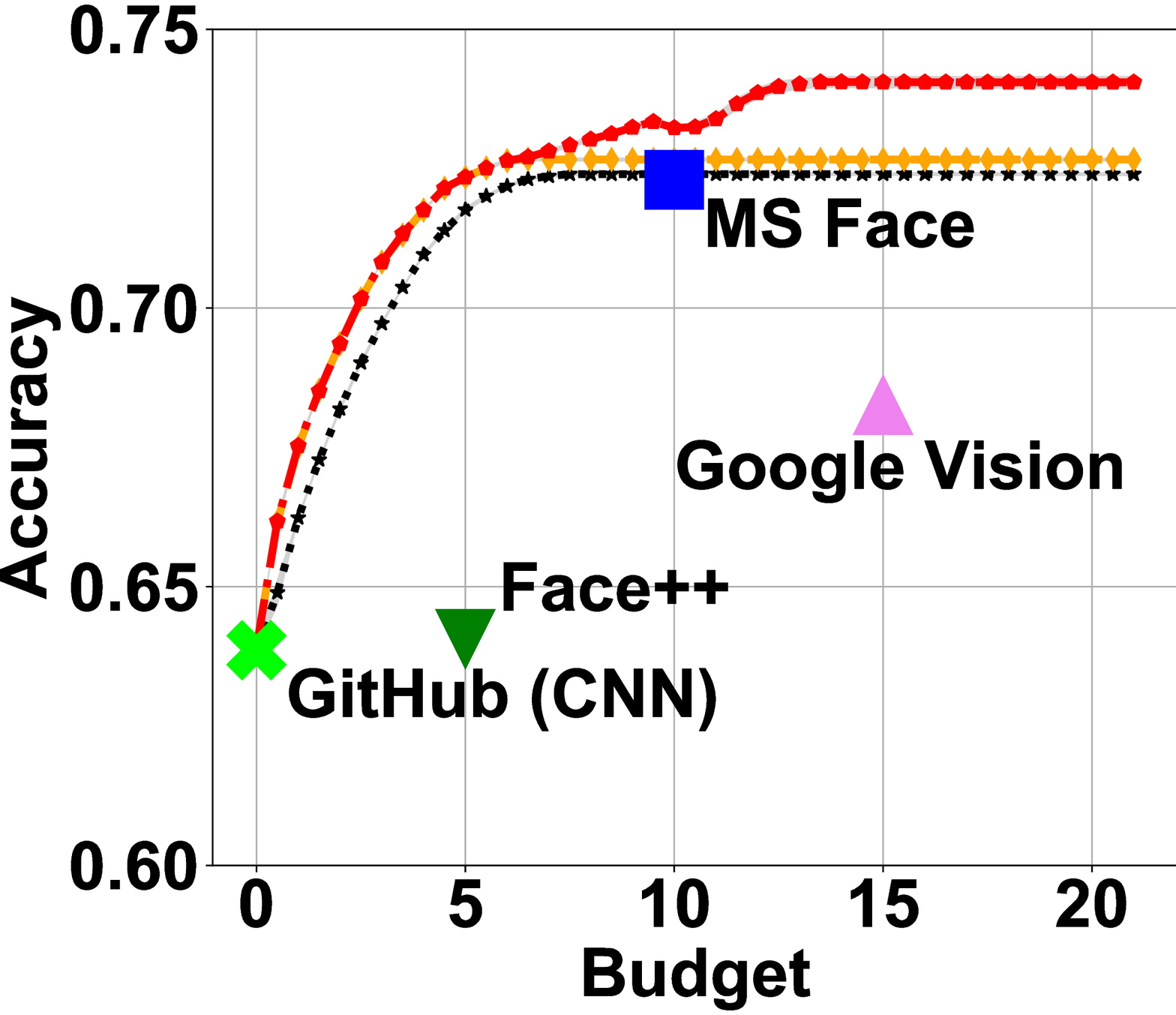}}
\end{subfigure}

\begin{subfigure}[YELP]{\label{fig:e}\includegraphics[width=0.242\linewidth]{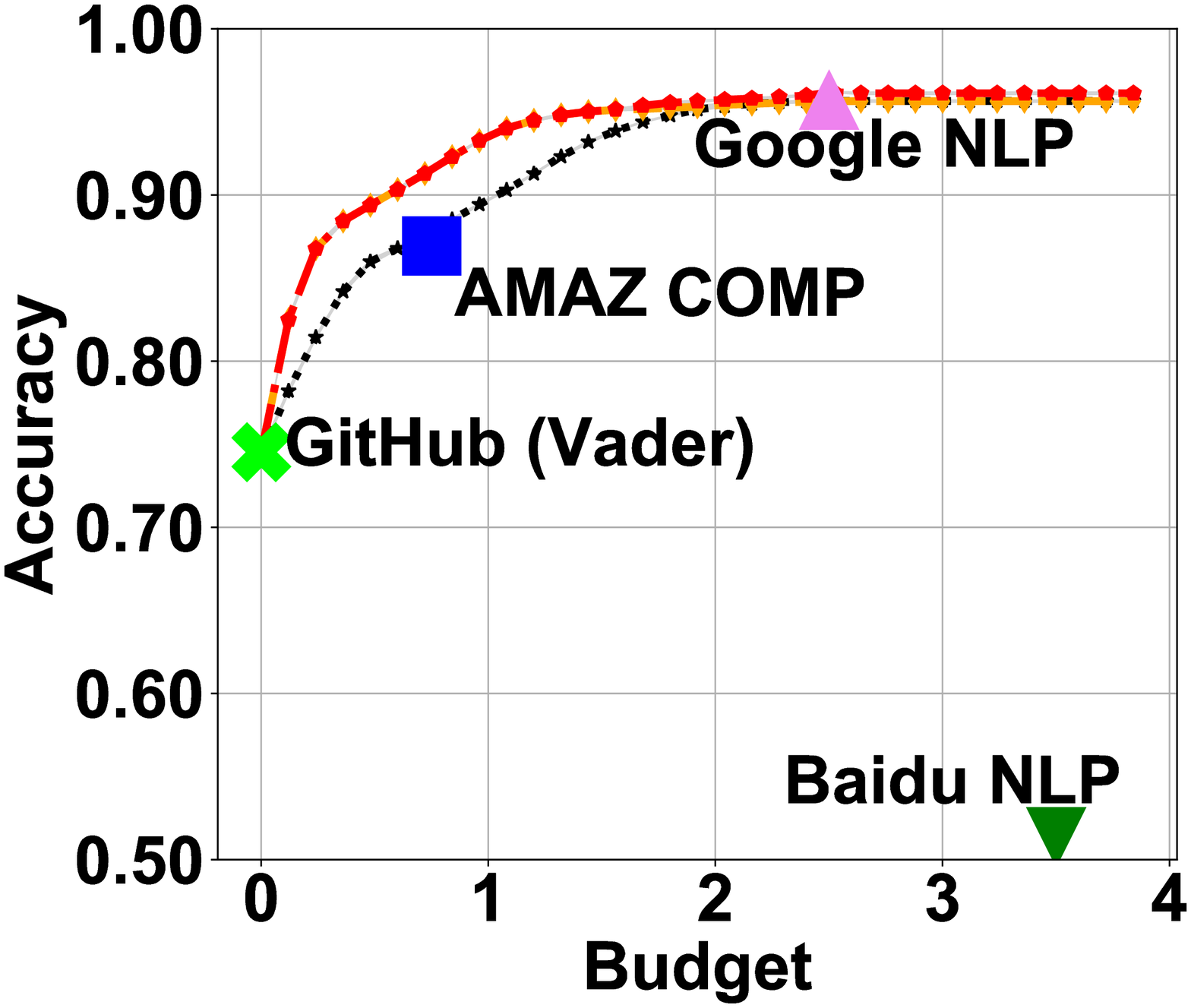}}
\end{subfigure}
\begin{subfigure}[IMDB]{\label{fig:f}\includegraphics[width=0.242\linewidth]{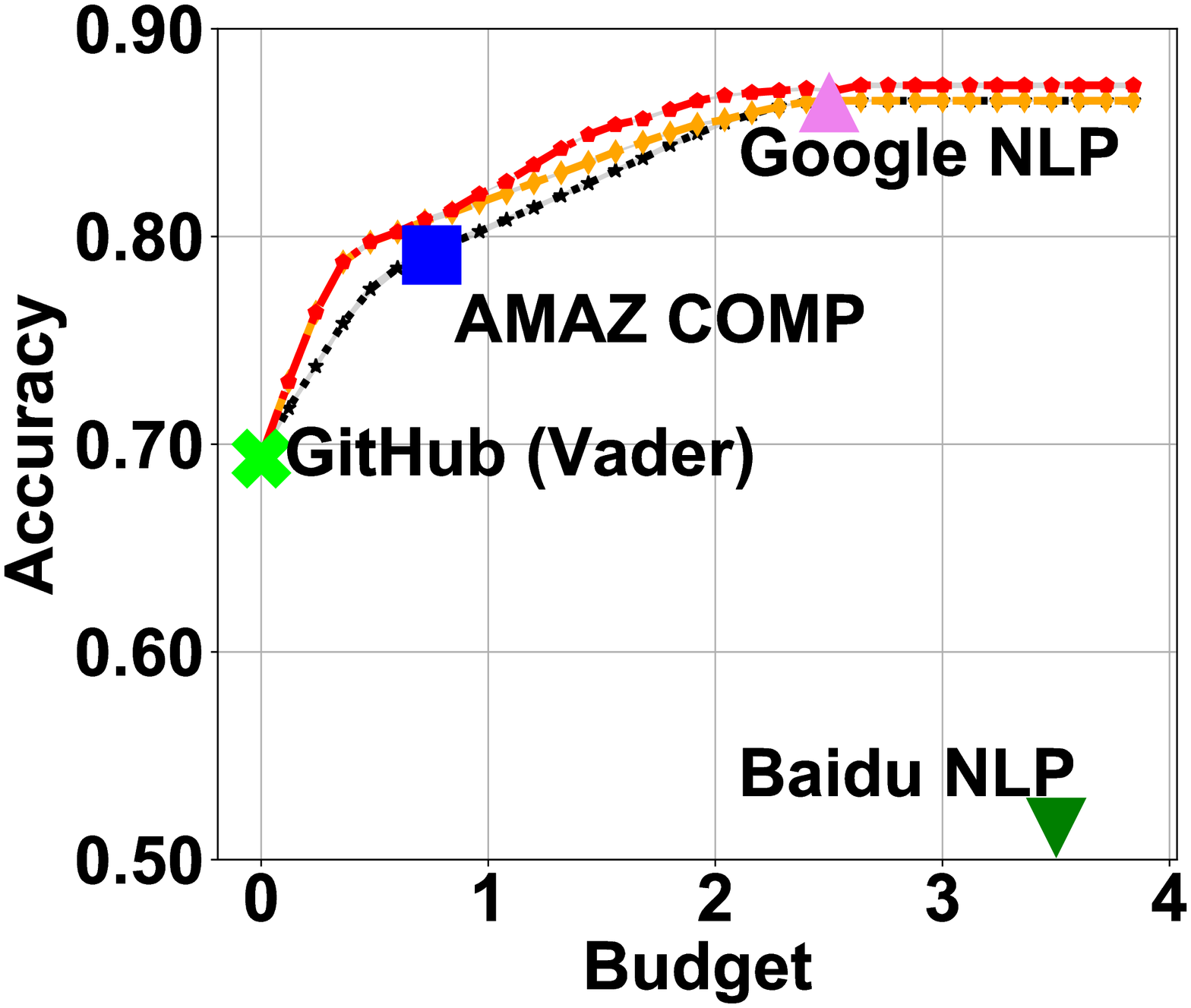}}
\end{subfigure}
\begin{subfigure}[WAIMAI]{\label{fig:g}\includegraphics[width=0.242\linewidth]{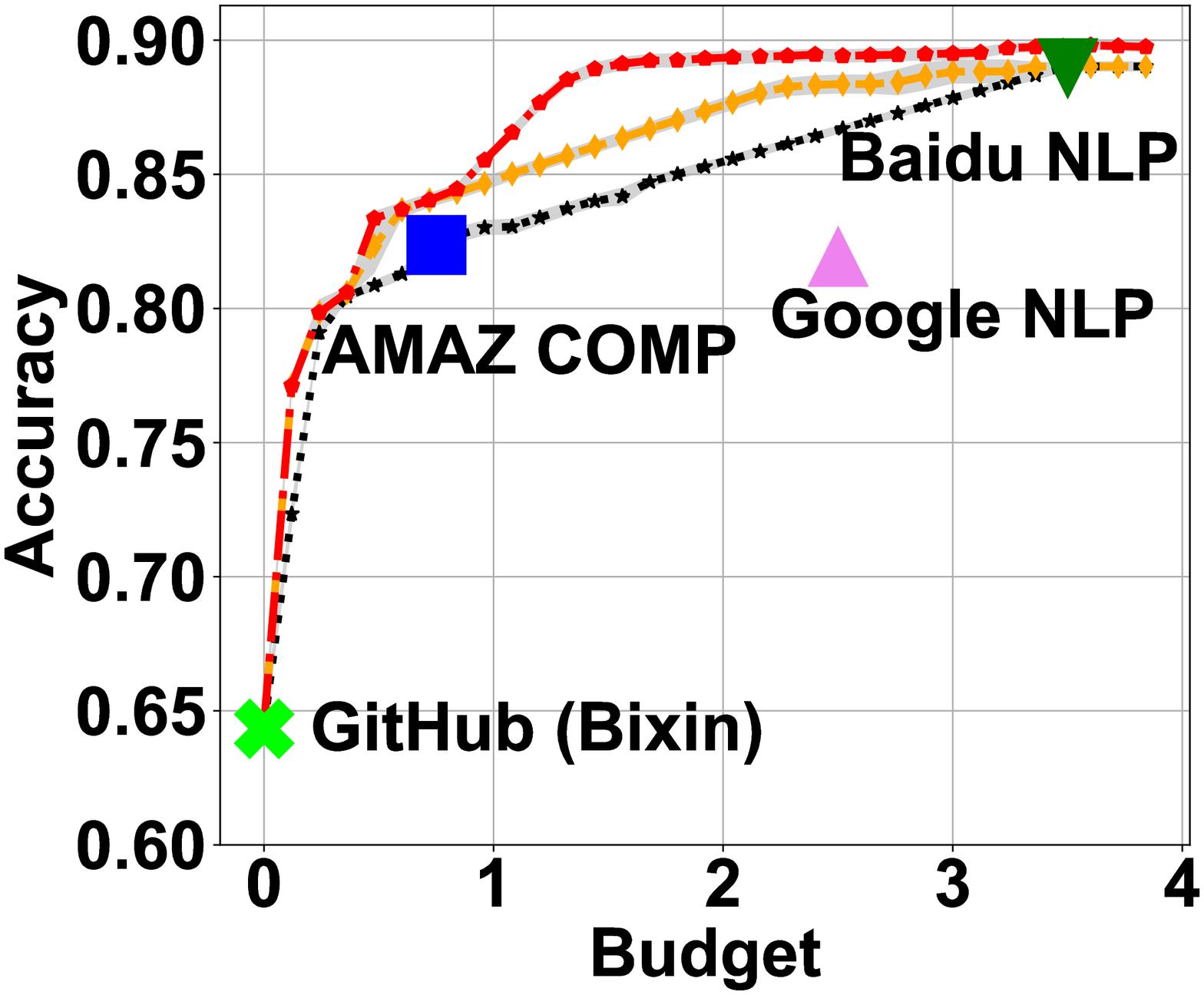}}
\end{subfigure}
\begin{subfigure}[SHOP]{\label{fig:h}\includegraphics[width=0.242\linewidth]{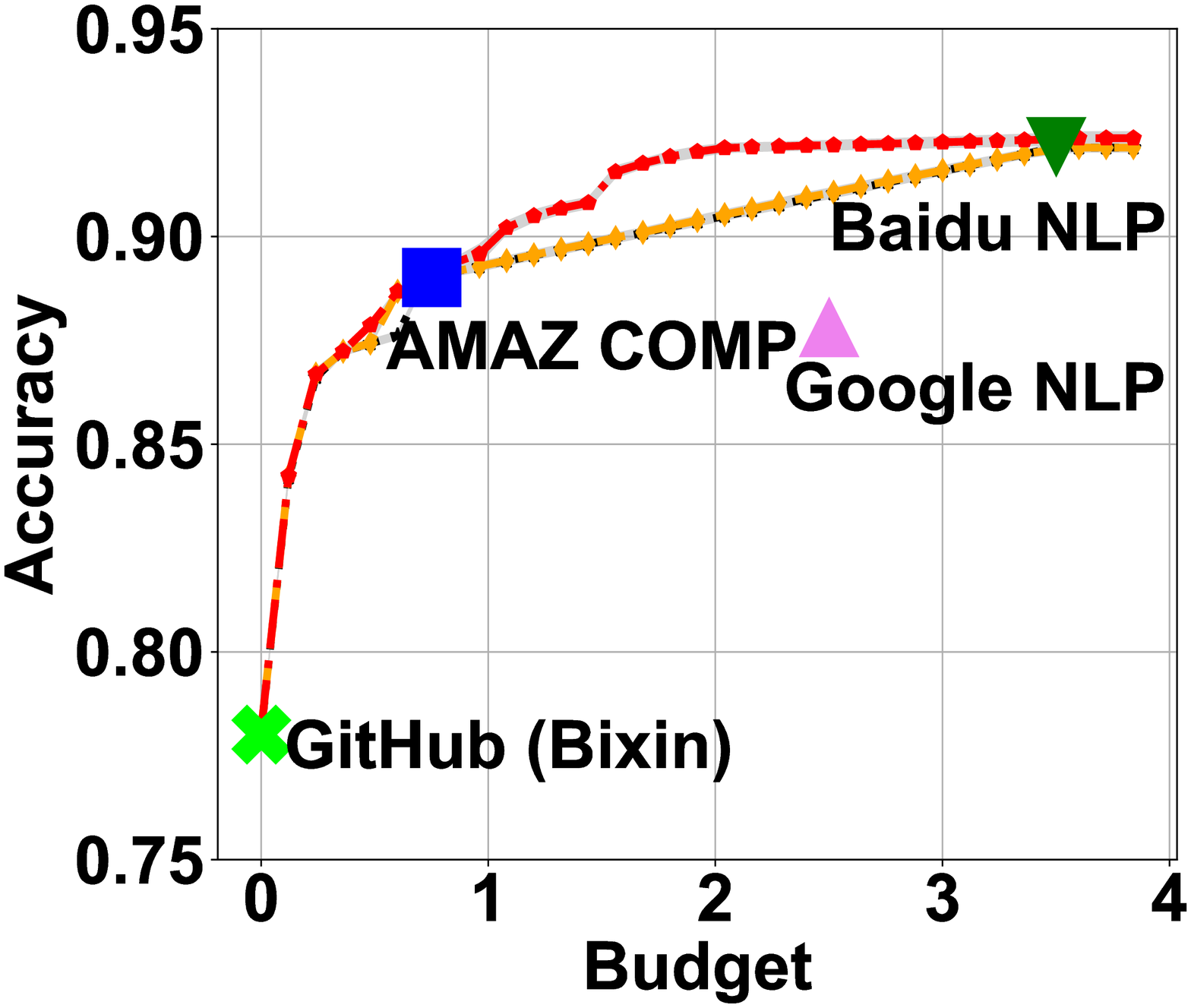}}
\end{subfigure}

\begin{subfigure}[DIGIT]{\label{fig:i}\includegraphics[width=0.242\linewidth]{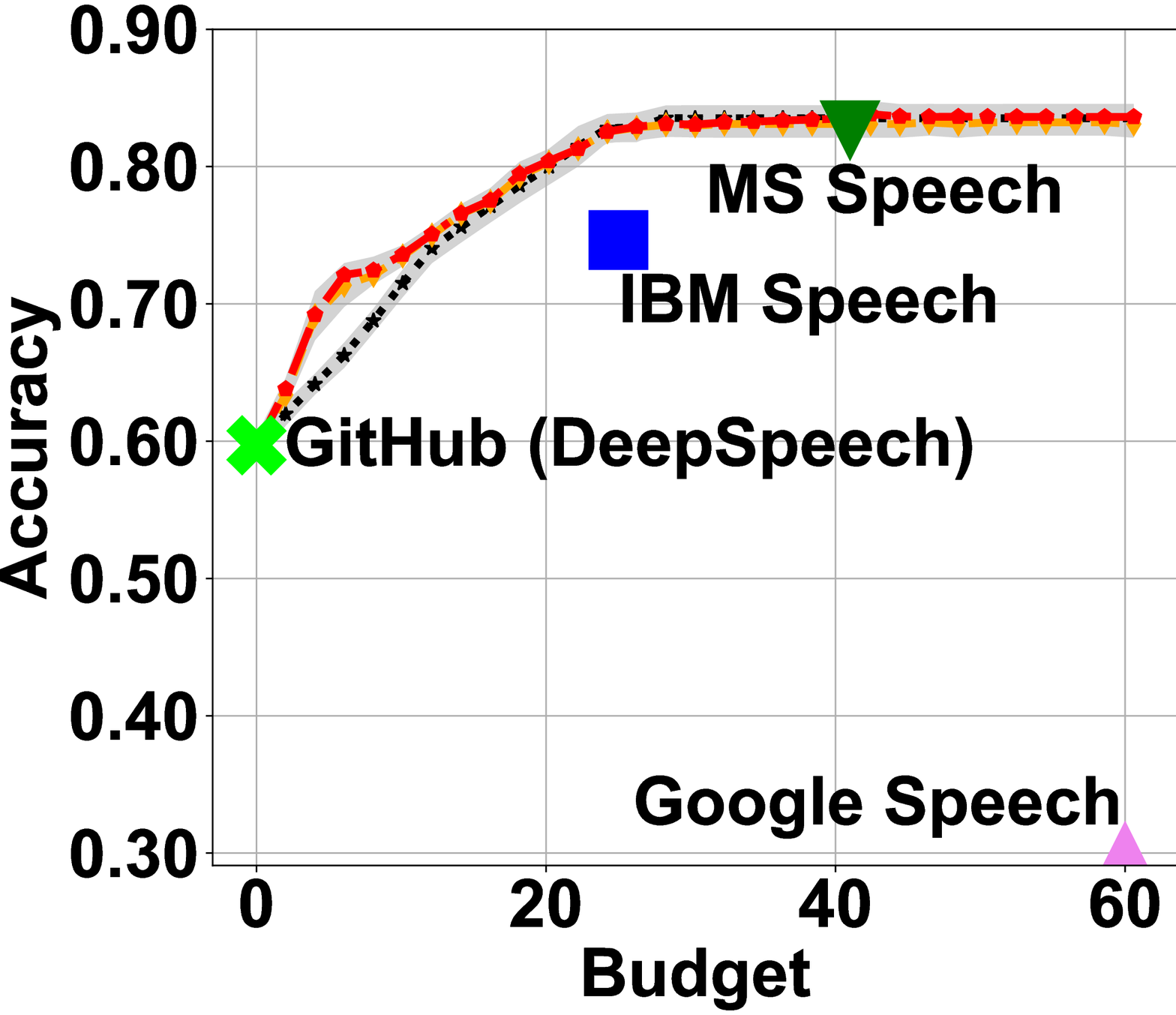}}
\end{subfigure}
\begin{subfigure}[AUDIOMNIST]{\label{fig:j}\includegraphics[width=0.242\linewidth]{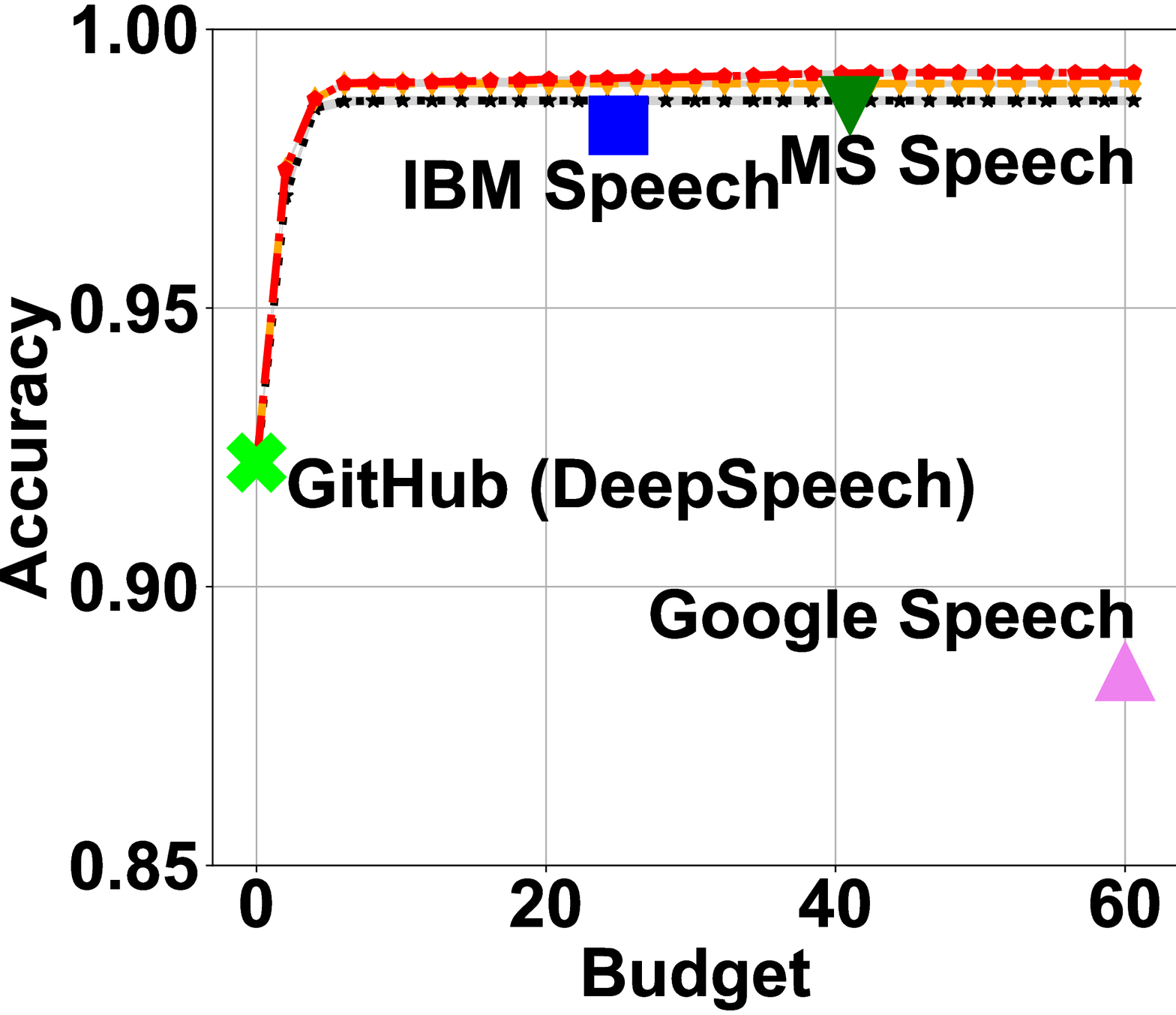}}
\end{subfigure}
\begin{subfigure}[COMMAND]{\label{fig:k}\includegraphics[width=0.242\linewidth]{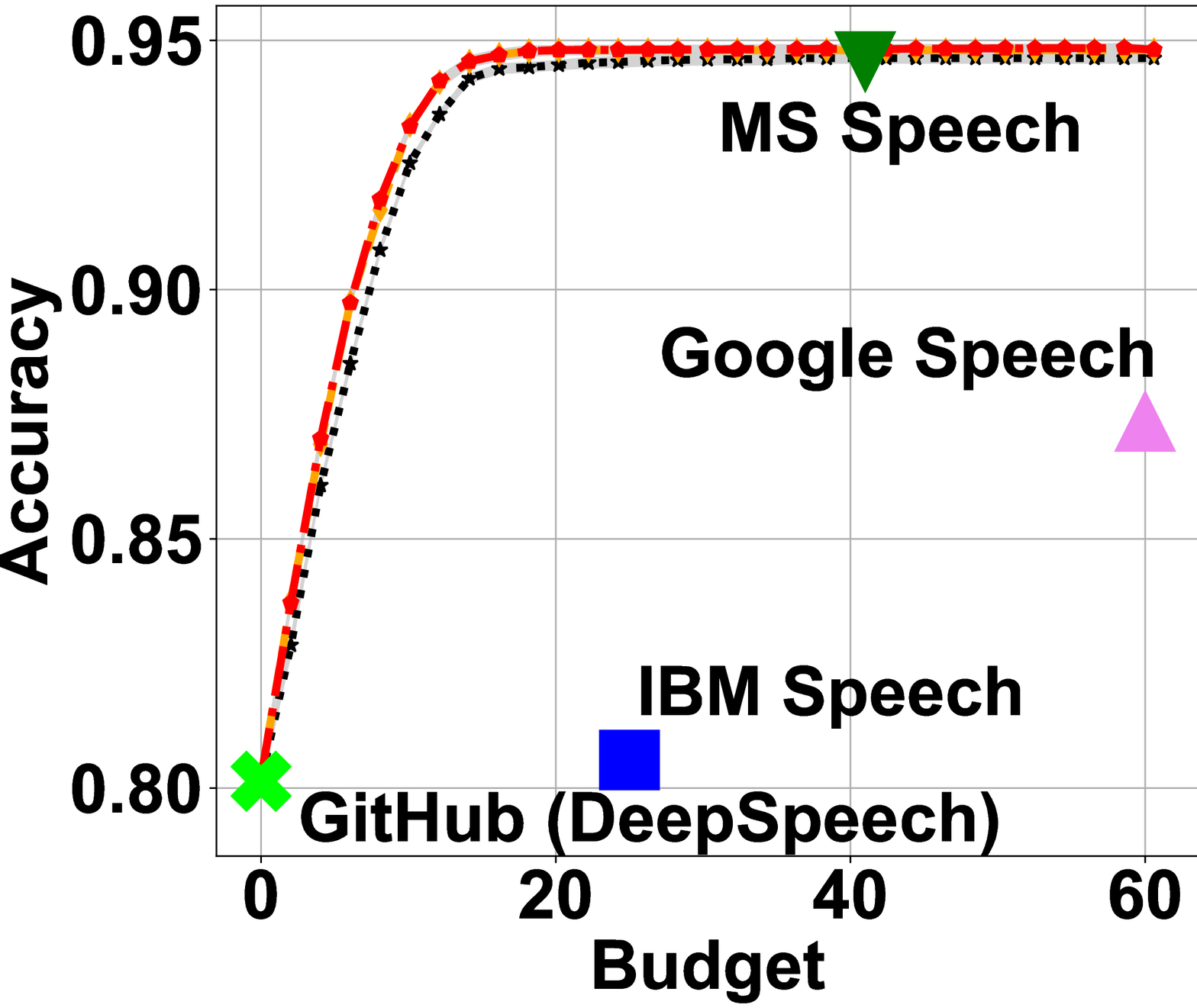}}
\end{subfigure}
\begin{subfigure}[FLUENT]{\label{fig:l}\includegraphics[width=0.242\linewidth]{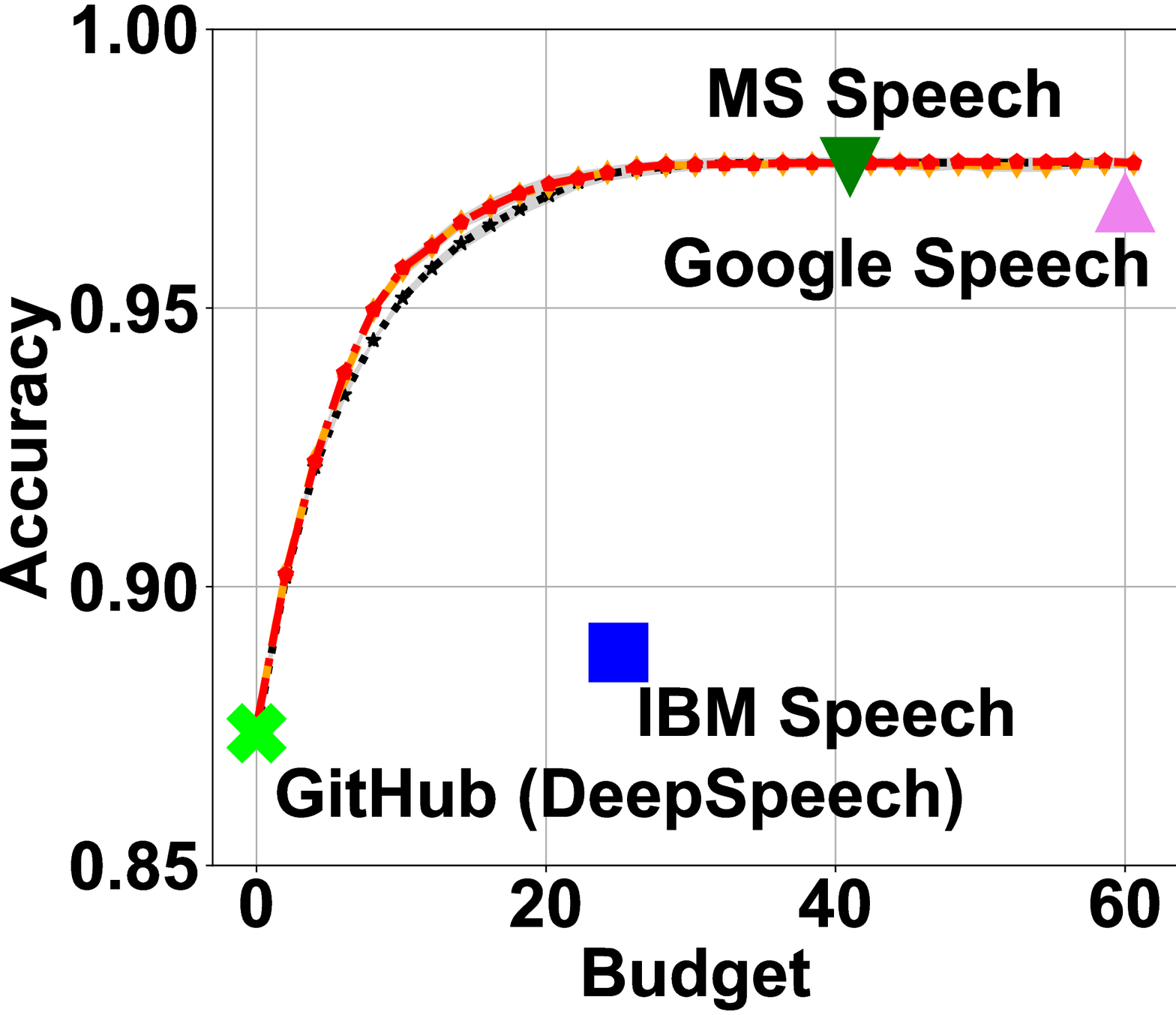}}
\end{subfigure}\caption{Accuracy cost trade-offs.	Base=GH simplifies \systemname{} by fixing  the free GitHub model as base service, and QS only further uses a universal quality score threshold for all labels. The task of row 1, 2, 3 is \textit{FER}, \textit{SA}, and \textit{STT},  respectively.}\label{fig:FAME:AccCostTradeoff}
\end{figure}

\paragraph{Effects of Training Sample Size }
Finally we evaluate how the  training sample size affects \systemname{}'s performance, shown in Figure \ref{fig:FAME:samplesize}.
Overall, while larger number of classes need more samples, we observe 3000 labeled samples are enough across all datasets.  

\begin{figure} \centering
\begin{subfigure}[FER+ (\# label:7)]{\label{fig:sample_a}\includegraphics[width=0.29\linewidth]{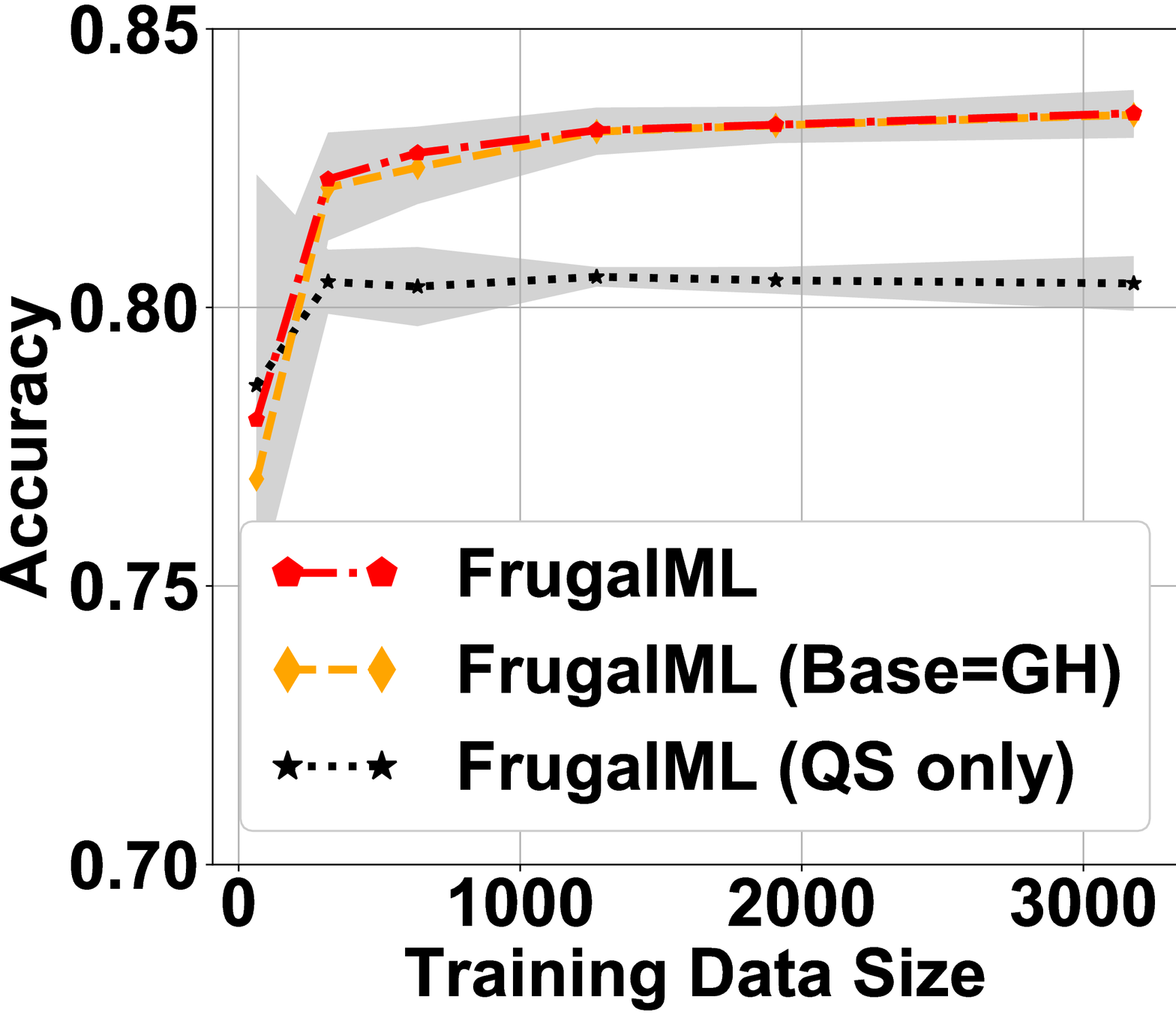}}
\end{subfigure}
\begin{subfigure}[WAIMAI (\# label: 2)]{\label{sample_b}\includegraphics[width=0.29\linewidth]{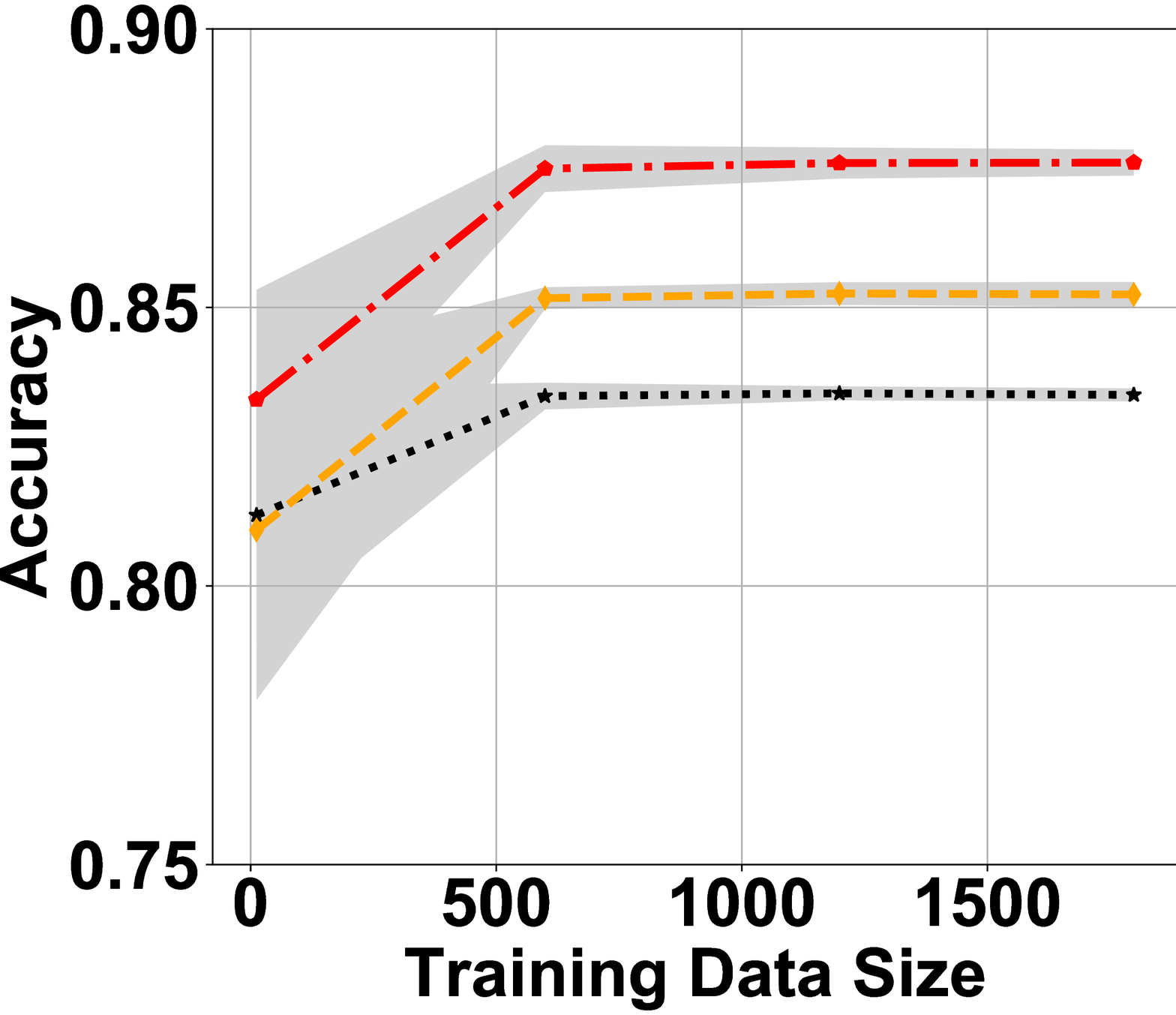}}
\end{subfigure}
\begin{subfigure}[AUDIOMNIST (\# label: 10)]{\label{sample_c}\includegraphics[width=0.29\linewidth]{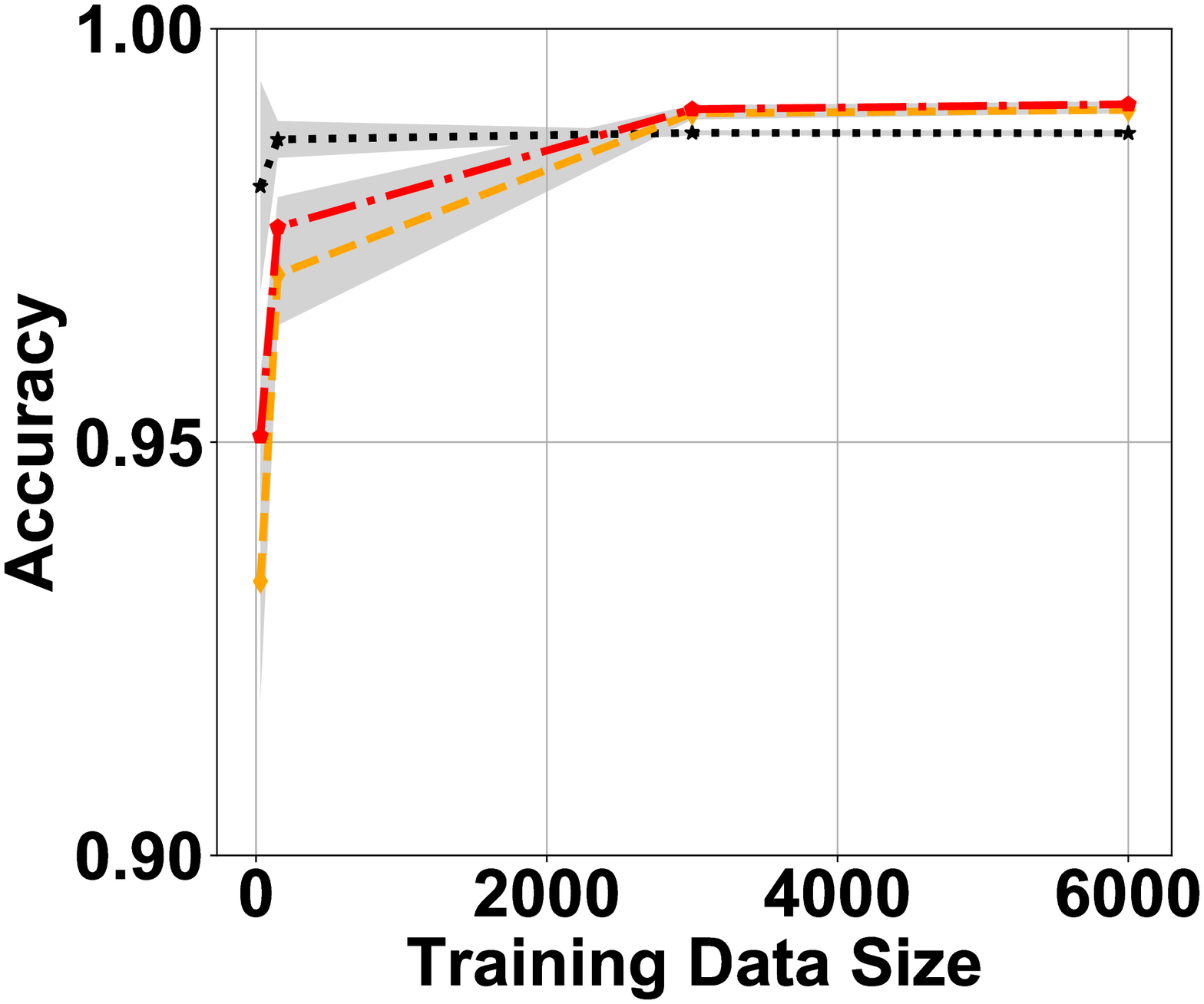}}
\end{subfigure}
	\caption{{Testing accuracy v.s.training data size. The fixed budget is 5, 1.2, 20, separately.}}\label{fig:FAME:samplesize}
\end{figure}
\vspace{-3mm}
\section{Conclusion and Open Problems}\label{Sec:FAME:Conclusion}
\vspace{-2mm}
In this work we proposed \systemname{},  a formal framework for identifying the best strategy to call ML APIs given a user's budget.
Both theoretical analysis and empirical results demonstrate that \systemname{} leads to significant cost reduction and accuracy improvement. 
\systemname{} is also efficient to learn: it typically takes a few minutes on a modern machine. Our research characterized the substantial heterogeneity in cost and performance across available ML APIs, which is useful in its own right and also leveraged by \systemname{}. 
Extending \systemname{} to produce calling strategies for ML tasks beyond classification (e.g., object detection and language translation) is an interesting future direction.
As a resource to stimulate further research in MLaaS, we will also release a dataset used to develop \systemname{}, consisting of 612,139 samples annotated by the APIs, and our code.


\newpage
\appendix

\section{Extra Notations}\label{Sec:FAME:AppendixNotations}
Here we introduce a few more notations.  

We first let $\cdot, \odot,\otimes$ denote inner, element-wise, and Kronecker product, respectively.
Next, Let us introduce a few notations: a matrix $\mathbf{A}\in\R^{K\times L}$, a scalar function $F_{k,\ell}(\cdot): \R \mapsto \R$ for $k \in [K], \ell \in [L]$, a scalar function $\psi_{k_1,k_2,\ell}(\cdot): \R \mapsto \R $ for $k_1, k_2 \in [K], \ell \in [L]$, 
matrix to matrix functions  $\mathbf{r}^{a}(\cdot) : \R^{K\times L} \mapsto \R^{K \times KL}$, $\mathbf{r}^{b}(\cdot): \mathbb{R}^{K\times L} \mapsto \mathbb{R}^{K\times L}$, and  $\mathbf{r}^{[-]}(\cdot) : \R^{K\times L} \mapsto \R^{K \times KL}$.
$\mathbf{A}$ is given by $\mathbf{A}_{k,\ell} \triangleq \Pr[y_k(x) = \ell]$, which represents the probability of $k$th service producing label $\ell$.
The scalar function $F_{k,\ell}(X)\triangleq \Pr[q_k(x) \leq X| y(x) =\ell]$ is the probability of the produced quality score from the $k$th service less than a threshold $X$ conditional on that its predicted label is $\ell$. 
The scalar function $\psi_{k_1,k_2,\ell}(\cdot)$ is defined as $\psi_{k_1,k_2,\ell}(\alpha) \triangleq \Exp\left[  r_{k_1}(x) | y_{k_1}(x) = \ell, q_{k_1}(x) \leq F^{-1}_{k_1,\ell}(\alpha) \right]$, i.e., the executed accuracy of the $k_2$ service conditional on that the $k_1$ services produces a label $\ell$ and quality score that is less than $F_{k,\ell}^{-1}(\pmb{\rho}_{k_1,\ell})$.
Then those matrix to matrix functions are given by ${\mathbf{r}}^{a}_{k_1,K(\ell-1)+k_2}(\pmb \rho) \triangleq \psi_{k_1,k_2,\ell}(\pmb \rho_{k_1,\ell})$, 
${\mathbf{r}}^{b}_{k,\ell}(\pmb \rho) \triangleq \psi_{k,k,\ell}(\pmb\rho_{k,\ell})$, and  
$ {\mathbf{r}}^{[-]}(\pmb \rho) \triangleq {\mathbf{r}}^{a}(\pmb \rho)- \mathbf{r}^{b}(\pmb \rho) \otimes \mathbf{1}^T_K$.

\section{Algorithm Subroutines}\label{Sec:FAME:AppendixAlgorithmDetails}

In this section we provide the details of the subroutines used in the training algorithm for \systemname{}.
There are in total four components: (i) estimating parameters, (ii) solving subproblem \ref{prob:FAME:subproblem1} to obtain its optimal value and solution, (iii) constructing the function $g_i(\cdot)$, and (iv) solving the master problem \ref{prob:FAME:master}.

\paragraph{Estimating Parameters.}
Instead of directly estimating $\Exp[r_i(x)|D_s,A_s^{[i]}]$, we estimate $\mathbf{A}, {\mathbf{r}}^{b}(\mathbf{1}_{K\times L})$, and $ {\mathbf{r}}^{[-]}(\cdot)$ as defined in Section \ref{Sec:FAME:AppendixNotations},  which are sufficient for the subroutines to solve the subproblem \ref{prob:FAME:subproblem1}.
Let $\hat{\mathbf{A}},  \hat{\mathbf{r}}^{b}(\mathbf{1}_{K\times L})$, and $ \hat{\mathbf{r}}^{[-]}(\cdot)$ be the corresponding estimation from the training datasets. Now we describe how to obtain them from a dataset $\{y(x_i), \{q_k(x_i), y_k(x_i)\}_{k=1}^{K}\}_{i=1}^N$.

To estimate $\mathbf{A}$, we simply apply the empirical mean estimator and obtain $\hat{\mathbf{A}}_{k,\ell} \triangleq \frac{1}{N} \sum_{i=1}^{N} \mathbbm{1}_{\{y_k(x_i) = \ell\}}$.
To estimate ${\mathbf{r}}^{b}(\mathbf{1}_{K\times L})$, and $ {\mathbf{r}}^{[-]}(\cdot)$, we first  compute
$\hat{\psi}_{k_1,k_2,\ell}(\alpha_m) \triangleq \frac{\sum_{i=1}^{N}\mathbbm{1}_{\{y_{k_1}(x_i)=\ell, \hat{q}_{m,k_1,\ell} \geq q_{k_1}(x_i), y_{k_2}(x_i)=y(x_i)\}}}{ \sum_{i=1}^{N}\mathbbm{1}_{\{y_{k_1}(x_i)=\ell, \hat{q}_{m,k_1,\ell} \geq q_{k_1}(x_i) \}}}$, for $\alpha_m = \frac{m}{M}, m\in \{0\}\cup [M]$, where $\hat{q}_{m,k,\ell} \triangleq Quantile(\{ q_k(x_i)|y_k(x_i)=\ell, i\in [N] \}, \alpha_m) $ is the empirical $\alpha_m$-quantile of the quality score of the $k$th service conditional on its predicted label being $\ell$.
Next we estimate $\psi_{k_1,k_2,\ell}(\cdot)$ by linear interpolation, i.e., generating $\hat{\psi}_{k_1,k_2,\ell}(\alpha) \triangleq \frac{\hat{\psi}_{k_1,k_2,\ell}(\alpha_m) -\hat{\psi}_{k_1,k_2,\ell}(\alpha_{m+1})}{\alpha_m - \alpha_{m+1}} (\alpha-\alpha_m) + \hat{\psi}_{k_1,k_2,\ell}(\alpha_m), \alpha \in [\alpha_m, \alpha_{m+1}]$.
We can now estimate $\hat{{\mathbf{r}}}^{a}_{k_1,K(\ell-1)+k_2}(\pmb \rho) \triangleq \psi_{k_1,k_2,\ell}(\pmb \rho_{k_1,\ell})$, $\hat{{\mathbf{r}}}^{b}_{k,\ell}(\pmb \rho) \triangleq \hat{\psi}_{k,k,\ell}(\pmb \rho_{k,\ell})$, 
and finally compute  $\hat{{\mathbf{r}}}^{b}(\mathbf{1}_{K\times L})_{k,\ell} $ and 
$ \hat{\mathbf{r}}^{[-]}(\pmb \rho) \triangleq \hat{\mathbf{r}}^{a}(\pmb \rho)- \hat{\mathbf{r}}^{b}(\pmb \rho) \otimes \mathbf{1}^T_K$.

\paragraph{Solving subproblem \ref{prob:FAME:subproblem1}.}

\begin{algorithm}
\caption{Solver for Problem \ref{prob:FAME:fixbaselabel}.}\label{Alg:FAME:Algorithm_fixbaseandlabel}
	\SetKwInOut{Input}{Input}
	\SetKwInOut{Output}{Output}
	\Input{$\beta, k, \ell,  \hat{\mathbf{r}}^{b}(\mathbf{1}_{K\times L}) $, $\hat{\mathbf{r}}^{[-]}(\cdot) $}
	\Output{the optimal solution $\hat{\rho}^{k,\ell}(\beta), \hat{\pmb \Pi}^{k,\ell}(\beta)$, and the optimal value $\hat{h}^{k,\ell}(\beta)$}  
  \begin{algorithmic}[1]
  
  \STATE Construct $\tilde{\mathbf{r}}^{kl}(\rho)\triangleq \left[ \hat{\mathbf{r}}^{[-]}_{k,K(\ell-1)+1:K\ell}(\rho\mathbf{1}_{K\times L})\right]^T$.

  \STATE Construct $\phi_{i}(\mu) \triangleq \hat{\mathbf{r}}^{b}_{k,\ell}(\mathbf{1}_{K\times L})  + \min\{\frac{\beta}{\mathbf{c}_i},\mu\} \tilde{\mathbf{r}}^{k,\ell}_{i}(\mu) $
  
  \STATE Construct $\phi_{i,j}(\mu) \triangleq \hat{\mathbf{r}}^{b}_{k,\ell}(\mathbf{1}_{K\times L})  + \frac{\beta - \mu \mathbf{c}_j}{\mathbf{c}_i-\mathbf{c}_j} \tilde{\mathbf{r}}^{k,\ell}_{i}(\mu)  + \frac{\mu \mathbf{c}_i-\beta}{\mathbf{c}_i-\mathbf{c}_j} \tilde{\mathbf{r}}^{k,\ell}_{j}(\mu) $
  
  \STATE Compute $(\mu_1, i_1) = \arg \max_{\mu \in[0,1], i\in [K] }  \phi_i(\mu)$ 
  
  \STATE Compute $(\mu_2, i_2, j_2) = \arg \max_{\mu \in[\frac{\beta}{\mathbf{c}_i},\min\{\frac{\beta}{\mathbf{c}_j},1\}], i,j\in [K], \mathbf{c}_i>\mathbf{c}_j }  \phi_{i,j}(\mu)$.

\IF {$\phi_{i_1}(\mu_1) \geq \phi_{i_2,j_2}(\mu_2)$} 
        \STATE $\hat{\rho}^{k,\ell}(\beta) = \mu_1 $, $\hat{\pmb \Pi}^{k,\ell}(\beta) = \left[ \mathbbm{1}_{\mu_1< \frac{\beta}{\mathbf{c}_{i_1}}} + \frac{\beta}{\mathbf{c}_i} \mathbbm{1}_{\mu_1\geq \frac{\beta}{\mathbf{c}_{i_1}}} \right]\mathbf{e}_{i_1}$, $\hat{h}_{k,\ell}(\beta) = \phi_{i_1}(\mu_1)$
\ELSE
     \STATE $\hat{\rho}^{k,\ell}(\beta) = \mu_2 $, $\hat{\pmb \Pi}^{k,\ell}(\beta) =  \frac{\beta/\mu_2-\mathbf{c}_{j_2}}{\mathbf{c}_{j_2}-\mathbf{c}_{j_2}}\mathbf{e}_{i_2}  + \frac{\mathbf{c}_{i_2}-\beta/\mu_2}{\mathbf{c}_{i_2}-\mathbf{c}_{i_2}}\mathbf{e}_{j_2}$, $\hat{h}_{k,\ell}(\beta) = \phi_{i_1}(\mu_1)$.
\ENDIF 

Return $\hat{\rho}^{k,\ell}(\beta), \hat{\pmb \Pi}^{k,\ell}(\beta), \hat{h}_{k,\ell}(\beta)$
\end{algorithmic}
\end{algorithm}

There are 3 steps for solving problem \ref{prob:FAME:subproblem1}. 
First, for $k =i, \ell \in [L]$,  invoke Algorithm \ref{Alg:FAME:Algorithm_fixbaseandlabel}
to compute  $\hat{\rho}^{k,\ell}(\beta_m), \hat{\pmb \Pi}^{k,\ell}(\beta_m), \hat{h}_{k,\ell}(\beta_m)$ where $\beta_m=\frac{m}{M} (b'-\mathbf{c}_k), m=0,1,\cdots, M$.
Next  compute $  t_1^*, t_2^*,\cdots, t_L^* = \arg      \max_{t_1,\cdots, t_L\in[L]\cup \{0\}} \textit{ }  \sum_{\ell=1}^{L} \hat{\mathbf{A}}_{k,\ell} \hat{h}_{k,\ell}(\beta_{t_\ell} ) \textit{ s.t. } \sum_{\ell=1}^{L}t_\ell=M$.
Finally  return $\hat{g}_i(b')\triangleq \sum_{\ell=1}^{L} \hat{\mathbf{A}}_{k,\ell} \hat{h}_{k,\ell}(\beta_{t_\ell^*} )$ as an approximation to the de facto optimal value $g_i(b')$, and the approximately optimal solution $\hat{\mathbf{Q}}_i(b')$ and  $\hat{\mathbf{P}^{[2]}}_i(b')$, where for $\ell \in [L], j\in [K]$, 
$[\hat{\mathbf{P}}^{[2]}_i(b')]_{i,\ell, j} \triangleq \hat{\pmb \Pi}^{i,\ell}_j(\beta_{t_\ell^*})$, $[\hat{\mathbf{P}}^{[2]}_i(b')]_{i',\ell, j} \triangleq 0, i'\not=i$, 
$[\hat{\mathbf{Q}}_i(b')]_{i,\ell} \triangleq Quantile(\{ q_i(x_i)|y_i(x_j)=\ell, j\in [N] \}, \hat{\rho}^{i,\ell}( \beta_{t_\ell^*} ))$, and $[\hat{\mathbf{Q}}_i(b')]_{i',\ell}] = 0, i'\not=i$.

\begin{remark}
Algorithm \ref{Alg:FAME:Algorithm_fixbaseandlabel} effectively solves the problem  \begin{equation}\label{prob:FAME:fixbaselabel}
    \begin{split}
        \max_{\rho, \pmb \Pi\in \Omega_2}\textit{ } &  \hat{\mathbf{r}}^{b}_{k,\ell}(\mathbf{1}_{K\times L}) + \rho \pmb \Pi^T \cdot \tilde{\mathbf{r}}^{k,\ell}(\rho)        \\
        \textit{s.t.        } & \rho (\pmb \Pi - \pmb \Pi \odot \mathbf{e}_k)^T \mathbf{c}   \leq \beta,  
    \end{split}
\end{equation}
where   $\Omega_2 =\{(\pmb \rho, \pmb \Pi) |   \pmb \rho \in [0,1], \pmb \Pi \in \R^{ K},
        \mathbf{0} \preccurlyeq \pmb \Pi \preccurlyeq \mathbf{1}, \pmb \Pi^T \cdot \mathbf{1}_K = 1 \}$ and $\tilde{\mathbf{r}}^{kl}(\rho): \R \mapsto \R^{K}$ is the transpose of  $ \hat{\mathbf{r}}^{[-]}_{k,K(\ell-1)+1:K\ell}(\rho\mathbf{1}_{K\times L})$.
        Observe that the function $\hat{\mathbf{r}}^{[-]}(\cdot)$ by construction is piece wise linear, and thus $\tilde{\mathbf{r}}^{k\ell}(\rho)$ is also piece wise linear. Thus, $\psi_i(\cdot)$ and $\psi_{i,j}(\cdot)$ are piece wise quadratic functions.
        Thus, the optimization problems in Algorithm \ref{Alg:FAME:Algorithm_fixbaseandlabel} (line 4 and line 5) can be efficiently solved, simply by optimizing a quadratic function for each piece.
\end{remark}

\paragraph{Constructing $g_i(\cdot)$.}
We construct an approximation to  $g_i(\cdot)$, denoted by $\hat{g}^{LI}_i(\cdot)$.
The construction is based on linear interpolation using $\hat{g}_i(\theta_m)$ as well as $\hat{g}_i(\mathbf{c}_i)$ which by definition is 0. 
More precisely, $\hat{g}^{LI}_i(\theta)\triangleq 0, \theta\leq \mathbf{c}_i$, $\hat{g}^{LI}_i(\theta)\triangleq \frac{\hat{g}_i(\theta_m)-\hat{g}_i(\theta_{m+1})}{\theta_m-\theta_{m+1}} (\theta-\theta_{m+1})+\hat{g}_i(\theta_{m+1}), \theta_{m+1}\geq \theta \geq \theta_m \geq \mathbf{c}_i$,
 and $\hat{g}^{LI}_i(\theta)\triangleq \frac{\hat{g}_i(\theta_m)}{\theta_m} \theta, \theta_{m} \geq \theta \geq \mathbf{c}_i\geq \theta_{m-1}$. Here, $\theta_m \triangleq b'_m=\frac{\|2\mathbf{c}\|_{\infty}}{M}$.

\paragraph{Solving Master Problem \ref{prob:FAME:master}.}
To solve Problem \ref{prob:FAME:master}, let us first denote   $\Omega_3= \{\mathbf{x}\in \R^4| x  \succcurlyeq 0, \mathbf{x}_1+\mathbf{x}_2 = 1\}$ and  $\Omega_{3,m_1,m_2} \triangleq \{ \mathbf{x}\in\Omega_3| \theta_{m_{i}-1}\mathbf{x}_{i}\leq \mathbf{x}_{i+3} \leq \theta_{m_{i}} \mathbf{x}_{i+3},i=1,2\}$, for $m_1\in[M], m_2\in [M]$.
For each $i_1, i_2, m_1, m_2$,  first compute $ \hat{g}^{\sum}(i_1,i_2,m_1,m_2) \triangleq \max_{(p_1, p_2, b_1, b_2)\in \Omega_{3,m_1,m_2}}\textit{ }  p_1  \hat{g}^{LI}_{i_1}( b_1/ p_1 ) + p_2 \hat{g}^{LI}_{i_2}(b_2/p_2)        \textit{ s.t. } b_1+b_2 = b$, a linear programming by construction.
Next compute $i_1^*,i_2^*, m_1^*, m_2^* \triangleq \arg \max_{i_1,i_2,m_1,m_2} \hat{g}^{\sum}(i_1,i_2,m_1,m_2)$ and $ (p_1^*,p_2^*,b_1^*,b_2^*) \triangleq \max_{(p_1, p_2, b_1, b_2)\in \Omega_{3,m_1^*,m_2^*}}\textit{ }  p_1  \hat{g}^{LI}_{i_1^*}( b_1/ p_1 ) + p_2 \hat{g}^{LI}_{i_2^*}(b_2/p_2)        \textit{ s.t. } b_1+b_2 = b$.
Finally return the corresponding solution $i_1^*, i_2^*, p_1^*, p_2^*, b_1^*, b_2^*$.

\section{Missing Proofs}
\subsection{Helpful Lemmas}
We first provide some useful lemmas throughout this section.
\begin{lemma}\label{Lemma:FAME:LPSparseStructure}
	Suppose the linear optimization problem
	\begin{equation*}
	\begin{split}
	\max_{\mathbf{z} \in \mathbb{R}^{K} } \textit{ }& \mathbf{u}^T\mathbf{z} \\
	s.t. \textit{ } & \mathbf{v}^T \mathbf{z} \leq w, \mathbf{1}^T \mathbf{z} \leq 1,\mathbf{z}\geq 0 
	\end{split}
	\end{equation*} is feasible.  Then there exists one optimal solution $\mathbf{z}^*$ such that $\|\mathbf{z}^*\|_0\leq 2$.
\end{lemma}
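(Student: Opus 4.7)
The plan is to invoke the standard basic-feasible-solution theorem of linear programming. First I would note that the feasible set
$\{\mathbf{z}\in\mathbb{R}^K : \mathbf{z}\ge\mathbf{0},\ \mathbf{1}^T\mathbf{z}\le 1,\ \mathbf{v}^T\mathbf{z}\le w\}$
is a closed, bounded polyhedron (because of $\mathbf{z}\ge\mathbf{0}$ and $\mathbf{1}^T\mathbf{z}\le 1$), hence compact. Combined with feasibility and linearity of the objective, an optimal solution is attained and there is in particular an optimal \emph{vertex} (basic feasible solution) $\mathbf{z}^*$ of the polyhedron.

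Next I would count active constraints. At a vertex of a polyhedron in $\mathbb{R}^K$, the number of linearly independent active constraints must be at least $K$. The only available constraints are $\mathbf{v}^T\mathbf{z}\le w$, $\mathbf{1}^T\mathbf{z}\le 1$, and the $K$ non-negativity constraints $z_i\ge 0$. Hence at least $K-2$ of the non-negativity constraints must be active at $\mathbf{z}^*$, which means at most $2$ coordinates of $\mathbf{z}^*$ are nonzero, giving $\|\mathbf{z}^*\|_0\le 2$.

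As a backup, in case one prefers an elementary derivation avoiding citing the BFS theorem, I would give a direct support-reduction argument: starting from any optimal $\mathbf{z}^\circ$ with support $S=\{i:z_i^\circ>0\}$ of size $|S|\ge 3$, look for a direction $\mathbf{d}\in\mathbb{R}^K$ supported on $S$ that is orthogonal to $\mathbf{1}$ and to $\mathbf{v}$ (whenever $\mathbf{v}^T\mathbf{z}^\circ=w$); such a nonzero $\mathbf{d}$ exists because the system has $|S|\ge 3$ variables and at most two equations. Moving along $\pm\mathbf{d}$ keeps feasibility of both linear constraints and, because $\mathbf{z}^\circ$ is optimal, forces $\mathbf{u}^T\mathbf{d}=0$. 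Pushing as far as possible in a direction that decreases some coordinate to zero produces a new optimal solution with strictly smaller support; iterating gives $\|\mathbf{z}^*\|_0\le 2$.

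There is no real obstacle here; the only thing to handle carefully is the case where $\mathbf{v}^T\mathbf{z}^\circ<w$ (so only $\mathbf{1}^T\mathbf{z}=1$ or no linear equality is active), which actually makes the argument easier since one of the two equations drops out and the same direction-construction works a fortiori.
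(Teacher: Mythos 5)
Your proposal is correct, but it proves the lemma by a genuinely different route than the paper. Your main argument is the standard polyhedral one: the feasible set $\{\mathbf{z}\ge 0,\ \mathbf{1}^T\mathbf{z}\le 1,\ \mathbf{v}^T\mathbf{z}\le w\}$ is a nonempty bounded (hence pointed) polyhedron, so the optimum is attained at a vertex, and at a vertex of a polyhedron in $\mathbb{R}^K$ at least $K$ linearly independent constraints are active; since only two constraints other than the sign constraints exist, at least $K-2$ coordinates vanish, giving $\|\mathbf{z}^*\|_0\le 2$. The paper instead argues constructively: starting from an arbitrary optimal $\mathbf{z}^*$ with support larger than two, it builds an explicit two-sparse $\mathbf{z}'$ concentrated on the support indices with the largest and smallest entries of $\mathbf{v}$, chosen so that both $\mathbf{v}^T\mathbf{z}$ and $\mathbf{1}^T\mathbf{z}$ are preserved, and then verifies $\mathbf{u}^T\mathbf{z}'=\mathbf{u}^T\mathbf{z}^*$ via the KKT/complementary-slackness conditions (the multipliers of the active sign constraints vanish on the support, so the objective depends only on $\mathbf{v}^T\mathbf{z}$ and $\mathbf{1}^T\mathbf{z}$ there). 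Your route is shorter and cites a classical theorem; the paper's is more self-contained and hands the algorithm an explicit sparse reconstruction. One small caveat on your backup support-reduction argument: when $\mathbf{v}^T\mathbf{z}^\circ<w$ and you push along a direction orthogonal only to $\mathbf{1}$, the slack constraint $\mathbf{v}^T\mathbf{z}\le w$ may become tight before any coordinate reaches zero, so that step need not shrink the support; you then have to restart at the new optimal point with the direction required orthogonal to both $\mathbf{v}$ and $\mathbf{1}$ (which still exists for support size $\ge 3$). This is easily patched and does not affect your primary argument, which is complete as stated.
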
	
\begin{proof}
	Let $\mathbf{z}^*$ be one solution.
	If $\|\mathbf{z}^*\|_0\leq 2$, then the statement holds.
	Suppose $\|\mathbf{z}^*\|_0 = \textit{nnz} > 2$ (and thus $K\geq 3$).
	W.l.o.g., let the first $\textit{nnz}$ elements in $\mathbf{z}^*$ be the nonzero elements.
	Let $i_{\min}=\arg \min_{i:i\leq \textit{nnz}} \mathbf{v}_i$ and $i_{\max}=\arg \max_{i:i\leq \textit{nnz}} \mathbf{v}_i$. 
	If $\mathbf{v}_{i_{\max}}>\mathbf{v}_{ i_{\min} } $, construct $\mathbf{z}'$ by 	
	\begin{equation*}
	\begin{split}
	\mathbf{z}_i' = \begin{cases}
	\mathbf{z}^*_i(=0), & \textit{if $i>\text{nnz}$} \\
	\frac{ \mathbf{v}^T_{1:\text{nnz}} \mathbf{z}^*_{1:\textit{nnz}} - \mathbf{v}_{i_{\min}}\mathbf{1}^T \mathbf{z}_{1:\textit{nnz}}  }{\mathbf{v}_{i_{\max}}-\mathbf{v}_{ i_{\min} } }, & \textit{if $i=i_{\max}$}\\
	\frac{ - \mathbf{v}^T_{1:\textit{nnz}} \mathbf{z}^*_{1:\textit{nnz}} + \mathbf{v}_{i_{\max}}\mathbf{1}^T \mathbf{z}_{1:\textit{nnz}}  }{\mathbf{v}_{i_{\max}}-\mathbf{v}_{ i_{\min} } }, & \textit{if $i=i_{\min}$}\\
	0,& \textit{otherwise} 
	\end{cases}
	\end{split}
	\end{equation*}
	Otherwise, construct $\mathbf{z}'$ by 	
	\begin{equation*}
	\begin{split}
	\mathbf{z}_i' = \begin{cases}
	\mathbf{z}^*_i(=0), & \textit{if $i>\text{nnz}$} \\
	\mathbf{1}^T \mathbf{z}_{1:\textit{nnz}}^{*}, & \textit{if $i=i_{\max}$}\\
	0,& \textit{otherwise} 
	\end{cases}
	\end{split}
	\end{equation*}	
	
	Now our goal is to prove that $\mathbf{z}'$ is one optimal solution and $\|\mathbf{z}'\|_0\leq 2$.
	
	(i) We first show that $\mathbf{z}'$ is a feasible solution.
	
	(1) $\mathbf{v}_{i_{\max}}>\mathbf{v}_{ i_{\min} } $:
	If $i\not\in\{i_{\max},i_{\min}\}$, clearly $\mathbf{z}'_i=0\geq 0$. 
	Since $\mathbf{z}^*$ is feasible, $\mathbf{z}^*_{1:\textit{nnz}}\geq 0$.
	By definition, $\mathbf{z}'_{i_{\max}} = \frac{ \mathbf{v}^T_{1:\text{nnz}} \mathbf{z}^*_{1:\textit{nnz}} - \mathbf{v}_{i_{\min}}\mathbf{1}^T \mathbf{z}_{1:\textit{nnz}}  }{\mathbf{v}_{i_{\max}}-\mathbf{v}_{ i_{\min} } } = \frac{ 1 }{\mathbf{v}_{i_{\max}}-\mathbf{v}_{ i_{\min} } }  \sum_{j=1}^{\textit{nnz}} (\mathbf{v}_{j} - \mathbf{v}_{i_{\min}}) \mathbf{z}^*_{j} \geq 0$, and similarly $\mathbf{z}_{i_{\min}}'\geq 0$.
	Thus, we have $\mathbf{z}'\geq 0 $.
	
	In addition, 
	\begin{equation*}
	\begin{split}
	\mathbf{v}^T \mathbf{z}' & = \frac{ \mathbf{v}^T_{1:\text{nnz}} \mathbf{z}^*_{1:\textit{nnz}} - \mathbf{v}_{i_{\min}}\mathbf{1}^T \mathbf{z}_{1:\textit{nnz}}  }{\mathbf{v}_{i_{\max}}-\mathbf{v}_{ i_{\min} } } \mathbf{v}_{i_{\max}} + 	\frac{ - \mathbf{v}^T_{1:\textit{nnz}} \mathbf{z}^*_{1:\textit{nnz}} + \mathbf{v}_{i_{\max}}\mathbf{1}^T \mathbf{z}_{1:\textit{nnz}}  }{\mathbf{v}_{i_{\max}}-\mathbf{v}_{ i_{\min} } } \mathbf{v}_{i_{min}} \\
	&=\mathbf{v}^T_{1:\text{nnz}} \mathbf{z}^*_{1:\textit{nnz}}  = \mathbf{v}^T \mathbf{z}^*\leq w
	\end{split}
	\end{equation*}
	where the last equality is due to the fact that $\mathbf{z}_{i}=0, \forall i >\textit{nnz}$.
	Similiarly, we have 
	\begin{equation*}
	\begin{split}
	\mathbf{1}^T \mathbf{z}' & = \frac{ \mathbf{v}^T_{1:\text{nnz}} \mathbf{z}^*_{1:\textit{nnz}} - \mathbf{v}_{i_{\min}}\mathbf{1}^T \mathbf{z}_{1:\textit{nnz}}  }{\mathbf{v}_{i_{\max}}-\mathbf{v}_{ i_{\min} } }  + 	\frac{ - \mathbf{v}^T_{1:\textit{nnz}} \mathbf{z}^*_{1:\textit{nnz}} + \mathbf{v}_{i_{\max}}\mathbf{1}^T \mathbf{z}_{1:\textit{nnz}}  }{\mathbf{v}_{i_{\max}}-\mathbf{v}_{ i_{\min} } }  \\
	&=\mathbf{1}^T_{1:\text{nnz}} \mathbf{z}^*_{1:\textit{nnz}}  = \mathbf{1}^T \mathbf{z}^*\leq 1.
	\end{split}
	\end{equation*}
	
	(2) $\mathbf{v}_{i_{\max}}=\mathbf{v}_{ i_{\min} } $:
	It is clear that $\mathbf{z}'\geq 0$ and $\mathbf{1}^T \mathbf{z} \leq 1$ by definition.
	Note that  by definition $\mathbf{v}^T \mathbf{z}' = \mathbf{v}_{i_{\max}} \mathbf{1}^T \mathbf{z}^*_{1:\textit{nnz}}$.
	$\mathbf{v}_{i_{\max}}=\mathbf{v}_{ i_{\min} } $ implies that for $i=1,2,\cdots, \textit{nnz}, \mathbf{v}_i = \mathbf{v}_{i_{\max}}$, and thus $ \mathbf{v}_{i_{\max}} \mathbf{1}^T \mathbf{z}^*_{1:\textit{nnz}} =  \mathbf{v}^T_{1:\textit{nnz}} \mathbf{z}^*_{1:\textit{nnz}}$.
	Note that only the first $\textit{nnz}$ elements in $\mathbf{z}^*$ are nonzeros, we have  $\mathbf{v}^T_{1:\textit{nnz}} \mathbf{z}^*_{1:\textit{nnz}} = \mathbf{v}^T \mathbf{z}^*$.
	That is to say,
	\begin{equation*}
	\mathbf{v}^T \mathbf{z}' = \mathbf{v}^T \mathbf{z}^* \leq w
	\end{equation*}
	Hence, we have shown that $	\mathbf{v}^T \mathbf{z}' \leq w, 	\mathbf{1}^T \mathbf{z}' \leq 1, \mathbf{z}' \geq 0$ always hold, i.e., $\mathbf{z}'$ is a feasible solution to the linear optimization problem.
	
	(ii) Now we show that $\mathbf{z}'$ is one optimal solution, i.e., $\mathbf{u}^T \mathbf{z}' = \mathbf{u}^T \mathbf{z}^*$.
	
	The Lagrangian function of the linear optimization problem is 
	\begin{equation*}
	\begin{split}
	\mathcal{L}(\mathbf{z},\pmb{\mu}) =  \mathbf{u}^T \mathbf{z} +  \pmb \mu_1  (\mathbf{v}^T \mathbf{z}-w) + \pmb\mu_2(\mathbf{1}^T \mathbf{z} )+ \sum_{i=1}^{K} \pmb \mu_{i+2}(-\mathbf{z}_i)
	\end{split}
	\end{equation*}
	Since $\mathbf{z}^*$ is one optimal solution and clearly LCQ (linearity constraint qualification) is satisfied, KKT conditions must hold. 
	That is, there exists $\pmb \mu$ such that 
	\begin{equation*}
	\begin{split}
	& \frac{\partial \mathcal{L}(\mathbf{z}^*, \pmb\mu) }{\partial \mathbf{z}_i} =  \mathbf{u}_i+ \pmb\mu_1  \mathbf{v}_i   + \pmb\mu_2 -\pmb \mu_{i+2} = 0, \forall i \\
	&	\pmb \mu_1  (\mathbf{v}^T \mathbf{z}^*-w) =0,  \pmb\mu_2(\mathbf{1}^T \mathbf{z}^* ) = 0 ,  \pmb \mu_{i+2}\mathbf{z}^*_i=0, \forall i\\
	&  \pmb \mu \geq 0 \\
	& \mathbf{v}^T \mathbf{z}^* \leq w, \mathbf{1}^T \mathbf{z}^* \leq 1,\mathbf{z}\geq 0 
	\end{split}
	\end{equation*}
	For ease of exposition, denote $\hat{\pmb \mu} = [\pmb\mu_{3},\pmb\mu_4,\cdots, \pmb\mu_{K+2}]^T$.
	The first condition implies $ \mathbf{u}_i = - \pmb\mu_1  \mathbf{v}_i   - \pmb\mu_2 + \pmb \mu_{i+2}, \forall i$, which is equivalent to $\mathbf{u} = -\pmb \mu_1\mathbf{v} -\pmb \mu_2 \mathbf{1} +\hat{\pmb\mu}$.
	Thus, we have
	\begin{equation*}
	\begin{split}
	\mathbf{u}^T \mathbf{z}' = -\pmb \mu_1\mathbf{v}^T \mathbf{z}'  -\pmb \mu_2 \mathbf{1}^T \mathbf{z}' +\hat{\pmb\mu}^T \mathbf{z}'
	\end{split}
	\end{equation*}
	\begin{equation*}
	\begin{split}
	\mathbf{u}^T \mathbf{z}^* = -\pmb \mu_1\mathbf{v}^T \mathbf{z}^*  -\pmb \mu_2 \mathbf{1}^T \mathbf{z}^* +\hat{\pmb\mu}^T \mathbf{z}^*
	\end{split}
	\end{equation*}
	
	The condition  $\pmb \mu_{i+2}\mathbf{z}^*_i=0$ implies that at least one of the terms must  be 0. 
	Since it holds for every $i$, the summation over $i$ is also 0, i.e.,
	$\hat{\pmb\mu}^T \mathbf{z}^* = \sum_{i=1}^{K}\pmb \mu_{i+2}\mathbf{z}^*_i =0 $.
	Noting that the first $\textit{nnz}$ elements in $\mathbf{z}^*$ are nonzeros, we must have $\pmb\mu_{i+2} = 0,  i\leq \textit{nnz}$, and in particular, $\pmb\mu_{i_{\max}+2} = \pmb\mu_{i_{\min}+2} = 0 $.
	Hence,    $\hat{\pmb\mu}^T \mathbf{z}' = \sum_{i=1}^{K}\pmb \mu_{i+2}\mathbf{z}'_i = \pmb\mu_{i_{\max}+2} \mathbf{z}'_{i_{\max}}+ \pmb\mu_{i_{\min}+2} \mathbf{z}_{i_{\min}}' = 0$.
	Thus, we have
	\begin{equation*}
	\begin{split}
	\mathbf{u}^T \mathbf{z}' = -\pmb \mu_1\mathbf{v}^T \mathbf{z}'  -\pmb \mu_2 \mathbf{1}^T \mathbf{z}' 
	\end{split}
	\end{equation*}
	\begin{equation*}
	\begin{split}
	\mathbf{u}^T \mathbf{z}^* = -\pmb \mu_1\mathbf{v}^T \mathbf{z}^*  -\pmb \mu_2 \mathbf{1}^T \mathbf{z}^* 
	\end{split}
	\end{equation*}
	In part (i), it is shown that $\mathbf{v}^T\mathbf{z}^*=\mathbf{v}^T\mathbf{z}'$ and $\mathbf{1}^T\mathbf{z}^*=\mathbf{1}^T\mathbf{z}'$.
	Hence, we must have
	\begin{equation*}
	\begin{split}
	\mathbf{u}^T \mathbf{z}^* = \mathbf{u}^T \mathbf{z}'
	\end{split}
	\end{equation*}
	In other words, $\mathbf{z}'$ has the same objective function value as $\mathbf{z}^*$.
	Since $\mathbf{z}^*$ is one optimal solution, $\mathbf{z}'$ must also be one optimal solution (since it is also feasible as shown in part (i)).
	By definition, $\|\mathbf{z}'\|_0 \leq 2$, which finishes the proof.
\end{proof}

\begin{lemma}\label{Lemma:FAME:LPContinousProperty}
	Let $F(w)$ be the optimal value of the linear optimization problem
	\begin{equation*}
	\begin{split}
	\max_{\mathbf{z} \in \mathbb{R}^{K} } \textit{ }& \mathbf{u}^T\mathbf{z} \\
	s.t. \textit{ } & \mathbf{v}^T \mathbf{z} \leq w, \mathbf{z}\geq 0 , \mathbf{C} \mathbf{z} \leq \mathbf{d}
	\end{split}
	\end{equation*} 
	where $\mathbf{u}, \mathbf{C}, \mathbf{v} \geq \mathbf{0}$, $\mathbf{d} > 0$.
	Then $F(w)$ is Lipschitz continuous.
\end{lemma}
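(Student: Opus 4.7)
My plan is to leverage LP duality so that $F(w)$ becomes a parametric minimum over a feasible set that does \emph{not} depend on $w$. Writing the Lagrangian dual (for $w \geq 0$, the only regime where the primal is feasible, since $\mathbf{z}=\mathbf{0}$ is feasible and $\mathbf{v}\geq \mathbf{0}$ forces $\mathbf{v}^{\T}\mathbf{z}\geq 0$),
\begin{equation*}
F(w) \;=\; \min_{(\mu, \pmb\lambda) \in D} \; \mu\, w + \pmb\lambda^{\T}\mathbf{d}, \quad D \triangleq \{(\mu, \pmb\lambda):\ \mu \geq 0,\ \pmb\lambda \geq \mathbf{0},\ \mu \mathbf{v} + \mathbf{C}^{\T}\pmb\lambda \geq \mathbf{u}\}.
\end{equation*}
First I would verify primal boundedness (which, together with feasibility via $\mathbf{z}=\mathbf{0}$, gives strong duality and justifies the displayed identity); in the paper's usage the polytope $\{\mathbf{z}\geq \mathbf{0}: \mathbf{C}\mathbf{z}\leq \mathbf{d}\}$ is bounded, making this step routine.

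The key observation is that $D$ is a polyhedron independent of $w$, sitting in the nonnegative orthant and therefore pointed, so it has finitely many extreme points $\{(\mu^{(i)}, \pmb\lambda^{(i)})\}_{i=1}^{r}$. The linear dual objective, whenever its minimum is finite, attains it at one of these extreme points, so
\begin{equation*}
F(w) \;=\; \min_{1 \leq i \leq r} \bigl[\mu^{(i)} w + (\pmb\lambda^{(i)})^{\T}\mathbf{d}\bigr].
\end{equation*}
This realizes $F$ as the pointwise minimum of finitely many affine functions of $w$, which is in particular concave, piecewise linear, and Lipschitz continuous on its domain.

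To make the Lipschitz constant explicit, I would set $L \triangleq \max_{1 \leq i \leq r} \mu^{(i)}$. For $w_1 \leq w_2$, picking an extreme-point index $i_1$ that achieves the minimum at $w_1$ and using it as a feasible (but not necessarily optimal) choice at $w_2$,
\begin{equation*}
0 \;\leq\; F(w_2) - F(w_1) \;\leq\; \mu^{(i_1)}(w_2 - w_1) \;\leq\; L\,(w_2 - w_1),
\end{equation*}
where the lower bound uses that $\mathbf{v} \geq \mathbf{0}$ makes the primal feasible region monotone in $w$, so $F$ is nondecreasing; by symmetry $|F(w_2)-F(w_1)| \leq L|w_2 - w_1|$. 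The hard part will be verifying primal boundedness from the stated hypotheses alone — $\mathbf{u},\mathbf{v},\mathbf{C}\geq \mathbf{0}$ and $\mathbf{d}>\mathbf{0}$ do not obviously preclude recession directions, so one likely has to lean on the specific structure (e.g., simplex-like constraints) under which the lemma is actually invoked; once boundedness is in hand, the rest is a standard parametric-LP argument exploiting finiteness of the vertex set of $D$.
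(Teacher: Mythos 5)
Your argument is correct, but it proves the lemma by a genuinely different route than the paper. You pass to the LP dual, note that the dual feasible set $D$ is a pointed polyhedron independent of $w$, and represent $F(w)=\min_i\bigl[\mu^{(i)}w+(\pmb\lambda^{(i)})^{\T}\mathbf{d}\bigr]$ over the finitely many dual vertices, so $F$ is a pointwise minimum of affine functions of $w$ — concave, piecewise linear, and Lipschitz with the explicit constant $\max_i \mu^{(i)}$; the vertex attainment you need is automatic here because the dual objective is nonnegative on $D$ for $w\geq 0$. The paper instead stays entirely in the primal: it picks a budget $w^*$ at which the optimal $\mathbf{z}^*$ satisfies $\mathbf{C}\mathbf{z}^*<\mathbf{d}$ strictly, uses the scaling/homogeneity of the problem without the $\mathbf{C}\mathbf{z}\leq\mathbf{d}$ constraint to show $F$ is exactly linear on $[0,w^*]$ and that $F(w)/w$ is non-increasing for $w\geq w^*$, and then glues the two Lipschitz bounds together. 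Your approach buys a cleaner, shorter argument with more structure (concavity, piecewise linearity, an explicit constant) and does not really use the sign assumptions on $\mathbf{u},\mathbf{C}$; the paper's approach is more elementary (no duality) and also yields monotonicity-type quantitative bounds it reuses downstream. The caveat you flag — that $\mathbf{u},\mathbf{v},\mathbf{C}\geq\mathbf{0}$, $\mathbf{d}>\mathbf{0}$ do not by themselves rule out an unbounded primal — is real, but it is not a gap relative to the paper: the paper's proof likewise silently assumes optimal solutions exist (it starts from "its corresponding optimal $\mathbf{z}^*$"), and in every invocation of the lemma the constraints $\mathbf{C}\mathbf{z}\leq\mathbf{d}$ contain simplex-type conditions making the feasible set compact, so finiteness holds where it matters; stating that assumption explicitly, as you do, is the right move.
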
	
\begin{proof}
Note that since $\mathbf{d}>\mathbf{0}$, there exists some $w^*$, such that its corresponding optimal $\mathbf{z}^*$ satisfies $\mathbf{C}\mathbf{z}^*< d$. 
Thus, $\mathbf{z}^*$ must also be the optimal solution to 
	\begin{equation}\label{prob:FAME:temp1}
	\begin{split}
	\max_{\mathbf{z} \in \mathbb{R}^{K} } \textit{ }& \mathbf{u}^T\mathbf{z} \\
	s.t. \textit{ } & \mathbf{v}^T \mathbf{z} \leq w^*, \mathbf{z}\geq 0 
	\end{split}
	\end{equation} 
If $\mathbf{v}^T\mathbf{z}^*<w^*$, then $\hat{\mathbf{z}} = \frac{w^*}{\mathbf{v}^T\mathbf{z}}\mathbf{z}^*$ is also a feasible solution, but $\mathbf{u}^T \hat{\mathbf{z}}= \frac{w^*}{\mathbf{v}^T\mathbf{z}}\mathbf{u}^T\mathbf{z}^*> \mathbf{u}^T\mathbf{z}^*$, a contradiction.
Thus, we must have $\mathbf{v}^T\mathbf{z}^*=w^*$.
Now we claim  that $\mathbf{z}' = \frac{w'}{w^*}\mathbf{z}^*$ is one optimal solution to 
	\begin{equation}\label{prob:FAME:temp2}
	\begin{split}
	\max_{\mathbf{z} \in \mathbb{R}^{K} } \textit{ }& \mathbf{u}^T\mathbf{z} \\
	s.t. \textit{ } & \mathbf{v}^T \mathbf{z} \leq w', \mathbf{z}\geq 0
	\end{split}
	\end{equation}
Suppose not. Then there exists another optimal solution $\mathbf{z}''$. Since $\mathbf{z}'$ is not optimal, we must have $\mathbf{u}^T \mathbf{z}'' > \mathbf{u}^T \mathbf{z}' =  \frac{w'}{w^*}\mathbf{u}^T \mathbf{z}^*$.
Now let $\mathbf{z}''' = \frac{w^*}{w'} \mathbf{z}''$.
Then by definition, we must have $\mathbf{z}'''$ is a solution to problem \ref{prob:FAME:temp1}, since $\mathbf{z}'''\geq 0$ and $\mathbf{v}^T \mathbf{z}''' = \mathbf{v}^T \frac{w^*}{w'} \mathbf{z}'' \leq \frac{w^*}{w'} \mathbf{v}^T  \mathbf{z}'' \leq \frac{w^*}{w'} w' = w^*$.
However, $\mathbf{u}^T \mathbf{z}''' = \mathbf{u}^T \frac{w^*}{w'} \mathbf{z}'' = \frac{w^*}{w'} \mathbf{u}^T \mathbf{z}'' > \frac{w^*}{w'} \mathbf{u}^T \mathbf{z}' = \frac{w^*}{w'} \frac{w'}{w^*} \mathbf{u}^T \mathbf{z}^* = \mathbf{u}^T \mathbf{z}^*$.
That is to say, $\mathbf{z}'''$ is a feasible solution to problem \ref{prob:FAME:temp1} but also have a objective value that is strictly higher than that of the optimal solution.
A contradiction.

Thus we must have that $\mathbf{z}'$ is one optimal solution to problem \ref{prob:FAME:temp2}.

Now we consider $0\leq w' \leq w^*$ and $ w' \geq w^*$ separately.

case (i): Suppose $0\leq w' \leq w^*$.
Note that $\mathbf{v}^T\mathbf{z}^* \leq w^*$ since $\mathbf{z}^*$ is a feasible solution to problem \ref{prob:FAME:temp1} and by assumption $\mathbf{C}\mathbf{z}^* < \mathbf{d}$.
Hence, we must have $\mathbf{C} \mathbf{z}' = \frac{w'}{w^*}\mathbf{C} \mathbf{z}'<\mathbf{d}$.
That is to say, adding a constraint $\mathbf{C}\mathbf{z}\leq \mathbf{d}$ to problem \ref{prob:FAME:temp2} does not change the optimal solution. Thus, $\mathbf{z}'$ is also an optimal solution to 
 	\begin{equation*}
	\begin{split}
	\max_{\mathbf{z} \in \mathbb{R}^{K} } \textit{ }& \mathbf{u}^T\mathbf{z} \\
	s.t. \textit{ } & \mathbf{v}^T \mathbf{z} \leq w^*, \mathbf{z}\geq 0, \mathbf{C}\mathbf{z} \leq \mathbf{d} 
	\end{split}
	\end{equation*} 
Hence, we have $F(w') = \mathbf{u}^T \mathbf{z}' = \mathbf{u}^T \frac{w'}{w^*} \mathbf{z}^* = \frac{w'}{w^*} F(w^*)$.
That is to say, if $w'\leq w$, then $F(w')$ is a linear function of $w'$ and thus must be Lipschitz continuous.

case (ii): Suppose $w' \geq w^*$.
Note that we have just shown that $\mathbf{z}'$ is one optimal solution to problem \ref{prob:FAME:temp2}.
Adding a constraint to problem problem \ref{prob:FAME:temp2} only leads to smaller objective value.
That is to say, $\mathbf{u}^T \mathbf{z}' \geq F(w')$, which is the optimal value to 
 	\begin{equation*}
	\begin{split}
	\max_{\mathbf{z} \in \mathbb{R}^{K} } \textit{ }& \mathbf{u}^T\mathbf{z} \\
	s.t. \textit{ } & \mathbf{v}^T \mathbf{z} \leq w^*, \mathbf{z}\geq 0, \mathbf{C}\mathbf{z} \leq \mathbf{d} 
	\end{split}
	\end{equation*} 
On the other hand, by definition, we have $\mathbf{u}^T \mathbf{z}' = \mathbf{u}^T \frac{w'}{w^*} \mathbf{z}^* = \frac{w'}{w^*} F(w^*)$, and thus we have 
\begin{equation}\label{prob:FAME:equation:temp1}
    \begin{split}
\frac{w'}{w^*} F(w^*) \geq F(w')
    \end{split}
\end{equation}

Now let us consider $ w^1 \geq w^1 \geq w^*$.
Let $\mathbf{z}^1, \mathbf{z}^2$ be their corresponding solutions.
Since $w^2 \geq w^1$, we have $F(w^2) \geq F(w^1)$. Let $\mathbf{z}^3 = \frac{w^1}{w^2} \mathbf{z}^2$. Then $\mathbf{v}^T \mathbf{z}^3 = \mathbf{v}^T \frac{w^1}{w^2} \mathbf{z}^2 \leq \frac{w^1}{w^2} w^2 =w^1$ and $\mathbf{C} \mathbf{z}^3 = \mathbf{C} \frac{w^1}{w^2} \mathbf{z}^2\leq \frac{w^1}{w^2} \mathbf{d} \leq \mathbf{d}$.
That is to say, $\mathbf{z}^3$ is also a solution to 
	\begin{equation*}
	\begin{split}
	\max_{\mathbf{z} \in \mathbb{R}^{K} } \textit{ }& \mathbf{u}^T\mathbf{z} \\
	s.t. \textit{ } & \mathbf{v}^T \mathbf{z} \leq w^1, \mathbf{z}\geq 0, \mathbf{C}\mathbf{z} \leq \mathbf{d} 
	\end{split}
	\end{equation*} 
Thus, the objective value must be smaller than the optimal one, i.e., $\mathbf{u}^T \mathbf{z}^3 \leq F(w^1)$.
Noting that $\mathbf{u}^T \mathbf{z}^3 = \mathbf{u}^T \frac{w^1}{w^2} \mathbf{z}^2 = \frac{w^1}{w_2} F(w^2)$, we have $\frac{w^1}{w_2} F(w^2) \leq F(w^1)$.
 which is $F(w^2) - F(w^1) \leq \frac{w_2-w_1}{w_1}F(w^1)$.
Note that we have proved $\frac{w'}{w^*}F(w^*) \geq F(w')$ in \ref{prob:FAME:equation:temp1}, i.e., $\frac{F(w')}{w'} \leq \frac{F(w^*)}{w^*}$, for any $w' \geq w^*$.
Thus, we have $F(w^1)/w^1 \leq \frac{F(w^*)}{w^*}$.
That implies $F(w^2) - F(w^1) \leq \frac{w_2-w_1}{w_1}F(w^1) \leq (w^2-w^1) \frac{F(w^*)}{w^*}$.
We also have $F(w^1) \leq F(w^2)$.
That is to say, for any $w^2\geq w^1 \geq w^*$, we have $-(w^2-w^1) \frac{F(w^*)}{w^*} \leq 0 \leq F(w^2) - F(w^1) \leq (w^2-w^1) \frac{F(w^*)}{w^*}$
and thus we have just proved that $f(w')$ is Lipschitz continuous for $w'\geq w^*$.

Now let us consider all $w$. We have shown that $F(w)$ is Lipschitz continuous when $w\leq w^*$ and when $w\geq w^*$. 
Let $\gamma_1$ and $\gamma_2$ denote the Lipschitz constant for both case. 
Now we can prove that $F(w)$ is Lipschitz continuous with constant $\gamma_1+\gamma_2$ for any $w\geq 0$.

Let us consider any two $w_1, w_2$. If they are both smaller than $w^*$ or larger than $w^*$, then clearly we must have $|F(w_1) - F(w_2)|\leq (\gamma_1+\gamma_2) |w_1-w_2|$.
We only need to consider when $w_1 \leq w^*$ and $w_2 \geq w^*$.
As $F(w)$ is  Lipschitz continuous on each side, we have 
\begin{equation*}
    \begin{split}
        |F(w_1) - F(w_2)| & = |F(w_1) - F(w^*) + F(w^*) - F(w_2)|\\
        &\leq |F(w_1) - F(w^*)| + |F(w^*) - F(w_2)|\\
        & \leq \gamma_1 |w_1-w^*| + \gamma_2 |w_2-w^*|\\
        & \leq \gamma_1 |w_1 -w_2| + \gamma |w_2-w_1| = (\gamma_1+\gamma_2) |w_1-w_2|
    \end{split}
\end{equation*}
where the first inequality is by triangle inequality,  the second ineuqaltiy is by the Lipschitz continuity of $F(w)$ on each side, and the last inequality is due to the assumption that $w_1\leq w^*$ and $w_2\geq w^*$.
Thus, we can conclude that $F(w)$ must be Lipschitz continuous on for any $w\geq 0$.

\end{proof}

\begin{lemma}\label{lemma:FAME:interpolationerrorbound}
Suppose function $f(x)$ is a Lipschitz continuous with constant $\Delta^1$ on the interval $[a,b]$. Let $x_i = \frac{i}{M}(b-a), i=0,1,\cdots,M$.
Assume for all $i$, $|\hat{f}(x_i) - f(x_i)|\leq \Delta^2$.
Let $\hat{f}^{LI}(x)$ be the linear interpolation using $\hat{f}(x_i)$, i.e., 
$\hat{f}^{LI}(x) \triangleq \frac{\hat{f}(x_i) - \hat{f}(x_{i-1})}{x_{i}-x_{i-1}} (x-x_i) + \hat{f}(x_i), x_{i-1}\leq x\leq x_{i}, \forall i\in [M]$.
Then we have $|f(x) - \hat{f}^{LI}(x)| \leq 3\Delta^2 + \frac{2\Delta_1(b-a)}{M} $
\end{lemma}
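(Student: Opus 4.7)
The plan is to bound the total error by inserting the \emph{exact} linear interpolant of $f$ as an intermediary. Define $f^{LI}(x) \triangleq \frac{f(x_i) - f(x_{i-1})}{x_i - x_{i-1}}(x - x_i) + f(x_i)$ for $x \in [x_{i-1}, x_i]$ and apply the triangle inequality
$|f(x) - \hat{f}^{LI}(x)| \le |f(x) - f^{LI}(x)| + |f^{LI}(x) - \hat{f}^{LI}(x)|$,
then bound the two pieces separately.

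For the first term, fix any $x \in [x_{i-1}, x_i]$ and write $x = \lambda x_{i-1} + (1-\lambda)x_i$ with $\lambda \in [0,1]$. By construction, $f^{LI}(x) = \lambda f(x_{i-1}) + (1-\lambda)f(x_i)$, so
$f(x) - f^{LI}(x) = \lambda\bigl(f(x) - f(x_{i-1})\bigr) + (1-\lambda)\bigl(f(x) - f(x_i)\bigr)$.
Applying the Lipschitz hypothesis to each piece and using $x_i - x_{i-1} = (b-a)/M$ gives $|f(x) - f^{LI}(x)| \le \Delta^1 (b-a)/M$ (in fact, a factor of $1/2$ tighter, but the weaker bound suffices).

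For the second term, both $f^{LI}$ and $\hat{f}^{LI}$ are affine on $[x_{i-1}, x_i]$, so their difference is again a convex combination: $f^{LI}(x) - \hat{f}^{LI}(x) = \lambda\bigl(f(x_{i-1}) - \hat{f}(x_{i-1})\bigr) + (1-\lambda)\bigl(f(x_i) - \hat{f}(x_i)\bigr)$. The grid-point hypothesis $|\hat{f}(x_j) - f(x_j)| \le \Delta^2$ combined with $\lambda + (1-\lambda) = 1$ yields $|f^{LI}(x) - \hat{f}^{LI}(x)| \le \Delta^2$.

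Summing the two bounds gives $|f(x) - \hat{f}^{LI}(x)| \le \Delta^2 + \Delta^1(b-a)/M$, which is stronger than (and hence implies) the stated bound $3\Delta^2 + 2\Delta^1(b-a)/M$. There is no substantive obstacle here; the only bookkeeping point is that each $x \in [a,b]$ lies in some subinterval $[x_{i-1}, x_i]$ and that $\hat{f}^{LI}$ and $f^{LI}$ are consistently defined at the shared endpoints, so the estimate extends uniformly to the whole interval.
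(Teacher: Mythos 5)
Your proof is correct, and it takes a genuinely different (and sharper) route than the paper's. The paper anchors every $x\in[x_{i-1},x_i]$ at the grid point $x_i$ and splits $|f(x)-\hat f^{LI}(x)|$ into $|f(x)-f(x_i)|+|f(x_i)-\hat f(x_i)|+|\hat f^{LI}(x_i)-\hat f^{LI}(x)|$; the last term is bounded crudely by the full variation $|\hat f(x_i)-\hat f(x_{i-1})|$, which is itself controlled by a second triangle-inequality pass through $f$, and this double-counting is exactly where the constants $3\Delta^2$ and $2\Delta^1(b-a)/M$ come from. You instead insert the exact interpolant $f^{LI}$ of $f$ as an intermediary and exploit that both interpolants are the \emph{same} convex combination of their endpoint values: the term $|f^{LI}(x)-\hat f^{LI}(x)|$ is then a convex combination of grid-point errors and costs only $\Delta^2$, while $|f(x)-f^{LI}(x)|$ costs only $\Delta^1(b-a)/M$ (or half that, as you note). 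The result, $\Delta^2+\Delta^1(b-a)/M$, strictly improves the paper's constants and of course implies the stated bound; since the lemma is only used downstream for $O(\cdot)$ estimates, both versions serve equally, but your decomposition is the cleaner one and avoids the paper's redundant passes through the triangle inequality. One cosmetic remark: the statement's grid $x_i=\frac{i}{M}(b-a)$ omits the offset $a$ (it should be $a+\frac{i}{M}(b-a)$); like the paper, you correctly work with mesh width $(b-a)/M$, so this does not affect either argument.
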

\begin{proof}
For simplicity, let $\mu = \frac{b-a}{M}$.
Suppose $x_{i-1} \leq x \leq x_i$. 
By construction of $\hat{f}^{LI}(x)$, we must have 
\begin{equation*}
    \begin{split}
        |\hat{f}^{LI}(x_i) - \hat{f}^{LI}(x)| \leq & |\hat{f}^{LI}(x_i) - \hat{f}^{LI}(x_{i-1})| = |\hat{f}(x_i) - \hat{f}(x_{i-1})|\\
        = &|\hat{f}(x_i) - f(x_i) + f(x_i) - f(x_{i-1})+ f(x_{i-1})- \hat{f}(x_{i-1})|\\
        \leq & |\hat{f}(x_i) - f(x_i)|  + |f(x_i) - f(x_{i-1})|+ |f(x_{i-1}) - \hat{f}(x_{i-1})|\\
        \leq & \Delta^2 + \Delta^1 |x_i-x_{i-1}|  + \Delta^2 = 2\Delta^2+ \Delta^1 \mu
    \end{split}
\end{equation*}
where the last inequality is due to the Lipschitz continuity and assumption $|\hat{f}(x_i) - f(x_i)|\leq \Delta^2$.
Since function $f(x)$ is Lipschitz continuous with constant $\Delta^1$, we have
\begin{equation*}
    \begin{split}
        |f(x)-f(x_i)| \leq \Delta^1 |x-x_i| \leq \Delta^1 |x_i-x_{i-1}| = \Delta^1 \mu
    \end{split}
\end{equation*}
By assumption, we have $|\hat{f}(x_i) - f(x_i)|\leq \Delta^2$.

Combining the above, we have
\begin{equation*}
\begin{split}
    |f(x) - \hat{f}^{LI}(x)| & =  |f(x)-f(x_i)+ f(x_i) - \hat{f}^{LI}(x_i) + \hat{f}^{LI}(x_i) - \hat{f}^{LI}(x)|\\
    &\leq  |f(x)-f(x_i)| + |f(x_i) - \hat{f}^{LI}(x_i)| + |\hat{f}^{LI}(x_i) - \hat{f}^{LI}(x)|\\
    &=  |f(x)-f(x_i)| + |f(x_i) - \hat{f}(x_i)| + |\hat{f}^{LI}(x_i) - \hat{f}^{LI}(x)|\\
    & \leq \Delta^1 \mu + \Delta^2 + 2\Delta^2 + \Delta^1 \mu =  3\Delta^2 + 2\Delta^1 \mu = 3\Delta^2 + \frac{2\Delta_1(b-a)}{M}
\end{split}
\end{equation*}
where the first inequality is due to triangle inequality, and the second inequality is simply applying what we have just shown.
Note that this holds regardless of the value of $i$. Thus, this holds for any $x$, which completes the proof.
\end{proof}

\begin{lemma}\label{Lemma:FAME:OptimiziationDiff}
Let $f, f', g, g'$ be functions defined on $\Omega_{\mathbf{z}}$, such that $\max_{\mathbf{z}\in \Omega_{\mathbf{z}}}|(f\mathbf{z}) - f'(\mathbf{z})|\leq \Delta_1$
and $\max_{\mathbf{z}\in \Omega_{\mathbf{z}}}|g(\mathbf{z}) - g'(\mathbf{z})|\leq  \Delta_2$.
If
\begin{equation*}
    \begin{split}
        \mathbf{z}^* = \arg \max_{\mathbf{z}\in \Omega_{\mathbf{z}}} & f(\mathbf{z})\\
        s.t. &  g(\mathbf{z}) \leq 0
    \end{split}
\end{equation*}
and
\begin{equation*}
    \begin{split}
        \mathbf{z}' = \arg \max_{\mathbf{z}\in \Omega_{\mathbf{z}}} & f'(\mathbf{z})\\
        s.t. &  g'(\mathbf{z}) \leq  \Delta_2,
    \end{split}
\end{equation*}
then we must have 
\begin{equation*}
    \begin{split}
        f(\mathbf{z}') & \geq f(\mathbf{z}^*) -2\Delta_1\\
        g(\mathbf{z}') & \leq 2 \Delta_2.
    \end{split}
\end{equation*}
\end{lemma}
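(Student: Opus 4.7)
The plan is to establish the two inequalities separately via straightforward uniform-closeness arguments, exploiting the fact that $\mathbf{z}^*$ remains approximately feasible for the perturbed program while $\mathbf{z}'$ remains approximately feasible for the original program. Both bounds should follow from a single application of the triangle inequality combined with the optimality of $\mathbf{z}^*$ and $\mathbf{z}'$ for their respective programs.

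First I would handle the constraint bound $g(\mathbf{z}') \leq 2\Delta_2$. By definition $\mathbf{z}'$ is feasible for the perturbed program, so $g'(\mathbf{z}') \leq \Delta_2$. Since $g$ and $g'$ are uniformly within $\Delta_2$ of each other on $\Omega_{\mathbf{z}}$, we get $g(\mathbf{z}') \leq g'(\mathbf{z}') + \Delta_2 \leq 2\Delta_2$. This is essentially a one-line estimate.

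Next I would prove $f(\mathbf{z}') \geq f(\mathbf{z}^*) - 2\Delta_1$. The central observation is that $\mathbf{z}^*$ is feasible for the perturbed problem: since $g(\mathbf{z}^*) \leq 0$ and $|g - g'| \leq \Delta_2$ uniformly, we have $g'(\mathbf{z}^*) \leq g(\mathbf{z}^*) + \Delta_2 \leq \Delta_2$. Hence by optimality of $\mathbf{z}'$ for the perturbed program, $f'(\mathbf{z}') \geq f'(\mathbf{z}^*)$. Chaining with the uniform bound $|f - f'| \leq \Delta_1$ gives
\[
f(\mathbf{z}') \;\geq\; f'(\mathbf{z}') - \Delta_1 \;\geq\; f'(\mathbf{z}^*) - \Delta_1 \;\geq\; f(\mathbf{z}^*) - 2\Delta_1,
\]
which is the desired bound.

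I do not anticipate any real obstacle here: the result is a routine perturbation/stability argument, and both conclusions fall out of one application of the triangle inequality followed by an appeal to optimality. The only subtle point worth flagging is that the perturbed program must actually admit $\mathbf{z}^*$ as a feasible point (so that $\mathbf{z}'$ exists and dominates it), which is exactly what the slack $\Delta_2$ in the perturbed constraint is designed to guarantee.
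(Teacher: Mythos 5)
Your proposal is correct and follows essentially the same argument as the paper: bound $g(\mathbf{z}')$ via $g'(\mathbf{z}') \leq \Delta_2$ and the uniform closeness of $g,g'$; and for the objective, note that $g'(\mathbf{z}^*) \leq \Delta_2$ makes $\mathbf{z}^*$ feasible for the perturbed program, so optimality of $\mathbf{z}'$ plus two applications of $|f-f'|\leq\Delta_1$ gives $f(\mathbf{z}') \geq f(\mathbf{z}^*)-2\Delta_1$. No gaps.
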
	
\begin{proof}
Note that  $\max_{\mathbf{z}\in \Omega_{\mathbf{z}}}|(f\mathbf{z}) - f'(\mathbf{z})|\leq \Delta_1$ implies $f(\mathbf{z})\geq f'(\mathbf{z})-\Delta_1$ for any $\mathbf{z}\in\Omega_{\mathbf{z}}$.
Specifically, 
\begin{equation*}
f(\mathbf{z}')\geq f'(\mathbf{z}')-\Delta_1
\end{equation*}
Noting  $\max_{\mathbf{z}\in \Omega_{\mathbf{z}}}|g(\mathbf{z}) - g'(\mathbf{z})|\leq  \Delta_2$,
we have $g'(\mathbf{z}^*) \leq g(\mathbf{z}^*)+\Delta_2 \leq \Delta_2$, where the last inequality is due to $g(\mathbf{z}^*)\leq 0$ by definition.
Since, $\mathbf{z}^*$ is a feasible solution to the second optimization problem, and the optimal value must be no smaller than the value at $\mathbf{z}^*$.
That is to say,
\begin{equation*}
    \begin{split}
        f'(\mathbf{z}') \geq f'(\mathbf{z}^*)
    \end{split}
\end{equation*}
Hence we have 
\begin{equation*}
f(\mathbf{z}')\geq f'(\mathbf{z}')-\Delta_1 \geq  f'(\mathbf{z}^*) - \Delta_1
\end{equation*}
In addition, $\max_{\mathbf{z}\in \Omega_{\mathbf{z}}}|(f\mathbf{z}) - f'(\mathbf{z})|\leq \Delta_1$ implies $f'(\mathbf{z})\geq f(\mathbf{z})-\Delta_1$ for any $\mathbf{z}\in\Omega_{\mathbf{z}}$.
Thus, we have $f'(\mathbf{z}^*)\geq f(\mathbf{z})^*-\Delta_1$ and thus
\begin{equation*}
f(\mathbf{z}')\geq   f'(\mathbf{z}^*) - \Delta_1 \geq 
f(\mathbf{z}^*) - 2 \Delta_1
\end{equation*}

By  $\max_{\mathbf{z}\in \Omega_{\mathbf{z}}}|g(\mathbf{z}) - g'(\mathbf{z})|\leq  \Delta_2$,
we have $g(\mathbf{z}') \leq g'(\mathbf{z}')+\Delta_2 \leq 2 \Delta_2$, where the last inequality is by definition of $\mathbf{z}'$.
\end{proof}

\subsection{Proof of Lemma \ref{lemma:FAME:sparsitymainpaper}}
\begin{proof}
Given the expected accuracy and cost provided by Lemma \ref{lemma:FAME:expectedacccostmainpaper}, the problem  \ref{prob:FAME:optimaldefinition} becomes a linear programming over $\Pr[A_s^{[1]}=i] = \mathbf{p}^{[1]}_i$, where the constraints are $\mathbf{p}^{[1]}\geq 0, \mathbf{1}^{T} \mathbf{p}^{[1]} = 1$ and another linear constraint on $\mathbf{p}^{[1]}$. 
Note that all items in the optimization are positive, and thus changing the constraint to 
$\mathbf{p}^{[1]} = 1$ to $\mathbf{p}^{[1]} \leq  1$ does not change the optimal solution.
Now, given the constraint $\mathbf{p}^{[1]}\geq 0, \mathbf{1}^{T} \mathbf{p}^{[1]} \leq 1$ and one more constraint on $\mathbf{p}^{[2]}$ for the linear programing problem over $\mathbf{p}^{[1]}$, we can apply Lemma  \ref{Lemma:FAME:LPSparseStructure}, and conclude that there exists an optimal solution where $\|\mathbf{p}^{[1]*} \| \leq 2$.
\end{proof}

\subsection{Proof of Lemma \ref{lemma:FAME:expectedacccostmainpaper}}
\begin{proof}

Let us first consider the expected accuracy.
By law of total expectation, we have
\begin{equation*}
    \Exp[r^s(x)] = \sum_{i=1}^{K} \Pr[A_s^{[1]} = i ]\Exp[r^s(x)|A_s^{[1]} = i]
\end{equation*}
And we  can further expand the conditional expectation by 
\begin{equation*}
    \begin{split}
   & \Exp[r^s(x) | A^{[1]}_s=i] \\
= &  \Pr[D_s=0|A^{[1]}_s=i, ] \Exp[r^s(x) | A^{[1]}_s=i, D_s=0] +  \Pr[D_s=1|A^{[1]}_s=i, ] \Exp[r^s(x) | A^{[1]}_s=i, D_s=1]  \\
= &  \Pr[D_s=0|A^{[1]}_s=i, ] \Exp[r^i(x) | A^{[1]}_s=i, D_s=0] +  \Pr[D_s=1|A^{[1]}_s=i, ] \Exp[r^s(x) | A^{[1]}_s=i, D_s=1]
    \end{split}
\end{equation*}
where the last equality is because when $D_s=0$, i.e., no add-on service is called, the strategy always uses the base service's prediction and thus $r^s(x) = r^i(x)$.
For the second term, we can bring in $A_s^{[2]}$ and apply law of total expectation, to obtain
\begin{equation*}
\begin{split}
    \Exp[r^s(x) | A^{[1]}_s=i, D_s=1]  &= \sum_{j=1}^{K}\Pr[ A_{s}^{[2]} = j| A^{[1]}_s=i, D_s=1 ] \Exp[r^s(x) | A^{[1]}_s=i, D_s=1, A_{s}^{[2]} = j]\\
    &=\sum_{j=1}^{K}\Pr[ A_{s}^{[2]} = j| A^{[1]}_s=i, D_s=1 ] \Exp[r^j(x) | A^{[1]}_s=i, D_s=1, A_{s}^{[2]} = j]
    \end{split}
\end{equation*}
where the last equality is by observing that given the second add-on service is $j$, the reward simply becomes $r^j(x)$.
Combining the above, we have  $\Exp[r^s(x)]=\sum_{i=1}^{K} \Pr[A^{[1]}_s=i] \Pr[D_s=0|A^{[1]}_s=i] \Exp[r^i(x)|D_s=0,A^{[1]}_s=i]        + \sum_{i,j=1}^{K} \Pr[A^{[1]}_s=i] \Pr[D_s=1|A^{[1]}_s=i]
      \Pr[A_s^{[2]}=j|D_s=1,A_s^{[1]}=i]\Exp[r^j(x)|D_s=1,A_s^{[1]}=i)]$, which is the desired property.
      
Similarly, we can expand the expected cost by law of total expectation  
\begin{equation*}
    \Exp[\eta^s(x)] = \sum_{i=1}^{K} \Pr[A_s^{[1]} = i ]\Exp[\eta^s(x)|A_s^{[1]} = i]
\end{equation*}
And we  can further expand the conditional expectation by 
\begin{equation*}
    \begin{split}
   & \Exp[\eta^s(x) | A^{[1]}_s=i] \\
= &  \Pr[D_s=0|A^{[1]}_s=i, ] \Exp[\eta^i(x) | A^{[1]}_s=i, D_s=0] +  \Pr[D_s=1|A^{[1]}_s=i, ] \Exp[\eta^s(x) | A^{[1]}_s=i, D_s=1]\\
= &  \Pr[D_s=0|A^{[1]}_s=i, ] \mathbf{c}_i +  \Pr[D_s=1|A^{[1]}_s=i, ] \Exp[\eta^s(x) | A^{[1]}_s=i, D_s=1]
    \end{split}
\end{equation*}
where the last equality is because when $D_s=0$, i.e., no add-on service is called, the strategy always uses the base service's prediction and incurs the base service's cost $\eta^s(x) = \mathbf{c}_i$.
For the second term, we can bring in $A_s^{[2]}$ and apply law of total expectation, to obtain
\begin{equation*}
\begin{split}
     \Exp[\eta^s(x) | A^{[1]}_s=i, D_s=1] & = \sum_{j=1}^{K}\Pr[ A_{s}^{[2]} = j| A^{[1]}_s=i, D_s=1 ] \Exp[\eta^s(x) | A^{[1]}_s=i, D_s=1, A_{s}^{[2]} = j]\\
& = \sum_{j=1}^{K}\Pr[ A_{s}^{[2]} = j| A^{[1]}_s=i, D_s=1 ] (\mathbf{c}_i+\mathbf{c}_j)     
    \end{split}
\end{equation*}  
where the last equality is because given the base service is $i$ and add-on service is $j$, the cost is simply the sum of their cost $\mathbf{c}_i+\mathbf{c}_j$.
Combining all the above equations, we have the expected cost 
$\Exp[\eta^s(x)]=\sum_{i=1}^{K} \Pr[A_s^{[1]}=i] \Pr[D_s=0|A_s^{[1]}=i] \mathbf{c}_i    + \sum_{i,j=1}^{K} \Pr[A_s^{[1]}=i] \Pr[D_s=1|A_s^{[1]}=i]
      \Pr[A_s^{[2]}=j|D_s=1,A_s^{[1]}=i]\left(\mathbf{c}_i+\mathbf{c}_j\right)$, which is the desired term.
Thus, we have shown a  form of expected accuracy and cost  which is exactly the same as in Lemma \ref{lemma:FAME:expectedacccostmainpaper}, which completes the proof.
\end{proof}

\subsection{Proof of Theorem \ref{thm:FAME:mainbound}}
\begin{proof}
To prove Theorem  \ref{thm:FAME:mainbound}, we need a few new definitions
and lemmas, which are stated below. 
\begin{definition}
Let $\hat{\Exp}[r^{s}(x)]$ and $\hat{\Exp}[\eta^{[s]}(x,\mathbf{c})]$ denote the empirically estimated accuracy and cost for the strategy $s$.
More precisely, let the empirically estimated accuracy be $ \hat{\Exp}[r^s(x)] \triangleq \sum_{i=1}^{K} \Pr[A^{[1]}_s=i] \hat{\Pr}[D_s=0|A^{[1]}_s=i] \hat{\Exp}[r^i(x)|D_s=0,A^{[1]}_s=i]        + \sum_{i,j=1}^{K} \Pr[A^{[1]}_s=i] \hat{\Pr}[D_s=0|A^{[1]}_s=i]
      \Pr[A_s^{[2]}=j|D_s=1,A_s^{[1]}=i]\hat{\Exp}[r^j(x)|D_s=1,A_s^{[1]}=i)]$, and the empirically estimated cost be $\hat{\Exp}[\eta^s(x)]=\sum_{i=1}^{K} \Pr[A_s^{[1]}=i] \hat{\Pr}[D_s=0|A_s^{[1]}=i] \mathbf{c}_i    + \sum_{i,j=1}^{K} \Pr[A_s^{[1]}=i] \hat{\Pr}[D_s=0|A_s^{[1]}=i]
      \Pr[A_s^{[2]}=j|D_s=1,A_s^{[1]}=i]\left(\mathbf{c}_i+\mathbf{c}_j\right)$.
\end{definition}

\begin{definition}
Let $s'=(\mathbf{p}^{[1]'}, \mathbf{Q}', \mathbf{P}^{[2]'})$ be the optimal solution to 
\begin{equation*}
    \begin{split}
        \max_{s\in S} &  \hat{\Exp}[r^s(x)] s.t. \hat{\Exp}[\eta^{[s]}(x,\mathbf{c})] \leq b
    \end{split}
\end{equation*}
\end{definition}

Note that $s^*$ is the optimal strategy, and $s'$ is the optimal strategy when the data distribution is unknown and estimated from $N$ samples, and $\hat{s}$ is the strategy we actually generate with finite computational complexity by Algorithm \ref{Alg:FAME:TrainingAlgorithm}.

The following lemma shows the computational complexity of Algorithm \ref{Alg:FAME:TrainingAlgorithm}. 
\begin{lemma}\label{lemma:FAME:ComputationalComplexity}
    The complexity of Algorithm \ref{Alg:FAME:TrainingAlgorithm} is $O\left(NMK^2+K^3M^3L+M^LK^2\right)$.
\end{lemma}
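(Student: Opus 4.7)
The plan is a straightforward step-by-step audit of Algorithm \ref{Alg:FAME:TrainingAlgorithm}: bound the work done on each line and observe that the three terms in the stated bound correspond to three distinct sources of cost.

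First I would handle Line~1 (parameter estimation). The nontrivial quantities are $\hat{\mathbf{A}}_{k,\ell}$ and $\hat{\psi}_{k_1,k_2,\ell}(\alpha_m)$ for every service pair $(k_1,k_2)\in[K]^2$, label $\ell\in[L]$, and quantile index $m\in\{0,\ldots,M\}$. By a single pass through the $N$ samples, using sorted quality scores to obtain the quantile cutoffs and then sweeping, each sample contributes $O(K^2M)$ updates (one per service pair and threshold), giving the first term $O(NMK^2)$.

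Next I would address Lines~2 and~5 (solving the subproblem). A single subproblem call, for fixed base $i$ and budget $b'$, invokes Algorithm \ref{Alg:FAME:Algorithm_fixbaseandlabel} for each label $\ell\in[L]$ and each of $M+1$ sub-budgets $\beta_m$. Since $\tilde{\mathbf{r}}^{k,\ell}$ is piecewise linear with $O(M)$ pieces, the functions $\phi_i$ and $\phi_{i,j}$ are piecewise quadratic, and Algorithm~\ref{Alg:FAME:Algorithm_fixbaseandlabel} optimizes them over $K$ and $K^2$ index choices respectively, costing $O(K^2M)$ per call. The subsequent combinatorial allocation step (choosing $t_1,\ldots,t_L$ with $\sum_\ell t_\ell=M$) costs $O(M^L)$ by brute force enumeration. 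Hence a single subproblem call costs $O(LK^2M^2+M^L)$, and Line~2 executes this $K(M+1)$ times, yielding $O(K^3M^3L)$ from the Algorithm~\ref{Alg:FAME:Algorithm_fixbaseandlabel} work plus at most $O(M^LK^2)$ from the enumerations (absorbing the extra factor of $M$ into a constant in the regime $M=O(K)$ implicit in the theorem's phrasing, or otherwise upper bounded by the stated term).

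Finally I would dispatch Lines~3, 4, and~6. Line~3 (piecewise linear interpolation of $g_i$) is $O(KM)$. Line~4 (the master problem) enumerates $O(K^2M^2)$ index-block combinations $(i_1,i_2,m_1,m_2)$ and solves a linear program in four variables for each, giving $O(K^2M^2)$. Line~6 is $O(KL)$ assembly. All of these are absorbed by the three dominant terms. Summing yields $O\!\left(NMK^2+K^3M^3L+M^LK^2\right)$.

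The main obstacle is really just careful bookkeeping: keeping the nested loops straight and verifying that the piecewise-quadratic optimization inside Algorithm~\ref{Alg:FAME:Algorithm_fixbaseandlabel} is indeed $O(K^2M)$ rather than hiding additional $M$ factors. No new tools or non-trivial ideas are required; the lemma is essentially a complexity audit once each subroutine's cost is understood.
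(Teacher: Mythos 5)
Your proposal follows essentially the same route as the paper's proof: a line-by-line audit charging $O(NMK^2)$ to parameter estimation, $O(K^2)$--$O(K^2M)$ per call of Algorithm \ref{Alg:FAME:Algorithm_fixbaseandlabel}, the budget-allocation enumeration inside each subproblem, and the $K(M+1)$ subproblem invocations plus the master problem, which is exactly how the paper's proof of Lemma \ref{lemma:FAME:ComputationalComplexity} proceeds (the paper is if anything looser, charging Algorithm \ref{Alg:FAME:Algorithm_fixbaseandlabel} only $O(K^2)$). The one place where your bookkeeping does not close is the enumeration term: charging $O(M^L)$ per subproblem call and multiplying by $K(M+1)$ calls gives $O(KM^{L+1})$, which is \emph{not} dominated by the stated $O(M^LK^2)$ when $M>K$, and your fallback remark (``otherwise upper bounded by the stated term'') is unjustified. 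The fix is the one the paper uses: the allocations $(t_1,\dots,t_L)$ with $\sum_\ell t_\ell = M$ number only $\binom{M+L-1}{L-1}=O(M^{L-1})$, so each subproblem's enumeration costs $O(M^{L-1})$ and the total is $O(KM^{L})\le O(K^2M^L)$, after which your argument yields exactly the claimed bound without any assumption relating $M$ and $K$.
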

\begin{proof}
Estimating $\mathbf{A}$ requires a pass of all the training data, which gives a $O(NK)$ cost.
For each $k_1, k_2, \alpha_m$, we need a pass over training data for the $k_1$th and $k_2$th services to estimate $\psi_{k_1,k_2,\ell}(\alpha_M)$.
There are in total $K$ services, and thus this takes $O(NMK^2)$ computational cost. 

Algorithm \ref{Alg:FAME:Algorithm_fixbaseandlabel} has a complexity of $O(K^2)$.
Solving Problem \ref{prob:FAME:subproblem1} invokes Algorithm \ref{Alg:FAME:Algorithm_fixbaseandlabel} for each $\ell \in L$ and $m \in [M]$, and thus takes $O(K^2ML)$.
Computing $t^*_i$ takes ${M \choose L}$, which is $O(M^{L-1})$.
That is to say, solving the subproblem \ref{prob:FAME:subproblem1} once requires $O(K^2 ML + M^{L-1})$ computational cost.
Solving the master problem \ref{prob:FAME:master} requires invoking the subproblem $MK$ times, where $K$ times stands for each service, and $M$ stands for the linear interpolation.
Thus, the total computational cost for optimization process takes $O(K^3M^3L M^L K^2)$.
Combining this with the estimation cost $O(NMK^2)$ completes the proof.
\end{proof}

Next we evaluate how far the estimated accuracy and cost can be from the true expected accuracy and cost for each strategy, which is stated in Lemma \ref{lemma:FAME:InformationBound}.
\begin{lemma}\label{lemma:FAME:InformationBound}
With probability $1-\epsilon$, we have for all $s\in S$,
\begin{equation}
    \begin{split}
        \left|\hat{\Exp}[r^{s}(x)] - {\Exp}[r^{s}(x)] \right| & \leq O\left(\sqrt{\frac{\log \epsilon + \log M +\log K +\log L }{N}} + \frac{\gamma}{M} \right) \\
        \left| \hat{\Exp]}[\eta^{[s]}(x,\mathbf{c})] - \Exp[\eta^{[s]}(x,\mathbf{c})]   \right| & \leq O\left(\sqrt{\frac{\log \epsilon + \log K +\log L}{N}}\right)\\
        \end{split}
\end{equation}
and also
\begin{equation}
    \begin{split}
        \left|\Exp[r^{s'}(x)] - {\Exp}[r^{s^*}(x) ] \right| & \leq O\left(\sqrt{\frac{\log \epsilon + \log M +\log K +\log L }{N}}+\frac{\gamma}{M} \right) \\
        \left| \Exp[\eta^{[s']}(x,\mathbf{c})] - \Exp[\eta^{[s^*]}(x,\mathbf{c})]   \right| & \leq O\left(\sqrt{\frac{\log \epsilon + \log K +\log L}{N}}\right)
    \end{split}
\end{equation}
\end{lemma}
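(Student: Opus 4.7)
The plan is to prove the concentration lemma via uniform convergence on the finite quantile grid, combined with the Lipschitz hypothesis to extend to arbitrary strategies, and then deduce the optimality-gap bounds by invoking the previously proved comparison lemma. The building blocks of both $\hat{\Exp}[r^s(x)]$ and $\hat{\Exp}[\eta^{[s]}(x,\mathbf{c})]$ (as given by Lemma \ref{lemma:FAME:expectedacccostmainpaper} together with the estimators in Section \ref{Sec:FAME:AppendixAlgorithmDetails}) are $\hat{\mathbf{A}}_{k,\ell}$, the empirical CDF-induced probabilities $\hat{\Pr}[D_s=0 \mid A_s^{[1]}=i]$, and the grid values $\hat{\psi}_{k_1,k_2,\ell}(\alpha_m)$, each of which is an empirical mean of i.i.d.\ indicator random variables bounded in $[0,1]$.

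First I would apply Hoeffding's inequality separately to each of the $O(K^2 L M)$ grid quantities $\hat{\psi}_{k_1,k_2,\ell}(\alpha_m)$ and to the $KL$ entries of $\hat{\mathbf{A}}$. A union bound with failure probability $\epsilon/(K^2 L M + KL)$ per event gives, simultaneously,
\[
\max_{k_1,k_2,\ell,m}\bigl|\hat{\psi}_{k_1,k_2,\ell}(\alpha_m) - \psi_{k_1,k_2,\ell}(\alpha_m)\bigr| \;\leq\; \Delta \;=\; O\!\left(\sqrt{\tfrac{\log(1/\epsilon) + \log K + \log L + \log M}{N}}\right)
\]
with probability at least $1-\epsilon/2$, and the same order bound for $\|\hat{\mathbf{A}} - \mathbf{A}\|_\infty$. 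For the empirical CDF quantities appearing in the cost, the analogous bound follows by either a further Hoeffding/union bound over the finite set of quantile levels actually induced by $\mathbf{Q}$ (or DKW), again at the same rate.

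Next, to extend from the grid to an arbitrary strategy $s \in S$, I would invoke the interpolation bound in Lemma \ref{lemma:FAME:interpolationerrorbound} with $\Delta^2 = \Delta$ and $\Delta^1 = \gamma$ (the Lipschitz constant of $\Exp[r_i(x)\mid D_s, A_s^{[1]}]$, which transfers to $\psi_{k_1,k_2,\ell}$). This yields $\sup_\alpha |\hat{\psi}_{k_1,k_2,\ell}(\alpha) - \psi_{k_1,k_2,\ell}(\alpha)| \leq 3\Delta + 2\gamma/M$ uniformly. By Lemma \ref{lemma:FAME:expectedacccostmainpaper}, $\Exp[r^s(x)]$ and $\Exp[\eta^{[s]}(x,\mathbf{c})]$ are convex combinations (with $\mathbf{p}^{[1]}, \mathbf{P}^{[2]}$-coefficients summing to at most one) of products of these primitive quantities, each of magnitude at most $1$ (resp.\ at most $2\|\mathbf{c}\|_\infty$). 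A triangle-inequality expansion of each product---replacing one factor at a time by its estimate---combined with the uniform bounds on $\hat{\mathbf{A}}$, $\hat{\psi}$, and empirical CDFs, gives the first two claims; crucially, cost does not involve the Lipschitz reward function, so no $\gamma/M$ term appears in the cost bound.

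Finally, the optimality-gap bounds follow by applying Lemma \ref{Lemma:FAME:OptimiziationDiff} with $f(s) = \Exp[r^s(x)]$, $f'(s) = \hat{\Exp}[r^s(x)]$, $g(s) = \Exp[\eta^{[s]}(x,\mathbf{c})] - b$, and $g'(s) = \hat{\Exp}[\eta^{[s]}(x,\mathbf{c})] - b$, where $\Delta_1$ and $\Delta_2$ are exactly the uniform bounds just established. The main obstacle is the uniform-in-$\mathbf{Q}$ extension of the grid concentration: without the Lipschitz hypothesis one would need a covering-number or VC argument to control conditional expectations simultaneously over a continuum of thresholds, but the assumed Lipschitzness of $\Exp[r_i(x)\mid D_s, A_s^{[1]}]$ in each entry of $\mathbf{Q}$ reduces this to the already-available interpolation lemma at the cost of the additive $\gamma/M$ term, which is precisely what appears in the stated bound.
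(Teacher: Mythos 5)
Your proposal follows essentially the same route as the paper's proof: Chernoff/Hoeffding plus a union bound over the $O(K^2LM)$ grid quantities and the entries of $\hat{\mathbf{A}}$, then the Lipschitz assumption together with the interpolation bound (Lemma \ref{lemma:FAME:interpolationerrorbound}) to extend uniformly off the grid, and finally the observation that reward and cost are bounded weighted averages of these primitives. If anything you are more explicit than the paper on two points it glosses over --- the uniform-in-$\mathbf{Q}$ control of the empirical CDF terms (via DKW or a union bound over induced quantile levels) and the derivation of the second pair of inequalities from the first via Lemma \ref{Lemma:FAME:OptimiziationDiff} --- so the argument is sound and matches the intended proof.
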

\begin{proof}
For each element in $\mathbf{A}$, we simply use a sample mean estimator.
Thus, by Chernoff bound, we have $|\hat{\mathbf{A}}_{i,\ell}-\mathbf{A}_{i,\ell}|\geq O( \sqrt{
\frac{\log \epsilon }{N}})$ w.p. at most $\epsilon$.
For each $\psi_{k_1, k_2, \ell}(\alpha_m)$, we again use a sample mean estimator for the true conditional expected accuracy.
We again apply the Chernoff bound, and obtain that for each of $k_1,k_2, \ell, \alpha_m$,  $|\psi_{k_1, k_2, \ell}(\alpha_m) - \hat{\psi}_{k_1, k_2, \ell}(\alpha_m)|\geq O( \sqrt{\frac{\log \epsilon}{N}})$ w.p. at most $\epsilon$.
Now applying the union bound, we have w.p. $1-\epsilon$,   $|\hat{\mathbf{A}}_{i,\ell}-\mathbf{A}_{i,\ell}| \leq O(\sqrt{\frac{\log \epsilon + \log K + \log L}{N}}) $ and $|\psi_{k_1,k_2,\ell}(\alpha_m)-\hat{\psi}_{k_1,k_2,\ell}(\alpha_m)| \leq O(\sqrt{\frac{\log \epsilon + \log M + \log K + \log L}{N}})$, for all $\ell, i, k_1, k_2, m$.

Recall that the function $\hat{\phi}_{k_1,k_2,\ell}(\cdot)$ is estimated by linear interpolation over $\alpha_1,\alpha_2,\cdots, \alpha_M$.
By assumption, $\phi_{k_1,k_2,\ell}(\cdot)$ is Lipschitz continuous, and $\alpha\in[0,1]$.
Now applying Lemma \ref{lemma:FAME:interpolationerrorbound}, we have that the estimated function $\hat{\phi}_{k_1,k_2,\ell}(\cdot)$ cannot be too far away from its true value, i.e., 
\begin{equation*}
    |\hat{\phi}_{k_1,k_2,\ell}(\alpha) - {\phi}_{k_1,k_2,\ell}(\alpha)| \leq  O(\sqrt{\frac{\log \epsilon + \log M + \log K + \log L}{N}} + \frac{\gamma}{M})
\end{equation*}
Recall the definition
$\hat{{\mathbf{r}}}^{a}_{k_1,K(\ell-1)+k_2}(\pmb \rho) \triangleq \psi_{k_1,k_2,\ell}(\pmb \rho_{k_1,\ell})$, $\hat{{\mathbf{r}}}^{b}_{k,\ell}(\pmb \rho) \triangleq \hat{\psi}_{k,k,\ell}(\pmb \rho_{k,\ell})$,   and 
$ \hat{\mathbf{r}}^{[-]}(\pmb \rho) \triangleq \hat{\mathbf{r}}^{a}(\pmb \rho)- \hat{\mathbf{r}}^{b}(\pmb \rho) \otimes \mathbf{1}^T_K$.
Then we know that for each element in those matrix function, its estimated value can be at most $O(\sqrt{\frac{\log \epsilon + \log M + \log K + \log L}{N}} + \frac{\gamma}{M})$ away from its true value.
Since the true accuracy is the (weighted) average over those functions, its estimated difference is also $O(\sqrt{\frac{\log \epsilon + \log M + \log K + \log L}{N}} + \frac{\gamma}{M})$.
The expected cost can be viewed as a (weighted) average over elements in the matrix $\mathbf{A}_{i,\ell}$, and thus the estimation difference is at most $O(\sqrt{\frac{\log \epsilon + \log K + \log L}{N}})$, which completes the proof.

\end{proof}

Then we need to bound how much error is incurred due to our computational approximation.
In other words, the difference between $s'$ and $\hat{s}$, which is given in Lemma \ref{lemma:FAME:computionbound}.

\begin{lemma}\label{lemma:FAME:computionbound}
    \begin{equation}
        \begin{split}
    \left|\hat{\Exp}[r^{\hat{s}}(x)] - \hat{\Exp}[r^{s'}(x)] \right| & \leq O\left(\frac{\gamma }{M}\right) \\
     \hat{\Exp}[\eta^{[\hat{s}]}(x,\mathbf{c})] - \hat{\Exp}[\eta^{[s']}(x,\mathbf{c})]    & \leq 0
        \end{split}
    \end{equation}
\end{lemma}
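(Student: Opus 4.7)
The plan is to track each source of approximation error in Algorithm \ref{Alg:FAME:TrainingAlgorithm} separately and show that each contributes at most $O(\gamma/M)$ to the empirical accuracy, while feasibility is enforced exactly. First, I would apply Lemma \ref{lemma:FAME:sparsitymainpaper} to the empirical problem defining $s'$: this gives, without loss of generality, a 2-sparse $\mathbf{p}^{[1]'}$, so $s'$ corresponds to a feasible master-problem tuple $(i_1', i_2', p_1', p_2', b_1', b_2')$ together with the exact empirical subproblem optima, and
\[
\hat{\Exp}[r^{s'}(x)] = p_1' g^{\mathrm{emp}}_{i_1'}(b_1'/p_1') + p_2' g^{\mathrm{emp}}_{i_2'}(b_2'/p_2'),
\]
where $g^{\mathrm{emp}}_i(b')$ is the exact optimum of subproblem \ref{prob:FAME:subproblem1} under the empirical estimates. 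The task then reduces to controlling (i) the subproblem-solver gap $|\hat g_i(b') - g^{\mathrm{emp}}_i(b')|$ and (ii) the interpolation gap $|\hat g_i^{LI}(b') - \hat g_i(b')|$.

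For (i), Algorithm \ref{Alg:FAME:Algorithm_fixbaseandlabel} restricts entries of $\mathbf{Q}$ to the quantile grid $\{\alpha_m = m/M\}_{m=0}^M$ and optimizes the resulting piecewise-quadratic exactly. Because the empirical reward inherits Lipschitz continuity in each entry of $\mathbf{Q}$ with constant $O(\gamma)$, snapping any feasible $\mathbf{Q}$ to its nearest grid point loses at most $O(\gamma/M)$ per coordinate (and can be done in a one-sided way so the empirical cost constraint is preserved), giving $\hat g_i(b') \ge g^{\mathrm{emp}}_i(b') - O(\gamma/M)$; the reverse direction is immediate since the snapped domain is a subset of the feasible region. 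For (ii), $g^{\mathrm{emp}}_i(\cdot)$ is Lipschitz in $b'$ by Lemma \ref{Lemma:FAME:LPContinousProperty} applied to the sub-LP parameterized by the budget. Invoking Lemma \ref{lemma:FAME:interpolationerrorbound} with $\Delta_1 = O(1)$ and $\Delta_2 = O(\gamma/M)$ on the grid $\theta_m = m\|2\mathbf{c}\|_\infty/M$ then yields $|\hat g_i^{LI}(b') - g^{\mathrm{emp}}_i(b')| \le O(\gamma/M)$ uniformly in $b'$.

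Combining the two bounds and using that the master problem is solved exactly by enumerating linear programs over the piecewise-linear $\hat g_i^{LI}$ (the tuple from $s'$ being admissible there), I obtain
\[
\hat{\Exp}[r^{\hat s}(x)] \;\ge\; p_1' \hat g_{i_1'}^{LI}(b_1'/p_1') + p_2' \hat g_{i_2'}^{LI}(b_2'/p_2') \;\ge\; \hat{\Exp}[r^{s'}(x)] - O(\gamma/M),
\]
and the reverse inequality follows (up to the same $O(\gamma/M)$ slack, absorbed into the absolute value) from the empirical optimality of $s'$ over the unrestricted strategy set. For the cost claim, Algorithm \ref{Alg:FAME:TrainingAlgorithm} enforces $b_1^* + b_2^* \le b$ at the master level and $\Exp[\eta] \le b'$ inside each subproblem, so by the decomposition from Lemma \ref{lemma:FAME:expectedacccostmainpaper} we get $\hat{\Exp}[\eta^{[\hat s]}(x,\mathbf{c})] \le b$; since any feasible empirical optimum $s'$ must saturate its budget ($\hat{\Exp}[\eta^{[s']}] = b$, else one could reallocate probability mass to a more expensive higher-accuracy call and strictly improve the objective), the difference $\hat{\Exp}[\eta^{[\hat s]}] - \hat{\Exp}[\eta^{[s']}]$ is nonpositive.

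The main obstacle is step (i): one has to translate the quantile-index discretization $\{\alpha_m\}$ into an actual $O(1/M)$ perturbation of $\mathbf{Q}$-entries so that the Lipschitz-in-$\mathbf{Q}$ hypothesis applies coordinatewise, and simultaneously arrange the snapping so that the cost constraint is not violated (a one-sided rounding suffices). The remaining steps are purely mechanical once the subproblem gap is established and Lemma \ref{lemma:FAME:interpolationerrorbound} is applied.
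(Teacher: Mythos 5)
Your overall skeleton matches the paper's: invoke Lemma \ref{lemma:FAME:sparsitymainpaper} to write $s'$ as a 2-sparse master-problem tuple, split the error into a subproblem-solver gap and an interpolation gap, control the interpolation gap via Lipschitz continuity of $g'_i(\cdot)$ (Lemma \ref{Lemma:FAME:LPContinousProperty}) plus Lemma \ref{lemma:FAME:interpolationerrorbound}, and use exactness of the master enumeration. However, your step (i) contains a genuine gap: it mischaracterizes where the algorithm's discretization error actually comes from. Algorithm \ref{Alg:FAME:Algorithm_fixbaseandlabel} does \emph{not} restrict the quality-score thresholds to the quantile grid $\{\alpha_m\}$; it optimizes $\rho$ continuously over $[0,1]$ and, as the paper shows (Lemma \ref{lemma:FAME:fixbaselabeloptimality}, via the 2-sparsity of $\pmb\Pi$ in Lemma \ref{Lemma:FAME:SparseStructure}), it returns the \emph{exact} optimum of the per-(base, label) problem \ref{prob:FAME:fixbaselabel}. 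The approximation in the subproblem solver comes instead from restricting the split of the budget $b'-\mathbf{c}_i$ across the $L$ labels to the grid $\beta_{t_\ell}=\frac{t_\ell}{M}(b'-\mathbf{c}_i)$ with $\sum_\ell t_\ell=M$. To bound that loss by $O(\gamma/M)$ one needs Lipschitz continuity of the per-label value $\hat{h}_{k,\ell}(\beta)$ in its budget $\beta$ (the paper's Lemma \ref{lemma:FAME:hfunction_Lipschitz}), whose proof is the technically substantial step and relies on the sparsity of $\pmb\Pi$ and on $\min_{j\neq k}\mathbf{c}_j>0$; nothing in your "snap $\mathbf{Q}$ to its nearest grid point" argument supplies this, and that argument does not connect the algorithm's output $\hat g_i(b')$ to the exact empirical optimum $g'_i(b')$, because thresholds are not what the algorithm discretizes.

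The cost part of your argument is also flawed: the claim that the empirical optimum $s'$ must saturate its budget ("else one could reallocate probability mass to a more expensive higher-accuracy call") is false in general — more expensive APIs need not be more accurate (this heterogeneity is the paper's premise), and the optimum may already use the most accurate combination with budget to spare. The paper instead obtains the cost statement from the construction itself: each subproblem solution spends at most its allotted $b'$ (Lemma \ref{lemma:FAME:subproblembound}) and the master enforces $b_1+b_2\le b$, so $\hat{\Exp}[\eta^{[\hat s]}(x,\mathbf{c})]\le b$, which is the property actually needed downstream. So while your decomposition and the interpolation/master steps track the paper, the core of the subproblem approximation analysis (exactness of Algorithm \ref{Alg:FAME:Algorithm_fixbaseandlabel} plus Lipschitzness of $\hat h_{k,\ell}$ in the per-label budget) is missing, and the budget-saturation step would fail.
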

\begin{proof}
This lemma requires a few steps.
The first step is to show that the subroutine to solve subproblem gives a good approximation.
Then we can show that subroutine for solving the master problem gives a good approximation.
Finally combing those two, we can prove this lemma.

Let us start by  showing that the subroutine to solve subproblem gives a good approximation.

\begin{lemma}\label{lemma:FAME:subproblembound}
    For any $b'$, The subroutine for solving problem \ref{prob:FAME:subproblem1} produces a strategy $s(i,b') \triangleq (\mathbf{e}_i, \hat{\mathbf{Q}}_i(b'), \hat{\mathbf{P}^{[2]}}_i(b')) $ with the empirical accuracy $\hat{g}_i(b')$ s.t. the empirical accuracy is within $O(\frac{\gamma L}{M})$ from the optimal, i.e.,  $|\hat{g}_i(b') - g'_i(b')|\leq O(\frac{\gamma }{M})$, and the cost constraint is satisfied, i.e., $\hat{\Exp}[\gamma^{s(i,b')(x)}] \leq b'$ .
\end{lemma}
\begin{proof}
This requires two lemmas. 
\begin{lemma}\label{lemma:FAME:fixbaselabeloptimality}
    For any input, Algorithm \ref{Alg:FAME:Algorithm_fixbaseandlabel} gives the exact optimal solution and optimal value to problem \ref{prob:FAME:fixbaselabel}.
\end{lemma}
\begin{proof}
To prove this lemma, we simply note that the problem \ref{prob:FAME:fixbaselabel} also has a sparse structure, which is stated below.
\begin{lemma}\label{Lemma:FAME:SparseStructure}
For any constant $\eta$, function $\phi(\cdot): \R \mapsto \R^K$, and $\Omega_2=\{\rho, \pmb \Pi| 0\leq \rho \leq 1, \pmb \Pi^T \mathbf{1} = 1, \pmb\Pi \succcurlyeq 0 \}$. 
Suppose  the following optimization problem
 \begin{equation*}
    \begin{split}
        \max_{\rho, \pmb \Pi\in \Omega_2}\textit{ } & 
        \eta  + \rho \pmb \Pi^T \cdot \phi(\rho)        \\
        \textit{s.t.        } & \rho (\pmb \Pi - \pmb \Pi \odot \mathbf{e}_k)^T \mathbf{c}   \leq \beta,  
    \end{split}
\end{equation*}
is feasible. 
Then there exists one optimal solution 
	$(\rho^*,\pmb\Pi^*)$, such that $\pmb\Pi^*$ is sparse and $\|\pmb\Pi^*\|_0 \leq 2$. 	More specifically, one of the following must hold: 
	\begin{itemize}
		\item $\pmb \Pi^*_i = 1$ for some $i$,  and $\pmb \Pi^*_{k'}=0$, for all $k'\not=i$
    	\item $\pmb \Pi^*_i = \frac{\beta}{\rho \mathbf{c}_i}$ for some  $i$, $\pmb \Pi^*_k = 1- \frac{\beta}{\rho \mathbf{c}_i}$and $\pmb  \Pi^*_{k'}=0$, for all $k'\not\in\{i,k\}$
		\item $\pmb \Pi^*_i = \frac{\beta/\rho  -\mathbf{c}_j}{\mathbf{c}_i -\mathbf{c}_j}, \pmb \Pi^*_j=\frac{\mathbf{c}_i-\beta/\rho }{\mathbf{c}_i -\mathbf{c}_j}$, for some distinct $i,j$, and $\pmb\Pi^*_{k'}=0$, for all $k'\not=i,j$
	\end{itemize}
\end{lemma}

\begin{proof}
	Let $(p^*, \pmb\Pi')$ be one solution. Our goal is to show that there exists a solution $(p^*, \pmb\Pi^*)$ which satistfies the above conditions.
	
	(i) $p^*=0$: the optimal value does not depend on $\pmb\Pi'$, and thus any $(p^*, \pmb\Pi)$ is a solution. In particular, $(p^*, \pmb\Pi^*)$ is a solution where $\pmb\Pi^*$ satisfies the first condition in the statement.
	
	(ii) $p^*\not=0$: According to Lemma \ref{Lemma:FAME:LPSparseStructure}, the following linear optimization problem
	\begin{equation}\label{equ:sparseLP}
	\begin{split}
	\max_{\pmb \Pi\in\R^{K}} \textit{ } &    \sum_{i=1}^{K} \pmb \Pi_i \bar{r}_{i,p^*}   \\
	s.t. \textit{ } &  \sum_{i=1}^{K} \mathbf{c}_i \pmb \Pi_i \leq \frac{B}{p^*}, \sum_{i=1}^{K} \pmb \Pi_i \leq 1, \pmb \Pi_i\geq 0
	\end{split}
	\end{equation}
	has a solution $\pmb \Pi^*$ such that $\|\pmb \Pi^*\|_0\leq 2$.
	
	We first show that $(p^*,\pmb\Pi^*)$ is one optimal solution to the confidence score approach.
	By definition, it is clear that $(p^*,\pmb\Pi^*)$ is a feasible solution.
	All that is needed is to show the solution is optimal.
	Suppose not.
	We must have
	\begin{equation*}
	\begin{split}
	\bar{r}_0 + p^* \left[ \sum_{i=1}^{K} \pmb \Pi'_i \bar{r}_{i,p} -\bar{r}_{0,p} \right]  &> 	   \bar{r}_0 + p^* \left[ \sum_{i=1}^{K} \pmb \Pi^*_i \bar{r}_{i,p} -\bar{r}_{0,p} \right] \\
	\sum_{i=1}^{K} \pmb \Pi'_i \bar{r}_{i,p} &> 	    \sum_{i=1}^{K} \pmb \Pi^*_i \bar{r}_{i,p} \\
	\end{split}
	\end{equation*}
	But noting that $\pmb\Pi'$ by definition is also a feasible solution to the problem \ref{equ:sparseLP}, this inequality implies that the objective function achieved by $\pmb\Pi'$ is strictly larger than that achieved by one optimal solution to \ref{equ:sparseLP}.
	A contradiction.
	Hence, $(p^*,\pmb\Pi^*)$ is one optimal solution.

	Next we show that $\pmb\Pi^*$ must follow the presented form. 
	Since $\|\pmb \Pi^*\|_0\leq 2$, we can consider the cases separately.
	
	(i) $\|\pmb \Pi^*\|_0= 1$: Assume $\pmb\Pi^*_i\not=0$.
	Then problem \ref{equ:sparseLP} becomes
	\begin{equation*}
	\begin{split}
	\max_{\pmb \Pi_i\in\R^{+}} \textit{ } &    \pmb \Pi_i \bar{r}_{i,p^*}   \\
	s.t. \textit{ } &   \mathbf{c}_i \pmb \Pi_i \leq \frac{B}{p^*},  \pmb \Pi_i \leq 1
	\end{split}
	\end{equation*}
	Since the objective function is monotonely increasing w.r.t. $\pmb\Pi_i$, we must have $\pmb\Pi^*_i = \min\{\frac{B}{p^*\mathbf{c}},1\}$
	
	(ii) $\|\pmb \Pi^*\|_0= 2$: Assume $\pmb\Pi^*_i\not=0, \pmb\Pi^*_j\not=0$.
	Then problem \ref{equ:sparseLP} becomes
	\begin{equation*}
	\begin{split}
	\max_{\pmb \Pi_i\in\R^{+},\pmb \Pi_j\in\R^{+}} \textit{ } &    \pmb \Pi_i \bar{r}_{i,p^*} + \pmb \Pi_j \bar{r}_{j,p^*}   \\
	s.t. \textit{ } &   \mathbf{c}_i \pmb \Pi_i + \mathbf{c}_j \pmb \Pi_j\leq \frac{B}{p^*}, \pmb \Pi_i +  \pmb \Pi_j \leq 1
	\end{split}
	\end{equation*}
	As a linear programming, if it has a solution, then there must exist one solution on the corner point. Since $\pmb\Pi^*_i\not=0, \pmb\Pi^*_j\not=0$, the two constraints must be satisfied to achieve a corner point.
	The two constraints form a system of linear equations, and solving it gives $\pmb \Pi^*_i = \frac{B/p -\mathbf{c}_j}{\mathbf{c}_i -\mathbf{c}_j}, \pmb \Pi^*_j=\frac{\mathbf{c}_i-B/p }{\mathbf{c}_i -\mathbf{c}_j}$, which completes the proof.
\end{proof}	
Now we are ready to prove Lemma \ref{lemma:FAME:fixbaselabeloptimality}.
Recall that in Algorithm \ref{Alg:FAME:Algorithm_fixbaseandlabel},  we  compute $(\mu_1, i_1) = \arg \max_{\mu \in[0,1], i\in [K] }  \phi_i(\mu)$ and $(\mu_2, i_2, j_2) = \arg \max_{\mu \in[\frac{\beta}{\mathbf{c}_i},\min\{\frac{\beta}{\mathbf{c}_j},1\}], i,j\in [K], \mathbf{c}_i>\mathbf{c}_j }  \phi_{i,j}(\mu)$.
If $\phi_{i_1}(\mu_1) \geq \phi_{i_2,j_2}(\mu_2)$, let $\rho = \mu_1 $ and $\pmb \Pi = \left[ \mathbbm{1}_{\mu_1< \frac{\beta}{\mathbf{c}_{i_1}}} + \frac{\beta}{\mathbf{c}_i} \mathbbm{1}_{\mu_1\geq \frac{\beta}{\mathbf{c}_{i_1}}} \right]\mathbf{e}_{i_1}$.
Otherwise, let $\rho = \mu_2 $ and $\pmb \Pi =  \frac{\beta/\mu_2-\mathbf{c}_{j_2}}{\mathbf{c}_{j_2}-\mathbf{c}_{j_2}}\mathbf{e}_{i_2}  + \frac{\mathbf{c}_{i_2}-\beta/\mu_2}{\mathbf{c}_{i_2}-\mathbf{c}_{i_2}}\mathbf{e}_{j_2}$.
Recall that $\phi_{i}(\mu) \triangleq \bar{\mathbf{r}}_{k,\ell}(\mathbf{1}_{K\times L})  + \min\{\frac{\beta}{\mathbf{c}_i},\mu\} \tilde{\mathbf{r}}^{k,\ell}_{i}(\mu) $
and $\phi_{i,j}(\mu) \triangleq \bar{\mathbf{r}}_{k,\ell}(\mathbf{1}_{K\times L})  + \frac{\beta - \mu \mathbf{c}_j}{\mathbf{c}_i-\mathbf{c}_j} \tilde{\mathbf{r}}^{k,\ell}_{i}(\mu)  + \frac{\mu \mathbf{c}_i-\beta}{\mathbf{c}_i-\mathbf{c}_j} \tilde{\mathbf{r}}^{k,\ell}_{j}(\mu) $.

Let us consider the two cases separately.

(i): $\phi_{i_1}(\mu_1) \geq \phi_{i_2,j_2}(\mu_2)$, and thus $\rho = \mu_1 $ and $\pmb \Pi = \left[ \mathbbm{1}_{\mu_1< \frac{\beta}{\mathbf{c}_{i_1}}} + \frac{\beta}{\mathbf{c}_i} \mathbbm{1}_{\mu_1\geq \frac{\beta}{\mathbf{c}_{i_1}}} \right]\mathbf{e}_{i_1}$.

According to Lemma \ref{Lemma:FAME:SparseStructure}, there exists a solution $\check{\rho}^*, \check{\pmb{\Pi}}^*$ to the above problem, such that
	\begin{itemize}
		\item $\check{\pmb\Pi}^*_i = 1$ for some $i$,  and $\check{\pmb \Pi}^*_k=0$, for all $k\not=i$
		\item $\check{\pmb \Pi}^*_i = \frac{\beta}{\rho \mathbf{c}_i}$ for some  $i$, and $\check{\pmb  \Pi}^*_k=0$, for all $k\not=i$
		\item $\check{\pmb\Pi}^*_i = \frac{\beta/\rho  -\mathbf{c}_j}{\mathbf{c}_i -\mathbf{c}_j}, \check{\pmb \Pi}^*_j=\frac{\mathbf{c}_i-\beta/\rho }{\mathbf{c}_i -\mathbf{c}_j}$, for some distinct $i,j$, and $\check{\pmb\Pi}^*_k=0$, for all $k\not=i,j$
	\end{itemize}
If the first or second condition happens, the objective then becomes $\phi_{i}(\check{\rho}^*)$.
If the third condition happens, then the objective becomes $\phi_{i,j,}(\check{\rho}^*)$.
Since  $\phi_{i_1}(\mu_1) \geq \phi_{i_2,j_2}(\mu_2)$, we must have $\phi_{i}(\check{\rho}^*)\geq \phi_{i,j,}(\check{\rho}^*)$ and thus it must be either first or second condition.
By construction of $\mu_1$, we must have $\mu_1 = \hat{\rho}^*$.
If $\check{\rho}^* =  \mu_1 <\frac{\beta}{\mathbf{c}_{i_1}}$, i.e., $\frac{\beta}{\mathbf{c}_{i_1}\check{\rho}^*}>1$, and thus second case cannot happen, and it has to be the first case and thus $\check{\pmb \Pi}^*_{i_1}=1$.
By definition, we also have $\pmb \Pi = \mathbf{e}_{i_1}$.
And thus, we have $\check{\pmb \Pi}^*=\pmb\Pi$.
If $\check{\rho}^* =  \mu_1 \geq \frac{\beta}{\mathbf{c}_{i_1}}$, i.e., $\frac{\beta}{\mathbf{c}_{i_1}\check{\rho}^*}\leq 1$, then the second case must happen.
Thus, we must have $\check{\pmb\Pi}^*_{i_1} = \frac{\beta}{\rho\mathbf{c}}$.
Meanwhile, by definition, we have $\pmb \Pi = \frac{\beta}{\mathbf{c}_i \rho} \mathbf{e}_{i_1} = \check{\pmb\Pi}^*$.

(ii):  $\phi_{i_1}(\mu_1) \geq \phi_{i_2,j_2}(\mu_2)$, and thus $\rho = \mu_2 $ and $\pmb \Pi =  \frac{\beta/\mu_2-\mathbf{c}_{j_2}}{\mathbf{c}_{j_2}-\mathbf{c}_{j_2}}\mathbf{e}_{i_2}  + \frac{\mathbf{c}_{i_2}-\beta/\mu_2}{\mathbf{c}_{i_2}-\mathbf{c}_{i_2}}\mathbf{e}_{j_2}$.
We can use a similar argument to show that $\pmb\Pi = \check{\pmb \Pi}^*$.

That is to say, no matter which case we are in, the optimal solution is always returned.
\end{proof}

\begin{lemma}\label{lemma:FAME:hfunction_Lipschitz}
 The function $\hat{h}_{k,\ell}(\beta)$ is Lipschitz continuous with constant $O(\gamma)$ for $\beta \geq 0$.    
\end{lemma}
\begin{proof}
    Let us use $\phi_{k,\ell}()$ to denote $\hat{h}_{k,\ell}()$ for notation simplification. Consider $\beta$ and $\beta+\Delta$, and our goal is to bound $\phi_{k,\ell}(\beta+\Delta)-\phi_{k,\ell}(\beta)$.
    Let $\rho^{\beta+\Delta}, \pmb\Pi^{\beta+\Delta}$ be the  corresponding solution to $\beta+\Delta$, i.e., the solution to     
    \begin{equation*}
    \begin{split}
        \max_{\rho, \pmb \Pi\in \Omega_2}\textit{ } &  \bar{\mathbf{r}}_{k,\ell}(\mathbf{1}_{K\times L}) + \rho \pmb \Pi^T \cdot \tilde{\mathbf{r}}^{k,\ell}(\rho)        \\
        \textit{s.t.        } & \rho (\pmb \Pi - \pmb \Pi \odot \mathbf{e}_k)^T \mathbf{c}   \leq \beta+\Delta.  
    \end{split}
    \end{equation*}

Let $\rho' =  \frac{\beta}{\beta+\Delta} \rho^{\beta+\Delta}$.
It is clear that $\rho', \pmb \Pi^{\beta+\Delta}$ is one solution to 
    \begin{equation*}
    \begin{split}
        \max_{\rho, \pmb \Pi\in \Omega_2}\textit{ } &  \bar{\mathbf{r}}_{k,\ell}(\mathbf{1}_{K\times L}) + \rho \pmb \Pi^T \cdot \tilde{\mathbf{r}}^{k,\ell}(\rho)        \\
        \textit{s.t.        } & \rho (\pmb \Pi - \pmb \Pi \odot \mathbf{e}_k)^T \mathbf{c}   \leq \beta.  
    \end{split}
    \end{equation*}
Thus, $\bar{\mathbf{r}}_{k,\ell}(\mathbf{1}_{K\times L}) + \rho' \pmb \Pi^{(\beta+\Delta)T} \cdot \tilde{\mathbf{r}}^{k,\ell}(\rho')$ must be smaller or equal to $\phi_{k,\ell}(\beta)$, which is the optimal solution.
Thus we must have 
\begin{equation*}
\begin{split}
    \phi_{k,\ell}(\beta+\Delta)-\phi_{k,\ell}(\beta) & \leq      \phi_{k,\ell}(\beta+\Delta)- \bar{\mathbf{r}}_{k,\ell}(\mathbf{1}_{K\times L}) - \rho' \pmb \Pi^{(\beta+\Delta)T} \cdot \tilde{\mathbf{r}}^{k,\ell}(\rho')\\
\end{split}
\end{equation*}
Note that by definition, 
\begin{equation*}
\begin{split}
\phi_{k,\ell}(\beta+\Delta) = \bar{\mathbf{r}}_{k,\ell}(\mathbf{1}_{K\times L}) + \rho^{\beta+\Delta} \pmb \Pi^{(\beta+\Delta)T} \cdot \tilde{\mathbf{r}}^{k,\ell}(\rho^{\beta+\Delta})
\end{split}    
\end{equation*}
The above inequality becomes 
\begin{equation*}
\begin{split}
    \phi_{k,\ell}(\beta+\Delta)-\phi_{k,\ell}(\beta) & \leq  \rho^{\beta+\Delta} \pmb \Pi^{(\beta+\Delta)T} \cdot \tilde{\mathbf{r}}^{k,\ell}(\rho^{\beta+\Delta})    -  \rho' \pmb \Pi^{(\beta+\Delta)T} \cdot \tilde{\mathbf{r}}^{k,\ell}(\rho')\\
& =  \rho^{\beta+\Delta} \pmb \Pi^{(\beta+\Delta)T} \cdot \tilde{\mathbf{r}}^{k,\ell}(\rho^{\beta+\Delta})    - \rho' \pmb \Pi^{(\beta+\Delta)T} \cdot \tilde{\mathbf{r}}^{k,\ell}(\rho^{\beta+\Delta})\\ &+ \rho' \pmb \Pi^{(\beta+\Delta)T} \cdot \tilde{\mathbf{r}}^{k,\ell}(\rho^{\beta+\Delta}) - \rho' \pmb \Pi^{(\beta+\Delta)T} \cdot \tilde{\mathbf{r}}^{k,\ell}(\rho')\\ 
& =  \rho^{\beta+\Delta} \pmb \Pi^{(\beta+\Delta)T} \cdot \tilde{\mathbf{r}}^{k,\ell}(\rho^{\beta+\Delta})    - \frac{\beta}{\beta+\Delta}\rho^{\beta+\Delta} \pmb \Pi^{(\beta+\Delta)T} \cdot \tilde{\mathbf{r}}^{k,\ell}(\rho^{\beta+\Delta})\\ &+ \rho' \pmb \Pi^{(\beta+\Delta)T} \cdot \tilde{\mathbf{r}}^{k,\ell}(\rho^{\beta+\Delta}) - \rho' \pmb \Pi^{(\beta+\Delta)T} \cdot \tilde{\mathbf{r}}^{k,\ell}(\rho')\\
& =   \frac{\Delta}{\beta+\Delta}\rho^{\beta+\Delta} \pmb \Pi^{(\beta+\Delta)T} \cdot \tilde{\mathbf{r}}^{k,\ell}(\rho^{\beta+\Delta})\\ &+ \rho' \pmb \Pi^{(\beta+\Delta)T} \cdot \tilde{\mathbf{r}}^{k,\ell}(\rho^{\beta+\Delta}) - \rho' \pmb \Pi^{(\beta+\Delta)T} \cdot \tilde{\mathbf{r}}^{k,\ell}(\rho')\\
\end{split}
\end{equation*}
where the first equality is by adding and subtracting $\rho' \pmb \Pi^{(\beta+\Delta)T} \cdot \tilde{\mathbf{r}}^{k,\ell}(\rho^{\beta+\Delta})$, and the second equality is simply plugging in the value of $\rho'$.
According to Lemma \ref{Lemma:FAME:SparseStructure}, $\pmb \Pi^{\beta+\Delta}$ must be sparse.

(i) If $\pmb \Pi^{\beta+\Delta}_k=1$, then only the base service ($k$th service) is used when budget is $\beta+\Delta$
When the budget becomes smaller, i.e., becomes $\beta$, it is always possible to always use the base service.
Hence, we must have $ \phi_{k,\ell}(\beta+\Delta)-\phi_{k,\ell}(\beta) = 0$.

(ii) Otherwise, since $\|\pmb \Pi^{\beta+\Delta}\|\leq 2$, there are at most two elements in $\pmb \Pi^{\beta+\Delta}$ that are not zeros.
Let $k_1,k_2\not=k$ denote the indexes. 
Then the constraint gives 
\begin{equation*}
    \rho^{\beta+\Delta} \pmb \Pi^{\beta+\Delta}_{k_1} \mathbf{c}_{k_1} + \rho^{\beta+\Delta} \pmb \Pi^{\beta+\Delta}_{k_2} \mathbf{c}_{k_2} \leq \beta + \Delta
\end{equation*}
\begin{equation*}
    (\rho^{\beta+\Delta} \pmb \Pi^{\beta+\Delta}_{k_1} + \rho^{\beta+\Delta} \pmb \Pi^{\beta+\Delta}_{k_2}) \min_{j\not=k}\mathbf{c}_{j}\leq     \rho^{\beta+\Delta} \pmb \Pi^{\beta+\Delta}_{k_1} \mathbf{c}_{k_1} + \rho^{\beta+\Delta} \pmb \Pi^{\beta+\Delta}_{k_2} \mathbf{c}_{k_2} \leq \beta + \Delta
\end{equation*}
That is to say, 
\begin{equation*}
    \rho^{\beta+\Delta} (\pmb \Pi^{\beta+\Delta}_{k_1} +  \pmb \Pi^{\beta+\Delta}_{k_2}) \leq (\beta + \Delta) / (\min_{j\not=k}\mathbf{c}_{j})
\end{equation*}
Thus we have
\begin{equation*}
    \begin{split}
         \frac{\Delta}{\beta+\Delta}\rho^{\beta+\Delta} \pmb \Pi^{(\beta+\Delta)T} \cdot \tilde{\mathbf{r}}^{k,\ell}(\rho^{\beta+\Delta}) & = \frac{\Delta}{\beta+\Delta}\rho^{\beta+\Delta} (\pmb \Pi^{(\beta+\Delta)T}_{k_1} \cdot \tilde{\mathbf{r}}_{k_1}^{k,\ell}(\rho^{\beta+\Delta}) + \pmb \Pi^{(\beta+\Delta)T}_{k_1} \cdot \tilde{\mathbf{r}}_{k_1}^{k,\ell}(\rho^{\beta+\Delta})) \\
         & \leq \frac{\Delta}{\beta+\Delta}\rho^{\beta+\Delta} (\pmb \Pi^{(\beta+\Delta)T}_{k_1} + \pmb \Pi^{(\beta+\Delta)T}_{k_1} ) \\
         & \leq \frac{\Delta}{\beta+\Delta} (\beta+\Delta)/(\min_{j\not=k} \mathbf{c}_j) \\ 
         &= \frac{\Delta}{\min_{j\not= k}\mathbf{c}_j}
    \end{split}
\end{equation*}
In addition, note that by assumption, 
$\tilde{\mathbf{r}}^{k,\ell}(\rho)$ is Lipschitz continuous with constant $\gamma$.
Hence, we must have 
\begin{equation*}
    \begin{split}
     \tilde{\mathbf{r}}_{j}^{k,\ell}(\rho')\geq  \tilde{\mathbf{r}}_{j}^{k,\ell}(\rho^{\beta+\Delta})-\gamma |(\rho'-\rho^{\beta+\Delta}|) = \tilde{\mathbf{r}}_{j}^{k,\ell}(\rho^{\beta+\Delta})-\gamma \frac{\Delta}{\beta+\Delta} \rho^{\beta+\Delta}\\
    \end{split}
\end{equation*}
And thus
\begin{equation*}
    \begin{split}
        & \rho' \pmb \Pi^{(\beta+\Delta)T} \cdot \tilde{\mathbf{r}}^{k,\ell}(\rho^{\beta+\Delta}) - \rho' \pmb \Pi^{(\beta+\Delta)T} \cdot \tilde{\mathbf{r}}^{k,\ell}(\rho')\\ \leq    &     \rho' \pmb \Pi^{(\beta+\Delta)T} \cdot \tilde{\mathbf{r}}^{k,\ell}(\rho^{\beta+\Delta}) - \rho' \pmb \Pi^{(\beta+\Delta)T} \cdot \tilde{\mathbf{r}}^{k,\ell}(\rho^{\beta+\Delta}) + \rho' \pmb \Pi^{(\beta+\Delta)T} \cdot \mathbf{1} \gamma \frac{\Delta}{\beta+\Delta} \rho^{\beta+\Delta}\\
        = & \rho' (\pmb \Pi^{(\beta+\Delta)T}_{k_1} + \pmb \Pi^{(\beta+\Delta)T}_{k_2})   \gamma \frac{\Delta}{\beta+\Delta} \rho^{\beta+\Delta}\\
        = & \rho^{\beta+\Delta} \frac{\beta}{\beta+\Delta} (\pmb \Pi^{(\beta+\Delta)T}_{k_1} + \pmb \Pi^{(\beta+\Delta)T}_{k_2})   \gamma \frac{\Delta}{\beta+\Delta} \rho^{\beta+\Delta}        
    \end{split}
\end{equation*}
Now plugging in 
\begin{equation*}
    \rho^{\beta+\Delta} (\pmb \Pi^{\beta+\Delta}_{k_1} +  \pmb \Pi^{\beta+\Delta}_{k_2}) \leq (\beta + \Delta) / (\min_{j\not=k}\mathbf{c}_{j})
\end{equation*}
We can further have 
\begin{equation*}
    \begin{split}
        & \rho' \pmb \Pi^{(\beta+\Delta)T} \cdot \tilde{\mathbf{r}}^{k,\ell}(\rho^{\beta+\Delta}) - \rho' \pmb \Pi^{(\beta+\Delta)T} \cdot \tilde{\mathbf{r}}^{k,\ell}(\rho')\\ \leq & \rho^{\beta+\Delta} \frac{\beta}{\beta+\Delta} (\pmb \Pi^{(\beta+\Delta)T}_{k_1} + \pmb \Pi^{(\beta+\Delta)T}_{k_2})   \gamma \frac{\Delta}{\beta+\Delta} \rho^{\beta+\Delta}   \\
        \leq & \frac{\beta}{\beta+\Delta} (\beta+\Delta) /(\min_{j\not=k}\mathbf{c}_j) \gamma \frac{\Delta}{\beta+\Delta} \rho^{\beta+\Delta}  \\
        \leq &  \Delta \gamma /( \min_{j\not=k}\mathbf{c}_j)        
    \end{split}
\end{equation*}
Combining it with 
\begin{equation*}
    \begin{split}
         \frac{\Delta}{\beta+\Delta}\rho^{\beta+\Delta} \pmb \Pi^{(\beta+\Delta)T} \cdot \tilde{\mathbf{r}}^{k,\ell}(\rho^{\beta+\Delta}) 
         &\leq \frac{\Delta}{\min_{j\not= k}\mathbf{c}_j}
    \end{split}
\end{equation*}
we have
\begin{equation*}
\begin{split}
    \phi_{k,\ell}(\beta+\Delta)-\phi_{k,\ell}(\beta)
& \leq   \frac{\Delta}{\beta+\Delta}\rho^{\beta+\Delta} \pmb \Pi^{(\beta+\Delta)T} \cdot \tilde{\mathbf{r}}^{k,\ell}(\rho^{\beta+\Delta})\\ &+ \rho' \pmb \Pi^{(\beta+\Delta)T} \cdot \tilde{\mathbf{r}}^{k,\ell}(\rho^{\beta+\Delta}) - \rho' \pmb \Pi^{(\beta+\Delta)T} \cdot \tilde{\mathbf{r}}^{k,\ell}(\rho')\\
& \leq \Delta /( \min_{j\not=k}\mathbf{c}_j) +  \Delta \gamma /( \min_{j\not=k}\mathbf{c}_j)\\
&= \frac{\Delta(1+\gamma)}{\min_{j\not=k}\mathbf{c}_j}
\end{split}
\end{equation*}

Thus, no matter which case, we always have 
\begin{equation*}
\begin{split}
    \phi_{k,\ell}(\beta+\Delta)-\phi_{k,\ell}(\beta)
&\leq  \frac{1+\gamma}{\min_{j\not=k}\mathbf{c}_j} \cdot \Delta
\end{split}
\end{equation*}
In addition, since $\phi_{k,\ell}(\beta)$ must be monotone, we have
\begin{equation*}
\begin{split}
    \phi_{k,\ell}(\beta+\Delta)-\phi_{k,\ell}(\beta)
&\geq 0 \geq - \frac{1+\gamma}{\min_{j\not=k}\mathbf{c}_j} \cdot \Delta
\end{split}
\end{equation*}
That is to say, $\phi_{k,\ell}(\beta)$ is Lipschitz continuous with constant $\frac{1+\gamma}{\min_{j\not=k}\mathbf{c}_j}$, which finishes the proof.
\end{proof}

Now we are ready to prove Lemma \ref{lemma:FAME:subproblembound}.
By definition, there must exist a $\pmb \lambda'$, such that $g'_i(b') = \sum_{\ell=1}^{L} \hat{\mathbf{A}}_{i,\ell} \hat{h}_{k,\ell}(\pmb \lambda'_\ell (b'-\mathbf{c}_i))$.
Let $\Lambda_{M} = \{\pmb \lambda \in \R ^{L} |\pmb  \lambda \geq 0, \mathbf{1}^{T}\pmb \lambda =1 , \pmb \lambda_\ell M \in [M] \cup \{0\}\}$.
Then there must exists a $\hat{\pmb \lambda} \in \Lambda_M$ such that $|\pmb \lambda'_\ell - \hat{\pmb \lambda}_\ell| \leq \frac{1}{M}$.
By Lemma \ref{lemma:FAME:hfunction_Lipschitz}, we have $| \hat{h}_{k,\ell}(\pmb \lambda'_\ell(b'-\mathbf{c}_i))) - \hat{h}_{k,\ell}(\hat{\pmb \lambda}_\ell (b'-\mathbf{c}_i)) | \leq O(\frac{\gamma}{M})$.
Note that $\hat{\mathbf{A}}$ is empirical probability matrix, by construction, $\sum_{\ell}^{L} \hat{\mathbf{A}_{i,\ell}}=1$
and each $\mathbf{A}_{i,\ell}$ is non-negative.
Thus, we must have 
$|\sum_{\ell=1}^{L} \hat{\mathbf{A}}_{i,\ell} \hat{h}_{k,\ell}(\hat{\pmb \lambda}_\ell (b'-\mathbf{c}_i)) - g'_i(b') |= |\sum_{\ell=1}^{L} \hat{\mathbf{A}}_{i,\ell} \hat{h}_{k,\ell}(\hat{\pmb \lambda}_\ell (b'-\mathbf{c}_i)) - \sum_{\ell=1}^{L} \hat{\mathbf{A}}_{i,\ell} \hat{h}_{k,\ell}(\pmb \lambda'_\ell (b'-\mathbf{c}_i))| \leq O(\frac{\gamma}{M})$.
On the other hand, by construction, $\hat{g}_i(b')$ produced by the subroutine to solve problem \ref{prob:FAME:subproblem1} is $\sum_{\ell=1}^{L} \hat{\mathbf{A}}_{i,\ell} \hat{h}_{k,\ell}(\beta_{t_\ell^*}) = \max_{t_1,t_2,\cdots, t_L} \sum_{\ell=1}^{L} \hat{\mathbf{A}}_{i,\ell} \hat{h}_{k,\ell}(\beta_{t_\ell}) = \max_{t_1,t_2,\cdots, t_L} \sum_{\ell=1}^{L} \mathbf{A}_{i,\ell} \hat{h}_{k,\ell}(\frac{t_\ell^*}{M}(b'-\mathbf{c}_i)) \geq \sum_{\ell=1}^{L} \mathbf{A}_{i,\ell} \hat{h}_{k,\ell}(\hat{\pmb\lambda}_\ell (b'-\mathbf{c}_i)) $.
Combing this with $\sum_{\ell=1}^{L} \hat{\mathbf{A}}_{i,\ell} \hat{h}_{k,\ell}(\hat{\pmb \lambda}_\ell (b'-\mathbf{c}_i)) - g'_i(b') \geq - O(\frac{\gamma}{M})$,  we immediately obtain $\hat{g}_i(b') - g_i'(b') \geq -O(\frac{\gamma}{M})$.
Since by definition $g'_i(b')$ must be the optimal solution and thus we have $\hat{g}_i(b') - g_i'(b') \leq 0$.
Thus, we have $|\hat{g}_i(b') - g_i'(b')| \leq O(\frac{\gamma}{M})$.
By Lemma \ref{lemma:FAME:fixbaselabeloptimality}, the produced solution to problem \ref{prob:FAME:fixbaselabel} is exactly the optimal solution.
That is to say, for the generated solution $\hat{\rho}^{i,\ell}(\beta_{t^*_\ell}), \hat{\pmb \Pi}^{i,\ell}(\beta_{t^*_\ell})$, at most $\beta_{t_\ell^*}$ budget might be used.
Since the total budget is $\sum_{\ell=1}^{L}\beta_{t_\ell^*} = b'-\mathbf{c}_i$, at most $b'- \mathbf{c}_i$ budget might be used.
Calling the base service requires $\mathbf{c}_i$ cost, and thus the total cost is at most $b'$.
As a result, we must have $\hat{\Exp}[\gamma^{s(i,b')(x)}] \leq b'$, which completes the proof.
\end{proof}

\begin{lemma}\label{lemma:FAME:masterbound}
    $|\hat{g}_i^{LI}(b') - {g}'_i(b')|\leq O(\frac{\gamma L }{M})$ for all $b'$ and $i$.    
\end{lemma}
\begin{proof}
Let us consider three cases separately.

Case 1: $b' \leq \mathbf{c}_i$.  By definition, $g_i'(b')=0$. By construction, $\hat{g}_i^{LI}(b')=0$, and thus $|\hat{g}_i^{LI}(b') - {g}'_i(b')|\leq O(\frac{\gamma }{M})$.

Case 2: $\theta_{m+1} \geq b'  \geq \theta_m \geq \mathbf{c}_i$.

We first note that $g_i'(b')$ by definition, is 
\begin{equation*}
    \begin{split}
        \max_{s=(\mathbf{e}_1),\mathbf{Q},\mathbf{P}\in S} & \hat{\Exp}[r^s(x) | A^{[1]}_s=i]\\
        \textit{ s.t. } & \hat{\Exp}[\eta^s(x)] \leq b'
    \end{split}
\end{equation*}
Abusing the notation a little bit, let us use $\Exp$ to  denote $\hat{\Exp}$ for simplicity (as well as $\Pr$ for $\hat{\Pr}$). We can expand the objective function by 
\begin{equation*}
    \begin{split}
   & \Exp[r^s(x) | A^{[1]}_s=i] \\
   =& \sum_{\ell=1}^{L}   \Pr[y_i(x)=\ell] \Exp[r^s(x) | A^{[1]}_s=i, y_i(x)=\ell]\\
  =& \sum_{\ell=1}^{L}   \Pr[D_s=0|A^{[1]}_s=i, y_i(x)=\ell] \Pr[y_i(x)=\ell] \Exp[r^s(x) | A^{[1]}_s=i, y_i(x)=\ell,D_s=0] + \\
 & \sum_{\ell=1}^{L}   \Pr[D_s=1|A^{[1]}_s=i, y_i(x)=\ell] \Pr[y_i(x)=\ell] \Exp[r^s(x) | A^{[1]}_s=i, y_i(x)=\ell,D_s=1]  \\
  =& \sum_{\ell=1}^{L}   \Pr[D_s=0|A^{[1]}_s=i, y_i(x)=\ell] \Pr[y_i(x)=\ell] \Exp[r^i(x) | A^{[1]}_s=i, y_i(x)=\ell,D_s=0] + \\
 & \sum_{\ell=1}^{L}   \Pr[D_s=1|A^{[1]}_s=i, y_i(x)=\ell] \Pr[y_i(x)=\ell] \Pr[A^{[2]}_s=j| A^{[1]}_s=i, y_i(x)=\ell,D_s=1 ]\cdot \\
 & \Exp[r^s(x) | A^{[1]}_s=i, y_i(x)=\ell,D_s=1, A^{[2]}_s=j]  \\
  =& \sum_{\ell=1}^{L}   \Pr[D_s=0|A^{[1]}_s=i, y_i(x)=\ell] \Pr[y_i(x)=\ell] \Exp[r^i(x) | A^{[1]}_s=i, y_i(x)=\ell,D_s=0] + \\
 & \sum_{\ell=1}^{L}   \Pr[D_s=1|A^{[1]}_s=i, y_i(x)=\ell] \Pr[y_i(x)=\ell] \mathbf{P}_{i,\ell,j}\cdot \\
 & \Exp[r^s(x) | A^{[1]}_s=i, y_i(x)=\ell,D_s=1, A^{[2]}_s=j]  \\ 
    \end{split}
\end{equation*}
where all qualities are simply by applying the conditional expectation formula. 
That is to say, conditional on the quality score $\mathbf{Q}$, the objective function is a linear function over $\mathbf{P}$ where all coefficients are positive.
Similarly, we can expand the budget constraint by
\begin{equation*}
    \begin{split}
   & \Exp[\eta^s(x) | A^{[1]}_s=i] \\
   =& \sum_{\ell=1}^{L}   \Pr[y_i(x)=\ell] \Exp[\eta^s(x) | A^{[1]}_s=i, y_i(x)=\ell]\\
  =& \sum_{\ell=1}^{L}   \Pr[D_s=0|A^{[1]}_s=i, y_i(x)=\ell] \Pr[y_i(x)=\ell] \Exp[\eta^s(x) | A^{[1]}_s=i, y_i(x)=\ell,D_s=0] + \\
 & \sum_{\ell=1}^{L}   \Pr[D_s=1|A^{[1]}_s=i, y_i(x)=\ell] \Pr[y_i(x)=\ell] \Exp[\eta^s(x) | A^{[1]}_s=i, y_i(x)=\ell,D_s=1]  \\
  =& \sum_{\ell=1}^{L}   \Pr[D_s=0|A^{[1]}_s=i, y_i(x)=\ell] \Pr[y_i(x)=\ell] \Exp[\eta^i(x) | A^{[1]}_s=i, y_i(x)=\ell,D_s=0] + \\
 & \sum_{\ell=1}^{L} \sum_{j=1}^{K}  \Pr[D_s=1|A^{[1]}_s=i, y_i(x)=\ell] \Pr[y_i(x)=\ell] \Pr[A^{[2]}_s=j| A^{[1]}_s=i, y_i(x)=\ell,D_s=1 ]\cdot \\
 & \Exp[\eta^s(x) | A^{[1]}_s=i, y_i(x)=\ell,D_s=1, A^{[2]}_s=j]  \\
  =& \sum_{\ell=1}^{L}  \Pr[D_s=0|A^{[1]}_s=i, y_i(x)=\ell] \Pr[y_i(x)=\ell] \mathbf{c}_i+\\
  &
 \sum_{\ell=1}^{L} \sum_{j=1}^{K}  \Pr[D_s=1|A^{[1]}_s=i, y_i(x)=\ell] \Pr[y_i(x)=\ell] \mathbf{P}_{i,\ell,j} \mathbf{c}_j 
    \end{split}
\end{equation*}
which is also linear in $\mathbf{P}$ conditional on $\mathbf{Q}$.
Let $(\mathbf{e}_i, \mathbf{Q}^*(b'), \mathbf{P}^*(b'))$ be the optimal solution that leads to $g_i'(b')$.
Now let us consider

\begin{equation*}
    \begin{split}
        \max_{s=(\mathbf{e}_1,\mathbf{Q}^*(b'),\mathbf{P})\in S} & \hat{\Exp}[r^s(x) | A^{[1]}_s=i]\\
        \textit{ s.t. } & \hat{\Exp}[\eta^s(x)] \leq b''
    \end{split}
\end{equation*}
which is a linear programming over $\mathbf{P}$ which satisfies all conditions in Lemma \ref{Lemma:FAME:LPContinousProperty}.
Let us denote its optimal value by $g^{'b'}_i(b'')$.
When $b''=b'$, its optimal value must be $g_i'(b')$, i,e., $g_i'('b') = g_i^{'b'}(b')$.
By Lemma \ref{Lemma:FAME:LPContinousProperty},  we have $g^{b'}_i(\cdot)$ is Lipschitz continuous.
In other words, we have $|g^{'b'}_i(b^1) - g^{'b'}_i(b^2)| \leq O(b_1-b_2)$ for any $b_1,b_2,b'$.
On the other hand, note that the optimal solution corresponding to $g_i^{'b'}(b'')$ is also a feasible solution to the original optimization without fixing $\mathbf{Q} = \mathbf{Q}^*(b')$. Hence, for any $b''$, we must have $g_i'(b') \geq g_i^{'b''}(b')$.
Thus, for any $b_1 \leq b_2$, we have 
\begin{equation*}
    \begin{split}
        g_i'(b^2) - g_i'(b^1)& = g_i'(b^2)-g_i^{b^2}(b^2) + g_i^{b^2}(b^2) - g_i^{b^2}(b^1)+ g_i^{b^2}(b^1)- g_i'(b^1)\\
        & = g_i^{b^2}(b^2) - g_i^{b^2}(b^1)+ g_i^{b^2}(b^1)- g_i'(b^1)\\
        & \leq  O(b_1-b_2)\\        
    \end{split}
\end{equation*}
In addition,by definition $g_i'(b^2) - g_i'(b^1) \geq 0$.
Hence, we have just shown that $|g_i'(b^2) - g_i'(b^1)| \leq  O(b_1-b_2)$, which implies $g'(b')$ is a Lipschitz continuous function. 
Lemma \ref{lemma:FAME:subproblembound} implies that $|\hat{g}_{\theta_m} - g'(\theta_m)|\leq O(\frac{\gamma}{M})$ for every $m$. Now by Lemma \ref{lemma:FAME:interpolationerrorbound}, we obtain that 
\begin{equation*}
    \begin{split}
        |\hat{g}^{LI}(b') - g'(b')| \leq O(\frac{\gamma}{M}) + O(\frac{1}{M})
    \end{split}
\end{equation*}

Case 3: $\theta_{m} \geq b' \geq  \mathbf{c}_i \geq \theta_{m-1}$. Exactly the same argument from case 2 can be applied, while noting that we use $\mathbf{c}_i$ as the interpolation point. 

Thus, we have just proved that on three separate intervals, we have $ |\hat{g}^{LI}(b') - g'(b')| \leq O(\frac{\gamma}{M}) + O(\frac{1}{M})$.
Therefore, for any $b',i$, we must have $ |\hat{g}^{LI}(b') - g'(b')| \leq O(\frac{\gamma}{M}) + O(\frac{1}{M})$, which completes the proof.
\end{proof}

Now we are ready to prove Lemma \ref{lemma:FAME:computionbound}.

Note that by definition, $s'  \triangleq (\mathbf{p}^{[1]'}, \mathbf{Q}', \mathbf{P}^{[2]'})$ is the optimal solution to the empirical accuracy and cost joint optimization problem.
By Lemma \ref{lemma:FAME:sparsitymainpaper}, $\mathbf{p}^{[1]'}$ should also be 2-sparse.
Let $i_1'$ and $i_2'$ be the corresponding indexes of the nonzero components, $p_1', p_2'$ are the probability of using them as the base service, and $b_1', b_2'$ be the budget allocated to them in strategy $s'$.
Then this must be the optimal solution to the master problem 
\begin{equation}
   \max_{(i_1, i_2, p_1, p_2, b_1, b_2) \in \mathit{C} }\textit{ }  p_1 g_{i_1}'(b_1/p_1) + p_2 g_{i_2}'(b_2/p_2)  \textit{ } s.t. b_1+b_2\leq b
\end{equation}
On the other hand, due to the linear interpolation, the subroutine to solve master problem \ref{prob:FAME:master} in Algorithm \ref{Alg:FAME:TrainingAlgorithm} is effectively solving 
\begin{equation}
  \max_{(i_1, i_2, p_1, p_2, b_1, b_2) \in \mathit{C} }\textit{ }  p_1 g_{i_1}^{LI}(b_1/p_1) + p_2 g_{i_2}^{LI}(b_2/p_2)  \textit{ } s.t. b_1+b_2\leq b
\end{equation}
and returns its optimal solution $\hat{i}_1, \hat{i}_2, \hat{p}_1, \hat{p}_2, \hat{b}_1, \hat{b}_2$.
By Lemma \ref{lemma:FAME:masterbound}, we have $ |g_{i}'(b') - g_{i}^{LI}(b')|  \leq O(\frac{\gamma L }{M}) $ for all $b'$ and $i$, and thus for any $i_1, i_2, b_1, b_2, p_1, p_2 \in C$, we must have 
$|p_1 g_{i_1}'(b_1/p_1) + p_2 g_{i_2}'(b_2/p_2) - (p_1 g_{i_1}^{LI}(b_1/p_1) + p_2 g_{i_2}^{LI}(b_2/p_2))|\leq O(\frac{\gamma L}{M}) $, since $p_1+p_2=1$.
Note that the constraints of the above two optimization are the same.
Now we can apply Lemma 
\ref{Lemma:FAME:OptimiziationDiff}, and obtain
\begin{equation}
      \hat{p}_1 g_{\hat{i}_1}'(\hat{b}_1/\hat{p}_1) + \hat{p}_2 g_{\hat{i}_2}'(\hat{b}_2/\hat{p}_2) \geq p_1' g_{i_1'}'(b_1'/p_1') + p_2' g_{i_2'}'(b_2'/p_2') -O(\frac{\gamma L}{M})
\end{equation}
By definition, we have $\Exp[r^{s'}(x)] = p_1' g_{i_1'}'(b_1'/p_1') + p_2' g_{i_2'}'(b_2'/p_2')$, and thus the above simply becomes 
\begin{equation}\label{equ:FAME:temp0}
      \hat{p}_1 g_{\hat{i}_1}'(\hat{b}_1/\hat{p}_1) + \hat{p}_2 g_{\hat{i}_2}'(\hat{b}_2/\hat{p}_2) \geq \Exp[r^{s'}(x)] -O(\frac{\gamma L}{M})
\end{equation}
Next note that the final strategy is produced by calling subproblem \ref{prob:FAME:subproblem1} solver for $b'=\hat{b}_j/\hat{p}_j$ and $i=\hat{i}_j$, where $j=1,2$, and then aligning those two solutions.
Thus, the empirical accuracy is simply $\Exp[r^{\hat{s}}(x)] = \hat{p}_1 \hat{g}_{\hat{i}_1}(\hat{b}_1/\hat{p}_1) + \hat{p}_2 \hat{g}_{\hat{i}_2}(\hat{b}_2/\hat{p}_2)$. 
By Lemma \ref{lemma:FAME:subproblembound}, we have 
\begin{equation*}
    |\hat{p}_1 \hat{g}_{\hat{i}_1}(\hat{b}_1/\hat{p}_1) - \hat{p}_1 {g}'_{\hat{i}_1}(\hat{b}_1/\hat{p}_1)| \leq O(\frac{\gamma L }{M})
    \end{equation*}
and
\begin{equation*}
    |\hat{p}_2 \hat{g}_{\hat{i}_2}(\hat{b}_2/\hat{p}_2) - \hat{p}_2 {g}'_{\hat{i}_2}(\hat{b}_1/\hat{p}_2)| \leq O(\frac{\gamma L }{M})
    \end{equation*}
Adding those two terms we have
\begin{equation*}
    \hat{p}_1 \hat{g}_{\hat{i}_1}(\hat{b}_1/\hat{p}_1) - \hat{p}_1 {g}'_{\hat{i}_1}(\hat{b}_1/\hat{p}_1)
    + 
    \hat{p}_2 \hat{g}_{\hat{i}_2}(\hat{b}_2/\hat{p}_2) - \hat{p}_2 {g}'_{\hat{i}_2}(\hat{b}_1/\hat{p}_2) \geq  - O(\frac{\gamma L }{M})
    \end{equation*}
That is to say,
\begin{equation*}
\Exp[r^{\hat{s}}(x)] -  
     \hat{p}_1 {g}'_{\hat{i}_1}(\hat{b}_1/\hat{p}_1)
     - \hat{p}_2 {g}'_{\hat{i}_2}(\hat{b}_1/\hat{p}_2) \geq  - O(\frac{\gamma L }{M})
    \end{equation*}
Adding the inequality \ref{equ:FAME:temp0}, we have 
\begin{equation*}
\Exp[r^{\hat{s}}(x)] -\Exp[r^{s'}(x)]  
      \geq  - O(\frac{\gamma L }{M})
    \end{equation*}
which completes the proof.
\end{proof}

Now let us  prove Theorem \ref{thm:FAME:mainbound_weak}, a slightly weaker version of 
Theorem \ref{thm:FAME:mainbound}.
\begin{theorem}\label{thm:FAME:mainbound_weak}
Suppose $\Exp[r_i(x)|D_s=0,A_s^{[1]}=i]$ is Lipschitz continuous with constant $\gamma$ w.r.t. each element in $\mathbf{Q}$.
Given $N$ i.i.d. samples  $\{y(x_i), \{(y_k(x_i), q_k(x_i))\}_{k=1}^{K}\}_{i=1}^{N}$, the computational cost of Algorithm \ref{Alg:FAME:TrainingAlgorithm} is $O\left(NMK^2+K^3M^3L+M^LK^2\right)$. 
With probability $1-\epsilon$, the produced strategy $\hat{s}$ satisfies  
 $\Exp[r^{\hat{s}}(x)]-\Exp[r^{s^*}(x)]\geq  - O\left(\sqrt{\frac{\log \epsilon + \log M +\log K +\log L }{N}} + \frac{\gamma L}{M}\right)$, and 
$\Exp[\gamma^{[\hat{s}]}(x,\mathbf{c})]  \leq b+O\left(\sqrt{\frac{\log \epsilon + \log M +\log K +\log L }{N}} \right)$.
\end{theorem}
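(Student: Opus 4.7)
The plan is to control the gap $\Exp[r^{s^*}(x)]-\Exp[r^{\hat s}(x)]$ by splitting it into a statistical error (true vs.\ empirical expectation), an optimality error (true optimum $s^*$ vs.\ empirical optimum $s'$), and a computational error (empirical optimum $s'$ vs.\ the strategy $\hat s$ actually produced by Algorithm \ref{Alg:FAME:TrainingAlgorithm}), and then invoke the three supporting lemmas stated in the appendix. Concretely, let $s'\in\arg\max_{s\in S}\hat\Exp[r^s(x)]$ subject to $\hat\Exp[\eta^{[s]}(x,\mathbf{c})]\le b$, and write
\begin{equation*}
\Exp[r^{s^*}]-\Exp[r^{\hat s}] \;=\; \bigl(\Exp[r^{s^*}]-\Exp[r^{s'}]\bigr)\;+\;\bigl(\Exp[r^{s'}]-\hat\Exp[r^{s'}]\bigr)\;+\;\bigl(\hat\Exp[r^{s'}]-\hat\Exp[r^{\hat s}]\bigr)\;+\;\bigl(\hat\Exp[r^{\hat s}]-\Exp[r^{\hat s}]\bigr).
\end{equation*}
The first and fourth terms are bounded by Lemma \ref{lemma:FAME:InformationBound} (Chernoff plus union bound over $O(MK^2L)$ sample-mean estimates at the quantile grid points, followed by Lipschitz propagation through linear interpolation via Lemma \ref{lemma:FAME:interpolationerrorbound}); the second term is similarly bounded for the specific strategy $s'$; and the third (computational) term is bounded by Lemma \ref{lemma:FAME:computionbound} at $O(\gamma/M)$. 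Summing gives the claimed accuracy bound $-O(\sqrt{(\log(1/\epsilon)+\log M+\log K+\log L)/N}+\gamma/M)$.

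The computational cost is obtained by auditing Algorithm \ref{Alg:FAME:TrainingAlgorithm} line by line, which is essentially Lemma \ref{lemma:FAME:ComputationalComplexity}: line 1 needs $O(NMK^2)$ to form $\hat{\mathbf A}$ and the $\hat\psi_{k_1,k_2,\ell}(\alpha_m)$; each subproblem solve needs $O(K^2ML+M^{L-1})$ via Algorithm \ref{Alg:FAME:Algorithm_fixbaseandlabel} over every $(\ell,m)$ plus the $L$-way budget partition enforcing $\sum_\ell t_\ell=M$; solving the master problem requires $O(MK)$ subproblem invocations per base-service pair; and the final alignment (lines 5--6) is dominated by the previous steps. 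Summing yields $O(NMK^2+K^3M^3L+M^LK^2)$.

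To obtain the \emph{strict} cost constraint $\Exp[\eta^{[\hat s]}(x,\mathbf{c})]\le b$ rather than the relaxed version $b+O(\sqrt{\log/N})$ proved directly in Theorem \ref{thm:FAME:mainbound_weak}, the trick is to run Algorithm \ref{Alg:FAME:TrainingAlgorithm} with a shrunken target budget $\tilde b = b - \delta$ where $\delta=\Theta\bigl(\sqrt{(\log(1/\epsilon)+\log K+\log L)/N}\bigr)$ is chosen to match the cost-side concentration rate in Lemma \ref{lemma:FAME:InformationBound}. Uniform concentration of $\hat\Exp[\eta^{[s]}]$ around $\Exp[\eta^{[s]}]$ then implies that any strategy with empirical cost $\le\tilde b$ has true cost $\le b$ with probability $1-\epsilon$, so $\hat s$ is feasible. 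The accuracy penalty from the budget shrinkage is absorbed through the Lipschitz continuity of the value function $g_i(\cdot)$ in its budget argument (a direct consequence of Lemma \ref{Lemma:FAME:LPContinousProperty}), which inflates the accuracy gap by at most a constant times $\delta$, which is of the same order as the statistical error already in the bound.

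The main obstacle is the budget-shrinkage step: one has to verify that the Lipschitz constant of $g_i(\cdot)$ and of the master-problem value are uniformly bounded independently of $M, L, K$ (up to logarithmic factors), so that shrinking the budget by $\delta$ costs only $O(\delta)$ in accuracy rather than something polynomial in $L$ or $M$. Granted that and the three cited lemmas, the remaining manipulations are straightforward triangle inequalities.
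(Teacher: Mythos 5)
Your proposal matches the paper's own argument: the same four-term triangle-inequality decomposition through the empirical optimum $s'$, bounded by Lemma \ref{lemma:FAME:InformationBound} (concentration plus interpolation via Lemma \ref{lemma:FAME:interpolationerrorbound}) and Lemma \ref{lemma:FAME:computionbound}, with the complexity count taken from Lemma \ref{lemma:FAME:ComputationalComplexity}, and the cost side handled by the analogous cost decomposition, which is exactly how the paper derives the relaxed bound $b+O(\sqrt{(\log\epsilon+\log K+\log L)/N})$. Your final budget-shrinkage paragraph is not needed for this (weak) statement --- it is precisely the paper's separate upgrade to the strict constraint in Theorem \ref{thm:FAME:mainbound}, where the Lipschitz-in-budget concern you raise is resolved by Lemma \ref{lemma:FAME:deltabudget} (via Lemma \ref{Lemma:FAME:LPContinousProperty}), so its inclusion does not affect the correctness of your proof of the weak theorem.
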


\begin{proof}
There are three main parts: (i) the computational complexity, (ii) the accuracy drop, and (iii) the excessive cost. Let us handle them sequentially. 

(i) Computational Complexity: Lemma \ref{lemma:FAME:ComputationalComplexity} directly gives the computational complexity bound.

(ii) Accuracy Loss: By Lemma \ref{lemma:FAME:InformationBound}, with probability $1-\epsilon$, we have 
\begin{equation*}
    \begin{split}
        \Exp[\mathbbm{1}_{\hat{y}^{\hat{s}}(x) = y(x)}] - \hat{\Exp}[\mathbbm{1}_{\hat{y}^{\hat{s}}(x) = y(x)}]  & \geq - O\left(\sqrt{\frac{\log \epsilon + \log M +\log K +\log L }{N}} +\frac{\gamma}{M}\right) \\
        \hat{\Exp}[\mathbbm{1}_{\hat{y}^{s'}(x) = y(x)}] - \Exp[\mathbbm{1}_{\hat{y}^{s'}(x) = y(x)}]  & \geq - O\left(\sqrt{\frac{\log \epsilon + \log M +\log K +\log L }{N}} \right)
        \end{split}
\end{equation*}
and also
\begin{equation*}
    \begin{split}
        \Exp[\mathbbm{1}_{\hat{y}^{s'}(x) = y(x)}] - {\Exp}[\mathbbm{1}_{\hat{y}^{s^*}(x) = y(x)}]  & \geq -O\left(\sqrt{\frac{\log \epsilon + \log M +\log K +\log L }{N} + +\frac{\gamma}{M}} \right)
    \end{split}
\end{equation*}
By Lemma \ref{lemma:FAME:computionbound}, we have 
    \begin{equation*}
        \begin{split}
    \hat{\Exp}[\mathbbm{1}_{\hat{y}^{\hat{s}}(x) = y(x)}] - \hat{\Exp}[\mathbbm{1}_{\hat{y}^{s'}(x) = y(x)}] & \geq - O\left(\frac{\gamma }{M}\right)
        \end{split}
    \end{equation*}
Combining those four inequalities, we have     
\begin{equation*}
    \begin{split}
        & \Exp[\mathbbm{1}_{\hat{y}^{\hat{s}}(x)=y(x)}]-\Exp[\mathbbm{1}_{\hat{y}^{s^*}(x)=y(x)}] \\
         = & \Exp[\mathbbm{1}_{\hat{y}^{\hat{s}}(x)=y(x)}]-\hat{\Exp}[\mathbbm{1}_{\hat{y}^{\hat{s}}(x)=y(x)}] + \hat{\Exp}[\mathbbm{1}_{\hat{y}^{\hat{s}}(x)=y(x)}]-\hat{\Exp}[\mathbbm{1}_{\hat{y}^{s'}(x)=y(x)}] \\
         + & \hat{\Exp}[\mathbbm{1}_{\hat{y}^{s'}(x)=y(x)}]-\Exp[\mathbbm{1}_{\hat{y}^{s'}(x)=y(x)}] + \Exp[\mathbbm{1}_{\hat{y}^{s'}(x)=y(x)}]-\Exp[\mathbbm{1}_{\hat{y}^{s^*}(x)=y(x)}]\\
         \geq & - O\left(\sqrt{\frac{\log \epsilon + \log M +\log K +\log L }{N}} \right) -O\left(\frac{\gamma L}{M}\right) \\
         -& O\left(\sqrt{\frac{\log \epsilon + \log M +\log K +\log L }{N}} \right)
         - O\left(\sqrt{\frac{\log \epsilon + \log M +\log K +\log L }{N}} \right)\\
         \geq & - O\left(\sqrt{\frac{\log \epsilon + \log M +\log K +\log L }{N}} + \frac{\gamma L}{M}\right)
    \end{split}
\end{equation*}

(iii) Excessive Cost: Similar to (ii), by Lemma \ref{lemma:FAME:InformationBound}, with probability $1-\epsilon$, we have 
\begin{equation*}
    \begin{split}
        \Exp[\tau^{[\hat{s}]}(x,\mathbf{c})] - \hat{\Exp}[\tau^{[\hat{s}]}(x,\mathbf{c})] & \leq  O\left(\sqrt{\frac{\log \epsilon  +\log K +\log L }{N}} \right) \\
       \hat{\Exp}[\tau^{[s']}(x,\mathbf{c})] - \Exp[\tau^{[s']}(x,\mathbf{c})] & \leq  O\left(\sqrt{\frac{\log \epsilon  +\log K +\log L }{N}} \right)
        \end{split}
\end{equation*}
and also
\begin{equation*}
    \begin{split}
        \Exp[\tau^{[s']}(x,\mathbf{c})] - \Exp[\tau^{[s^*]}(x,\mathbf{c})]  & \leq O\left(\sqrt{\frac{\log \epsilon  +\log K +\log L }{N}} \right)
    \end{split}
\end{equation*}
By Lemma \ref{lemma:FAME:computionbound}, we have 
    \begin{equation*}
        \begin{split}
         \hat{\Exp}[\tau^{[\hat{s}]}(x,\mathbf{c})] - \hat{\Exp}[\tau^{[s']}(x,\mathbf{c})]    & \leq 0
        \end{split}
    \end{equation*}
Combining those four inequalities, we have     
\begin{equation*}
    \begin{split}
        & \Exp[\tau^{[\hat{s}]}(x,\mathbf{c})] - \Exp[\tau^{[s^*]}(x,\mathbf{c})] \\
         = & \Exp[\tau^{[\hat{s}]}(x,\mathbf{c})] - \hat{\Exp}[\tau^{[\hat{s}]}(x,\mathbf{c})] + \hat{\Exp}[\tau^{[\hat{s}]}(x,\mathbf{c})] - \hat{\Exp}[\tau^{[s']}(x,\mathbf{c})] \\
         + & \hat{\Exp}[\tau^{[s']}(x,\mathbf{c})] - \Exp[\tau^{[s']}(x,\mathbf{c})] + \Exp[\tau^{[s']}(x,\mathbf{c})] - \Exp[\tau^{[s^*]}(x,\mathbf{c})]\\
         \leq &  O\left(\sqrt{\frac{\log \epsilon + \log K +\log L }{N}} \right) +0 \\
         +& O\left(\sqrt{\frac{\log \epsilon +\log K +\log L }{N}} \right)
         + O\left(\sqrt{\frac{\log \epsilon + \log K +\log L }{N}} \right)\\
         \leq &  O\left(\sqrt{\frac{\log \epsilon + \log K +\log L }{N}} \right)
    \end{split}
\end{equation*}
which completes the proof.
\end{proof}

\begin{lemma}\label{lemma:FAME:deltabudget}
    Let $s^{\Delta b}\triangleq \arg \max_{s\in S} \Exp[r^s(x)] \textit{ s.t. } \Exp[\gamma^{[s]}(x)] \leq b-\Delta b$. If $b - \Delta b \geq 0$, then $\Exp[r^{s^{\Delta b}}(x)] - \Exp[r^{s^*}(x)] \geq - O(\Delta b)$.
\end{lemma}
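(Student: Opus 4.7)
The plan is to use the master problem reformulation (equation~\ref{prob:FAME:master}) together with Lipschitz continuity of the subproblem value function $g_i$. By Lemma~\ref{lemma:FAME:sparsitymainpaper} and the subproblem/master equivalence established right before equation~\ref{prob:FAME:subproblem1}, we can write $\Exp[r^{s^*}(x)] = V(b)$ and $\Exp[r^{s^{\Delta b}}(x)] = V(b-\Delta b)$, where $V(b) := \max_{(i_1,i_2,p_1,p_2,b_1,b_2)\in\mathit{C},\, b_1+b_2 \le b} p_1 g_{i_1}(b_1/p_1) + p_2 g_{i_2}(b_2/p_2)$. It therefore suffices to prove $V(b) - V(b-\Delta b) \le O(\Delta b)$.

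The essential ingredient is a continuous-distribution analog of Lemma~\ref{lemma:FAME:hfunction_Lipschitz}: each $g_i(\cdot)$ is Lipschitz on $[\mathbf{c}_i, \infty)$ with constant $O(1/\min_{k\neq i}\mathbf{c}_k)$. The proof mirrors that of Lemma~\ref{lemma:FAME:hfunction_Lipschitz}, exploiting the 2-sparsity of the inner add-on distribution to argue that one extra unit of per-call budget purchases at most $1/\min_{k\neq i}\mathbf{c}_k$ additional add-on calls, each moving accuracy by at most $1$.

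Given this, take an optimum $(i_1^*, i_2^*, p_1^*, p_2^*, b_1^*, b_2^*)$ for budget $b$ with $b_1^* + b_2^* = b$ (by monotonicity of $g_i$). In the generic case where some per-call budget $b_j^*/p_j^*$ exceeds $\mathbf{c}_{i_j^*}$ by at least $\Delta b/p_j^*$, replacing $b_j^*$ with $b_j^* - \Delta b$ yields a master-feasible point for budget $b - \Delta b$, and Lipschitzness of $g_{i_j^*}$ controls the value drop by $p_j^* \cdot L \cdot (\Delta b/p_j^*) = L\,\Delta b$.

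The main obstacle is the boundary case where both per-call budgets lie within $\Delta b/p_j^*$ of the respective base costs, so a straight subtraction of $\Delta b$ would push $g_{i_j^*}$ discontinuously to zero. In this regime $s^*$ essentially makes no add-on calls and its cost is $\approx p_1^* \mathbf{c}_{i_1^*} + p_2^* \mathbf{c}_{i_2^*}$. I would handle this by mixing $s^*$ with the degenerate strategy that deterministically invokes the cheapest service $i_{\min} := \arg\min_k \mathbf{c}_k$ with no add-on call, setting $\tilde{\mathbf{p}}^{[1]} = \alpha \mathbf{p}^{[1]*} + (1-\alpha)\mathbf{e}_{i_{\min}}$ and $\tilde{\mathbf{Q}}_{i_{\min},\cdot} = 0$ when $i_{\min} \notin \supp(\mathbf{p}^{[1]*})$ (when $i_{\min}$ already lies in the support, one picks the cheaper of $i_1^*, i_2^*$ and shifts probability mass from the other toward it, which achieves the same cost reduction and, in the regime where $\mathbf{c}_{i_1^*} \approx \mathbf{c}_{i_2^*}$, only perturbs accuracy by $O(\Delta b)$ as well). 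By linearity of cost and accuracy in $\mathbf{p}^{[1]}$ (Lemma~\ref{lemma:FAME:expectedacccostmainpaper}), the mixture lives in $S$ with cost $\alpha b + (1-\alpha)\mathbf{c}_{i_{\min}}$ and accuracy at least $\alpha \Exp[r^{s^*}(x)]$; choosing $\alpha = 1 - \Delta b/(b - \mathbf{c}_{i_{\min}})$ makes the cost exactly $b - \Delta b$ and costs at most $(1-\alpha) = O(\Delta b)$ in accuracy (the implicit constant depending on $b - \mathbf{c}_{i_{\min}}$, which is positive whenever the original problem is feasible at budget $b-\Delta b$). Combining the two cases gives $V(b) - V(b-\Delta b) \le O(\Delta b)$, which is exactly the claim.
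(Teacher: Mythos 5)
Your generic case and the sub-case $i_{\min}\notin\supp(\mathbf{p}^{[1]*})$ are sound: a population analog of Lemma \ref{lemma:FAME:hfunction_Lipschitz} does give Lipschitzness of $g_i$ on $[\mathbf{c}_i,\infty)$ under the theorem's $\gamma$-assumption, and the mixture with the degenerate cheapest-service strategy is representable in $S$ precisely because the two base branches are disjoint. The genuine gap is the remaining boundary sub-case, $i_{\min}\in\{i_1^*,i_2^*\}$ with both per-call budgets within $\Delta b/p_j^*$ of the base costs. There your only mechanism is shifting first-stage mass from the more expensive base toward the cheaper one; the cost saved per unit of shifted mass is at most $\mathbf{c}_{i_2^*}-\mathbf{c}_{i_1^*}$ (up to $O(\Delta b)$), so a reduction of $\Delta b$ needs mass of order $\Delta b/(\mathbf{c}_{i_2^*}-\mathbf{c}_{i_1^*})$ and hence an accuracy perturbation of that order, which is not $O(\Delta b)$ when the two base costs coincide or nearly coincide. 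Your parenthetical claim that in the regime $\mathbf{c}_{i_1^*}\approx\mathbf{c}_{i_2^*}$ the perturbation is ``$O(\Delta b)$ as well'' is asserted exactly where mass shifting buys essentially no cost reduction, so as written the case analysis does not close. A workable patch uses a different mechanism there: in the boundary regime the expected add-on spend of $s^*$ is $O(\Delta b)$, so the probability that $s^*$ calls any add-on is at most $O(\Delta b/\min_k\mathbf{c}_k)$; zeroing all thresholds therefore loses only $O(\Delta b)$ accuracy and leaves a pure base mixture of cost $p_1^*\mathbf{c}_{i_1^*}+p_2^*\mathbf{c}_{i_2^*}$, after which either this is already $\le b-\Delta b$, or a mass shift whose size is controlled by the smallest nonzero pairwise cost difference (with exactly equal costs handled by noting the base mixture then costs $\mathbf{c}_{i_{\min}}\le b-\Delta b$) completes the argument, at the price of a data-dependent constant.

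For comparison, the paper avoids the master-problem decomposition and the case analysis entirely: it fixes $(\mathbf{p}^{[1]},\mathbf{Q})$ at the budget-$b$ optimum, so the remaining optimization over $\mathbf{P}^{[2]}$ is a linear program in which both the expected accuracy and the expected cost are linear with nonnegative coefficients; Lemma \ref{Lemma:FAME:LPContinousProperty} then gives Lipschitz continuity of this restricted value in the budget, and since the restricted problem at budget $b-\Delta b$ lower-bounds the unrestricted optimum, the $O(\Delta b)$ bound follows in a few lines and without invoking the $\gamma$-Lipschitz assumption. Both routes yield constants depending on the problem data, but the paper's is considerably simpler and free of the boundary issue above; if you keep your route, the sub-case $i_{\min}\in\supp(\mathbf{p}^{[1]*})$ needs the repair sketched here.
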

\begin{proof}
We simply need to show that $\Exp[r^{s^{\Delta b}}(x)]$ is Lipschitz continuous in $\Delta b$.
To see this, let us expand $s^{\Delta b} = (\mathbf{p}^{\Delta b}, \mathbf{Q}^{\Delta b}, \mathbf{P}^{\Delta b})$ and consider the following optimization problem 
    \begin{equation}\label{prob:FAMEtemp6}
        \begin{split}
        \max_{s=(\mathbf{p}^{0}, \mathbf{Q}^{\Delta b},\mathbf{P})\in S} \Exp[r^s(x)] \textit{ s.t. } \Exp[\gamma^{[s]}(x)] \leq a.     
        \end{split}
    \end{equation}

By law of total expectation, we have
\begin{equation*}
    \Exp[r^s(x)] = \sum_{i=1}^{K} \Pr[A_s^{[1]} = i ]\Exp[r^s(x)|A_s^{[1]} = i]
\end{equation*}
And we  can further expand the conditional expectation by 
\begin{equation*}
    \begin{split}
   & \Exp[r^s(x) | A^{[1]}_s=i] \\
   =& \sum_{\ell=1}^{L}   \Pr[y_i(x)=\ell] \Exp[r^s(x) | A^{[1]}_s=i, y_i(x)=\ell]\\
  =& \sum_{\ell=1}^{L}   \Pr[D_s=0|A^{[1]}_s=i, y_i(x)=\ell] \Pr[y_i(x)=\ell] \Exp[r^s(x) | A^{[1]}_s=i, y_i(x)=\ell,D_s=0] + \\
 & \sum_{\ell=1}^{L}   \Pr[D_s=1|A^{[1]}_s=i, y_i(x)=\ell] \Pr[y_i(x)=\ell] \Exp[r^s(x) | A^{[1]}_s=i, y_i(x)=\ell,D_s=1]  \\
    \end{split}
\end{equation*}
Note that
\begin{equation*}
\begin{split}
    & \Exp[r^s(x) | A^{[1]}_s=i, y_i(x)=\ell,D_s=0]  = \Exp[r^i(x) | A^{[1]}_s=i, y_i(x)=\ell,D_s=0]
    \end{split}
\end{equation*}
and 
\begin{equation*}
    \begin{split}
   &  \Exp[r^s(x) | A^{[1]}_s=i, y_i(x)=\ell,D_s=1] \\
   =& \sum_{j=1}^{K} \Pr[A^{[2]}_s=j|A^{[1]}_s=i, y_i(x)=\ell,D_s=1] \Exp[r^s(x) | A^{[1]}_s=i, y_i(x)=\ell,D_s=1, A^{[2]}_s=j] \\
= &\sum_{j=1}^{K} \mathbf{Q}_{i,\ell,j}\Exp[r^s(x) | A^{[1]}_s=i, y_i(x)=\ell,D_s=1, A^{[2]}_s=j]   
    \end{split}
\end{equation*}
where all qualities are simply by applying the conditional expectation formula. 
That is to say, conditional on the quality score $\mathbf{Q}$, the objective function is a linear function over $\mathbf{P}$ where all coefficients are positive.
Similarly, we can expand the budget constraint, which turns out to be also linear in $\mathbf{P}$ conditional on $\mathbf{Q}$.

Thus, problem \ref{prob:FAMEtemp6} is a linear programming in $\mathbf{P}$.
Therefore, its optimal value, denoted by $F(a|\Delta_b)$, must also be Lipschitz continuous in $a$, according to Lemma \ref{Lemma:FAME:LPContinousProperty}. In other words, we have $|F(a_2|\Delta_b) - F(a_1|\Delta_b)| \leq O(|a_1-a_2|)$ for any $a_1,a_2,\Delta b$.
When $a = b-\Delta b$, its optimal value must be $\Exp[r^{s^{\Delta b}}(x)]$, i,e., $\Exp[r^{s^{\Delta b}}(x)] = F(b-\Delta b|\Delta_b)$.
On the other hand, note that the optimal solution corresponding to $ F(b-\Delta_b |0)$ is also a feasible solution to the original optimization without fixing $\mathbf{p}^ = \mathbf{p}^{0 }, \mathbf{Q} = \mathbf{Q}^{0}$. Hence,  we must have $\Exp[r^{s^{\Delta b}}(x)] \geq F(b-\Delta b |0)$ since the former is the optimal solution and the latter is only a feasible solution.
Thus, we have 
\begin{equation*}
    \begin{split}
       & \Exp[r^{s^{0}}(x)] -  \Exp[r^{s^{\Delta b}}(x)]  \\
        = &\Exp[r^{s^{0}}(x)] -  F(b|0)  + F(b|0) -  F(b-\Delta b|0)  + F(b-\Delta b|0) -  \Exp[r^{s^{\Delta b}}(x)]  \\
     =& F(b|0) -  F(b-\Delta b|0)  + F(b-\Delta b|0) -  \Exp[r^{s^{\Delta b}}(x)] \\
        \leq & O(b-(b-\Delta b)) = O(\Delta b)\\        
    \end{split}
\end{equation*}
Note that $\Exp[r^{s^{0}}(x)]  = \Exp[r^{s^{*}}(x)] $, we have proved the statement. 
\end{proof}

Now we can relax Theorem \ref{thm:FAME:mainbound_weak} to Theorem \ref{thm:FAME:mainbound} by slightly modifying the subroutines in Algorithm \ref{Alg:FAME:TrainingAlgorithm}.
First, we can compute the excessive part in the cost given in Lemma \ref{lemma:FAME:InformationBound}, denoted by $b_e$.
Next, for all the subroutines in Algorithm  \ref{Alg:FAME:TrainingAlgorithm}, replace $b$ by $b-b_e$ whenever applicable.
Thus, the produced solution with high probability has $\Exp[\gamma^{[s]}(x)] \leq b $, since we already remove the $b_e$ term.
However, noting that by subtracting this $b_e$, we effective change the optimization problem by allowing a smaller budget, which is a conservative approach.
Now by Lemma \ref{lemma:FAME:deltabudget}, this incurs at most $O(b_e)$ accuracy drop.
By Lemma \ref{lemma:FAME:InformationBound}, $b_e=O(\sqrt{(\log \epsilon + \log K + \log L)/N})$, which is subsumed by the accuracy drop in Theorem \ref{thm:FAME:mainbound_weak}, which finishes the proof.
\ref{lemma:FAME:deltabudget}

\end{proof}

\section{Experimental Details}
We provide missing experimental details here.

\paragraph{Experimental Setup.}
 All experiments were run on a machine with 20 Intel Xeon E5-2660 2.6 GHz cores, 160 GB
RAM, and 200GB disk with Ubuntu 16.04 LTS as the OS.
Our code is implemented in python 3.7.

 \paragraph{ML tasks and services.}
Recall that We focus on three main ML tasks, namely, facial emotion recognition (\textit{FER}), sentiment analysis (\textit{SA}), and speech to text (\textit{STT}). 

\textit{FER} is a computer vision task, where give a face image, the goal is to give its emotion (such as happy or sad).
For \textit{FER}, we use 3 different ML cloud services, Google Vision \cite{GoogleAPI}, Microsoft Face (MS Face) \cite{MicrosoftAPI}, and Face++\cite{FacePPAPI}.
We also use a pretrained convolutional neural network (CNN) freely available from github \cite{FER_github}.
Both Microsoft Face and Face++ APIs provide a numeric value in [0,1] as the quality score for their predictions, while Google API gives a value in five categories, namely, ``very unlikely'', ``unlikely'', ``possible'', ``likely'', and ``very likely''. We transform this categorical value into numerical value by linear interpolation, i.e., the five values correspond to 0.2, 0.4, 0.6, 0.8, 1, respectively.

\textit{SA} is a natural language processing (NLP) task, where the goal is to predict if the attitude of a given text is positive or negative.
For \textit{SA}, the ML services used in the experiments are Google Natural Language (Google NLP) \cite{GoogleNLPAPI}, Amazon Comprehend (AMZN Comp) \cite{AmazonAPI}, and Baidu Natural Language Processing (Baidu NLP) \cite{BaiduAPI}. 
For English datasets, we use Vader \cite{VanderICWSM2014}, a rule-based sentiment analysis engine.
For Chinese datasets, we use another rule-based sentiment analysis tool Bixin \cite{SA_Chinese_github}.

\textit{STT} is a speech recognition task where the goal is to transform an utterance into its corresponding text.
for \textit{STT}, we use three common APIs: Google Speech \cite{GoogleSpeechAPI}, Microsoft Speech (MS Speech) \cite{MicrosoftSpeechAPI}, and IBM speech \cite{IBMAPI}.
 a deepspeech model\cite{STT_Deepspeech_github,DeepSpeech_ICML16} from github is also used.
Given the returned text from a API, we determine the API's predicted label as the label with smallest edit distance to the returned text.
For example, if IBM API produces ``for'' for a sample in AUDIOMNIST, then its label becomes ``four'', since all other numbers have larger distance from the predicted text ``for''.

\paragraph{Datasets.}
The experiments were conducted on 12 datasets.
The first four datasets, FER+\cite{dataset_FERP_BarsoumZCZ16}, RAFDB\cite{Dataset_FAFDB_li2017reliable}, EXPW\cite{Dataset_EXPW_SOCIALRELATION_ICCV2015}, and AFFECTNET\cite{Dataset_AFFECTNET_MollahosseiniHM19} are \textit{FER} datasets.
The images in FER+ was originally from the FER dataset for the ICML 2013 Workshop on Challenges in Representation, and the label was recreated by crowdsourcing.
We only use the testing portion of FER+, since the CNN model from github was pretrained on its training set.  
For RAFDB and AFFECTNET, we only use the images for basic emotions since commercial APIs cannot work for compound emotions.
For EXPW, we use the true bounding box associated with the dataset to create aligned faces first, and only pick the images that are faces with confidence larger than 0.6.

For \textit{SA}, we use four datasets, YELP \cite{Dataset_SEntiment_YELP}, IMDB \cite{Dataset_SEntiment_IMDB_ACL_HLT2011}, SHOP \cite{Dataset_SENTIMENT_SHOP}, and WAIMAI \cite{Dataset_SENTIMENT_WAIMAI}.
YELP and IMDB are both English text datasets.
YELP is from the YELP review challenge. 
Each review is associated with a rating from 1,2,3,4,5.
We transform rating 1 and 2 into negative, and rating 4 and 5 into positive.
Then we randomly select 10,000 positive and negative reviews, respectively.
IMDB is already polarized and partitioned into training and testing parts; we use its testing part which has 25,000 images.
SHOP and WAIMAI are two Chinese text datasets.
SHOP contains polarized labels for reviews for various purchases (such as fruits, hotels, computers).
WAIMAI is a dataset for polarized delivery reviews.
We use all samples from SHOP and WAIMAI.

Finally, we use the other four datasets for \textit{STT}, namely, DIGIT \cite{Dataset_Speech_DIGIT}, AUDIOMNIST\cite{Dataset_Speech_AudioMNIST_becker2018interpreting}, 
COMMAND \cite{Dataset_Speech_GoogleCommand} and  FLUENT \cite{Dataset_Speech_Fluent_LugoschRITB19}.
Each utterance in DIGIT and AUDIOMNIST is a spoken digit (i.e., 0-9).
The sampling rate is 8 kHz for DIGIT and 48 kHz for AUDIOMNIST.
Each sample in COMMAND is a spoken command such as ``go'', ``left'', ``right'', ``up'', and ``down'', with a sampling rate of 16 kHz. 
In total, there are 30 commands and a few white noise utterances.
FLUENT is another recently developed dataset for speech command.
The commands in FLUENT are typically a phrase (e.g., ``turn on the light'' or ``turn down the music'').
There are in total 248 possible phrases, which are mapped to 31 unique labels.
The sampling rate is also 16 kHz.

\begin{figure} \centering
\begin{subfigure}[Task: \textit{FER}.]{\label{fig:cost_a}\includegraphics[width=0.31\linewidth]{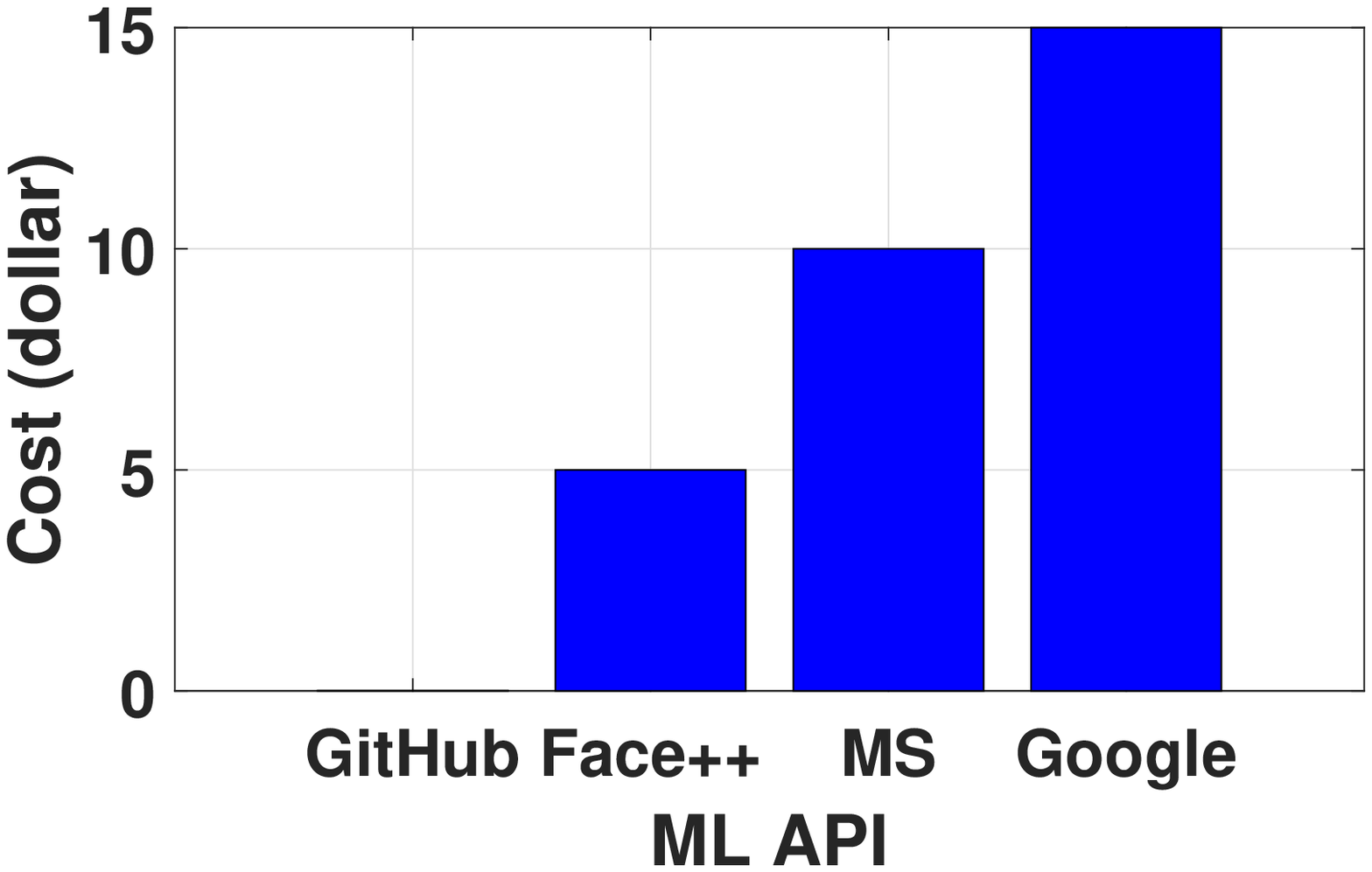}}
\end{subfigure}
\begin{subfigure}[Task: \textit{SA}.]{\label{fig:cost_b}\includegraphics[width=0.31\linewidth]{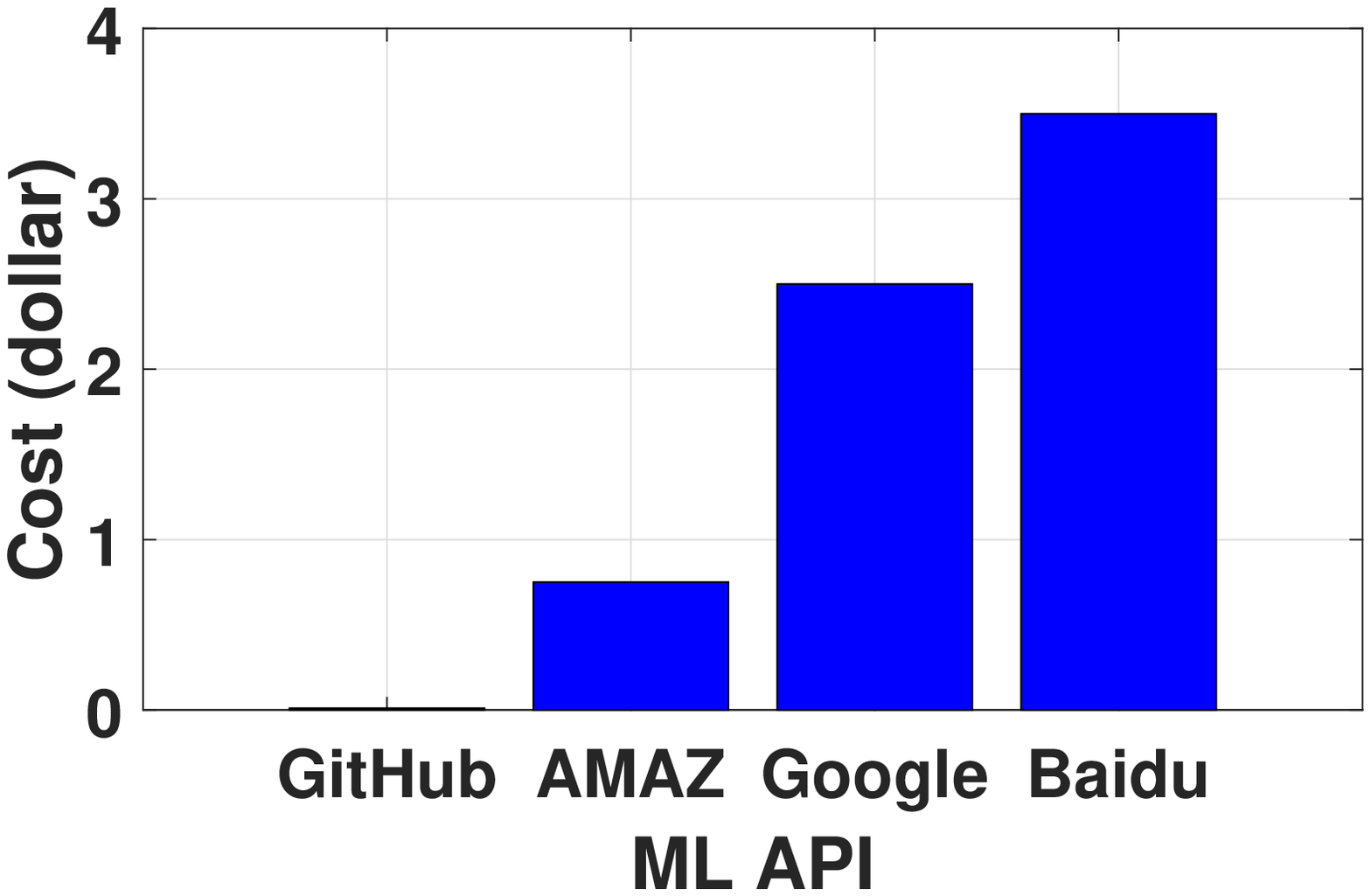}}
\end{subfigure}
\begin{subfigure}[Task: \textit{STT}.]{\label{fig:cost_c}\includegraphics[width=0.31\linewidth]{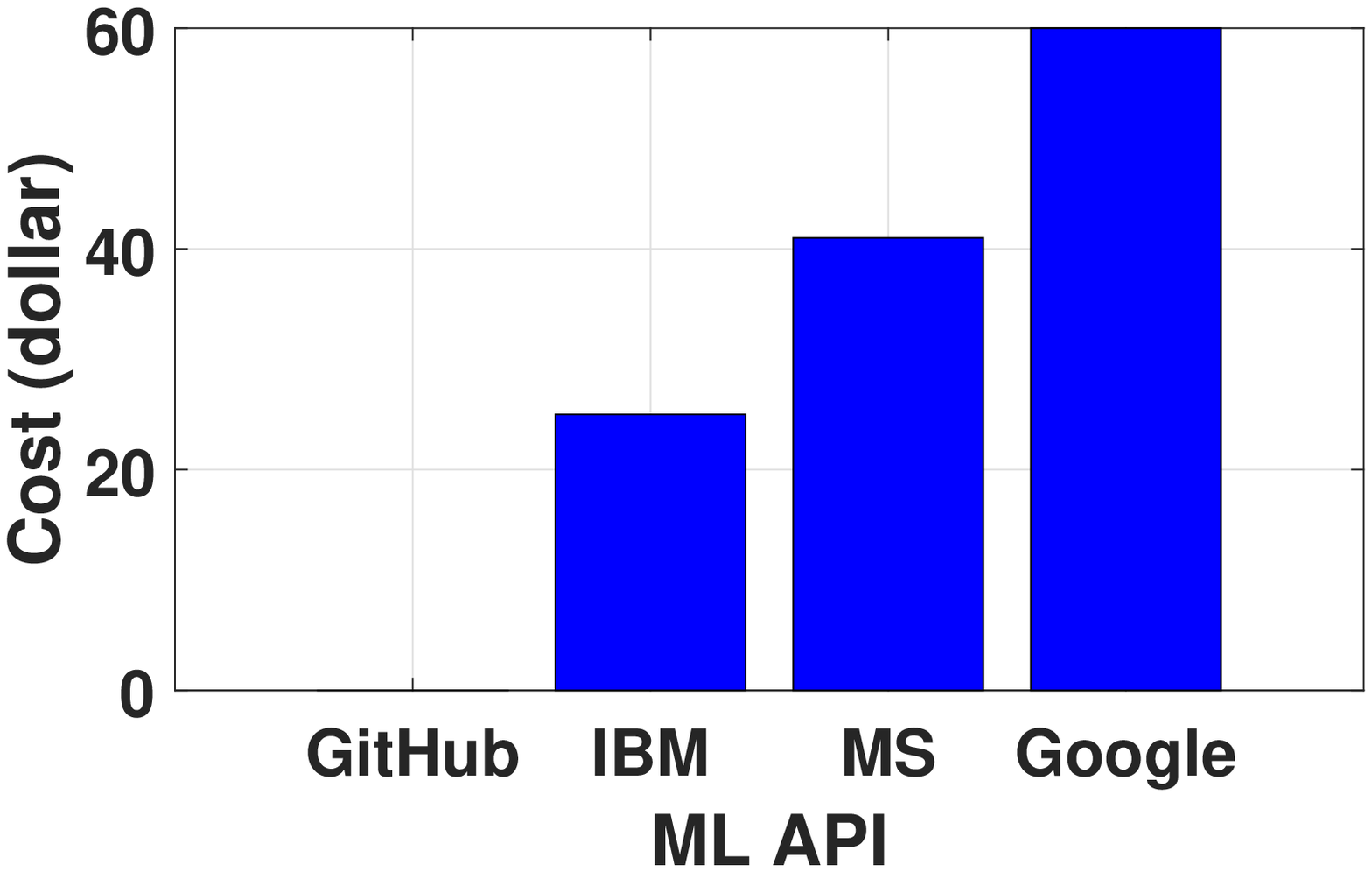}}
\end{subfigure}
	\caption{{TCost per 10,000 queries of different ML APIs. GitHub refers to  the CNN  Model \cite{FER_github} in \textit{FER}, Vader \cite{SA_English_github} and Bixin \cite{SA_Chinese_github} in \text{SA} , and DeepSpeech \cite{STT_Deepspeech_github} in \textit{STT}.  }}\label{fig:FAME:Cost}
\end{figure}
\paragraph{GitHub Model Cost.} We evaluate the inference time of all GitHub models on an Amazon EC2 t2.micro instance, which is \$0.0116 per hour.
The CNN model needs at most 0.016 seconds per 480 x 480 grey image, Bixin and Vader require at most 0.005 seconds for each text with less than 300 words, and DeepSpeech takes at most 0.5 seconds for each less than 15 seconds utterance.   
Hence, their equivalent price is \$0.0005, \$0.00016, and \$0.016 per 10,000 data points.
As shown in Figure \ref{fig:FAME:Cost}, the services from GitHub  are much less expensive than the commercial ML services.

\paragraph{Case Study Details.}
For comparison purposes, we also evaluate the performance of a mixture of experts, a simple majority vote, and a simple cascade approach on FER+ dataset.
For the mixture of experts, we use softmax for the gating network, and linear model on the domain space for the feature generation.
This results in a strategy that ends up with always calling the best expert Microsoft.
For the simple majority vote,  we first transforms each API's confidence score $q$ and predicted label $\ell$ into its probability vector $\mathbf{v}\in \R^{L}$, by $\mathbf{v}_{\ell} = q, \mathbf{v}_j = (1-q)/(L-1), j\not=\ell$.
This can be viewed as that the API gives a distribution of all labels for the input data point.
Assuming independence, we simply sum all APIs' distributions and then produce the label with highest estimated probability.
We also use a simple majority vote, where we simply return the label on which most API agrees on.
For example, if GitHub (CNN), Google, and Face++ all give a label ``surprise'', the no matter what Microsoft produces, we choose ``surprise '' as the label. We break ties randomly.

\begin{figure} \centering
\begin{subfigure}[GitHub (CNN) model]{\label{fig:cost_a}\includegraphics[width=0.30\linewidth]{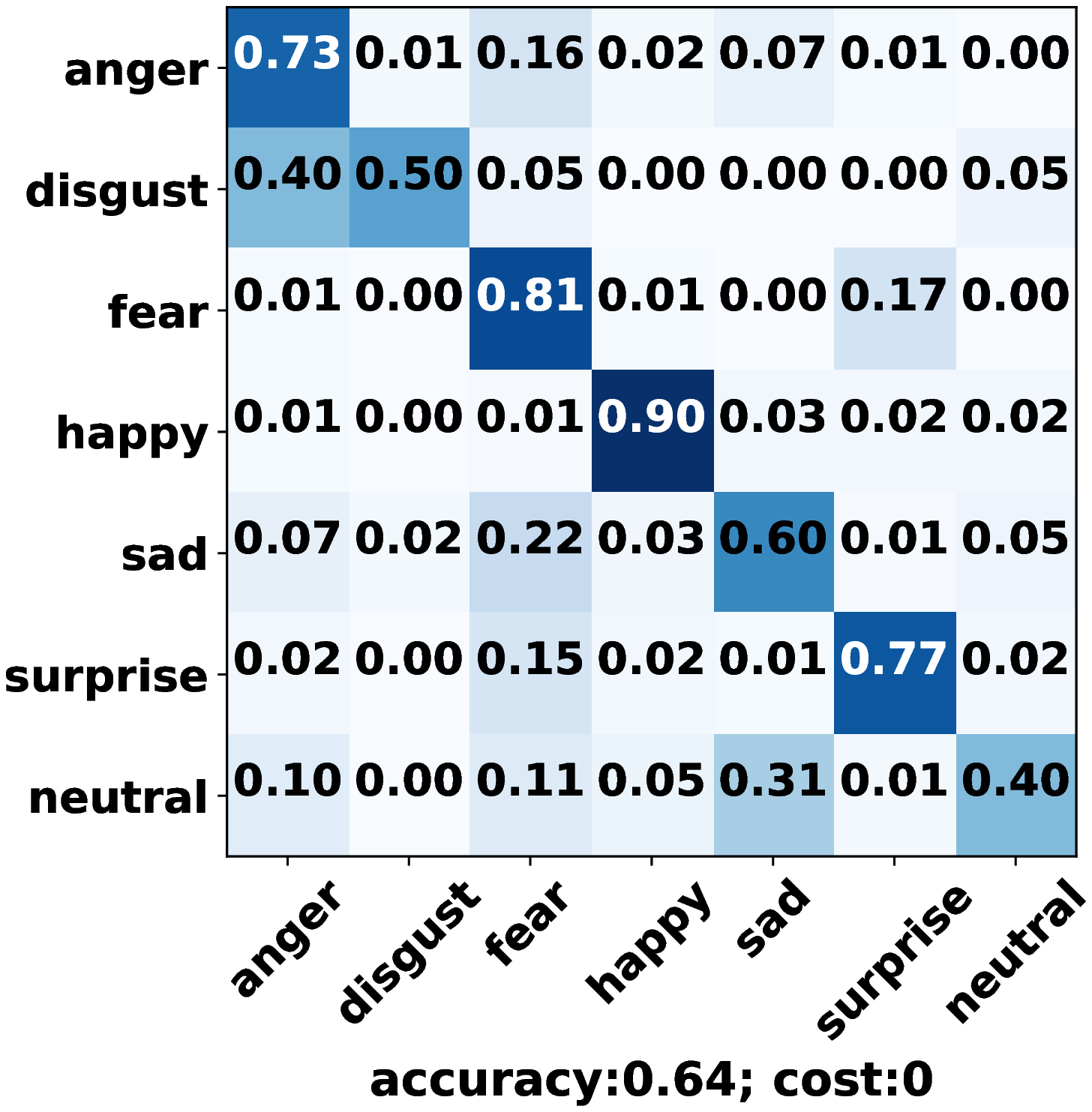}}
\end{subfigure}
\begin{subfigure}[Face++]{\label{fig:cost_b}\includegraphics[width=0.30\linewidth]{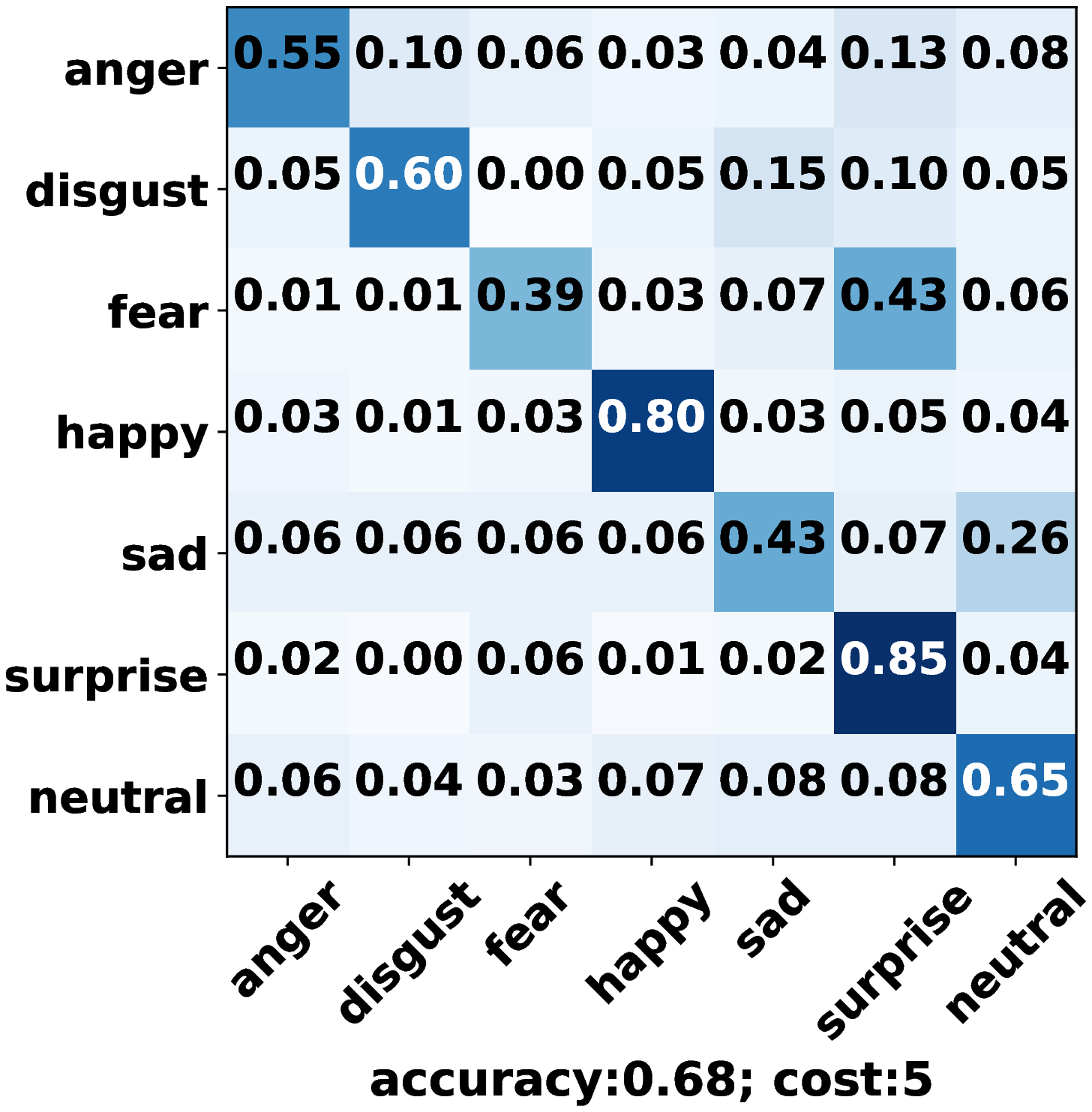}}
\end{subfigure}
\begin{subfigure}[Google]{\label{fig:cost_c}\includegraphics[width=0.30\linewidth]{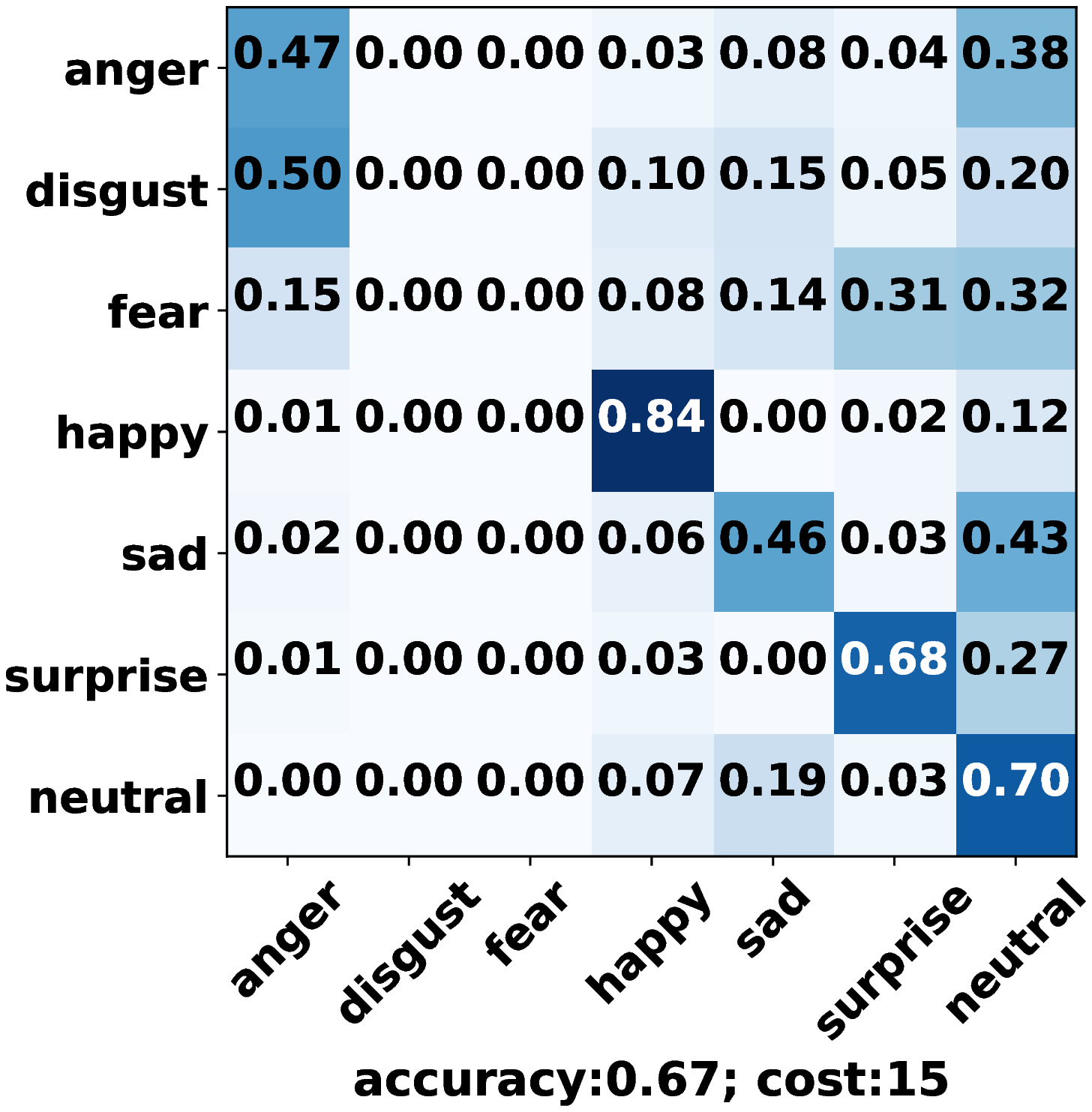}}
\end{subfigure}
\begin{subfigure}[Microsoft]{\label{fig:cost_a}\includegraphics[width=0.30\linewidth]{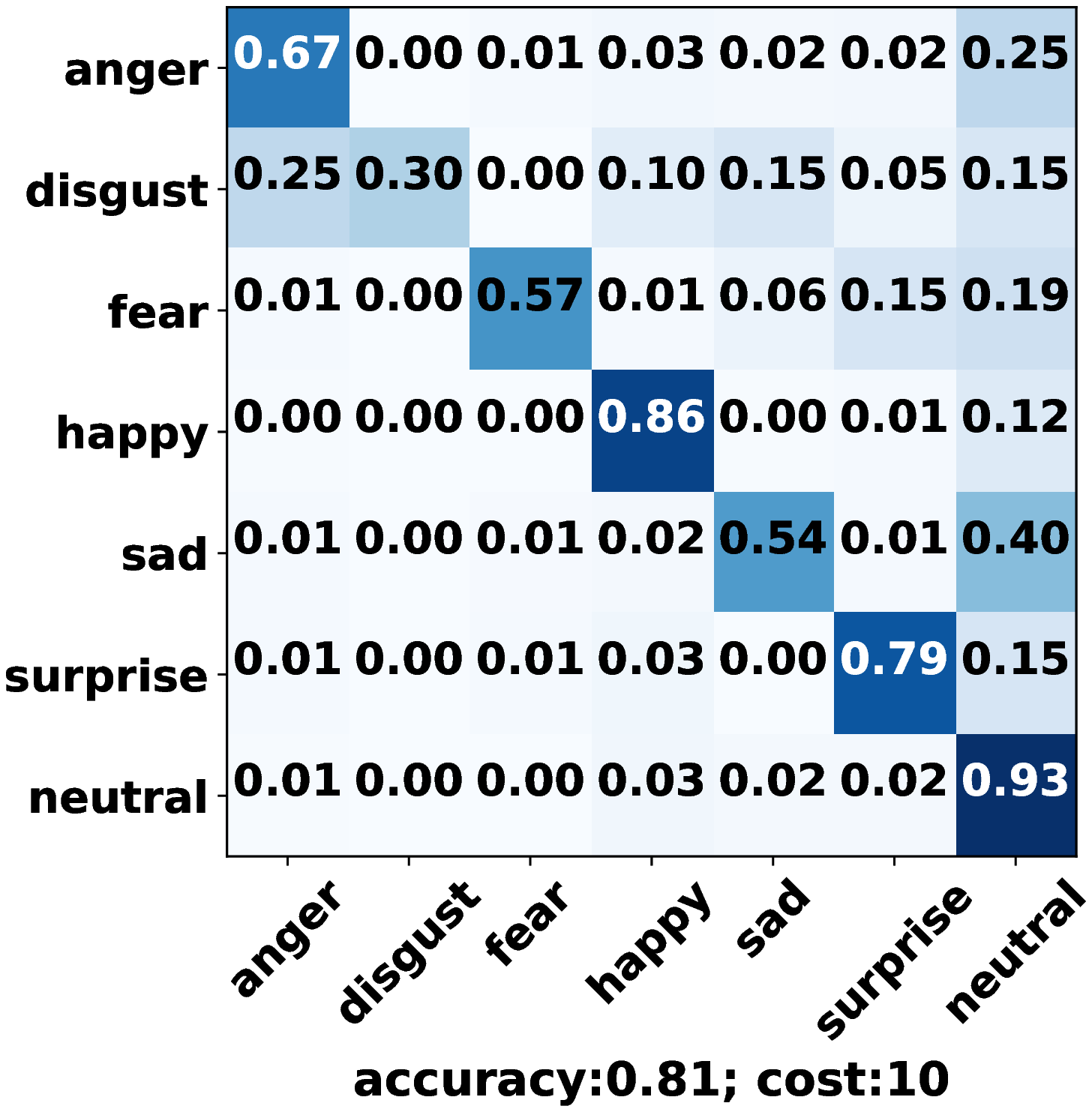}}
\end{subfigure}
\begin{subfigure}[Mix Experts]{\label{fig:cost_b}\includegraphics[width=0.30\linewidth]{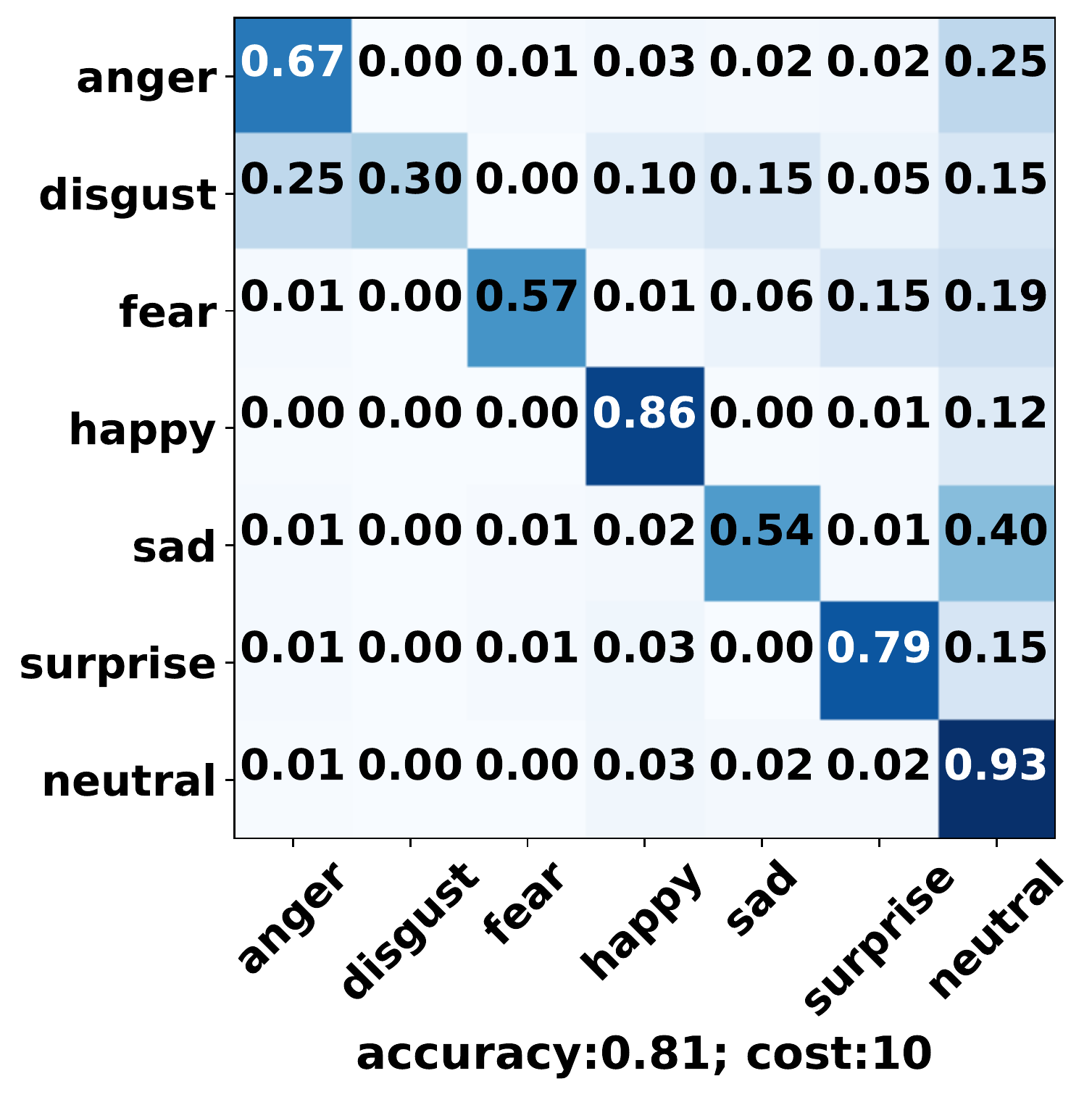}}
\end{subfigure}
\begin{subfigure}[Simple Cascade]{\label{fig:cost_c}\includegraphics[width=0.30\linewidth]{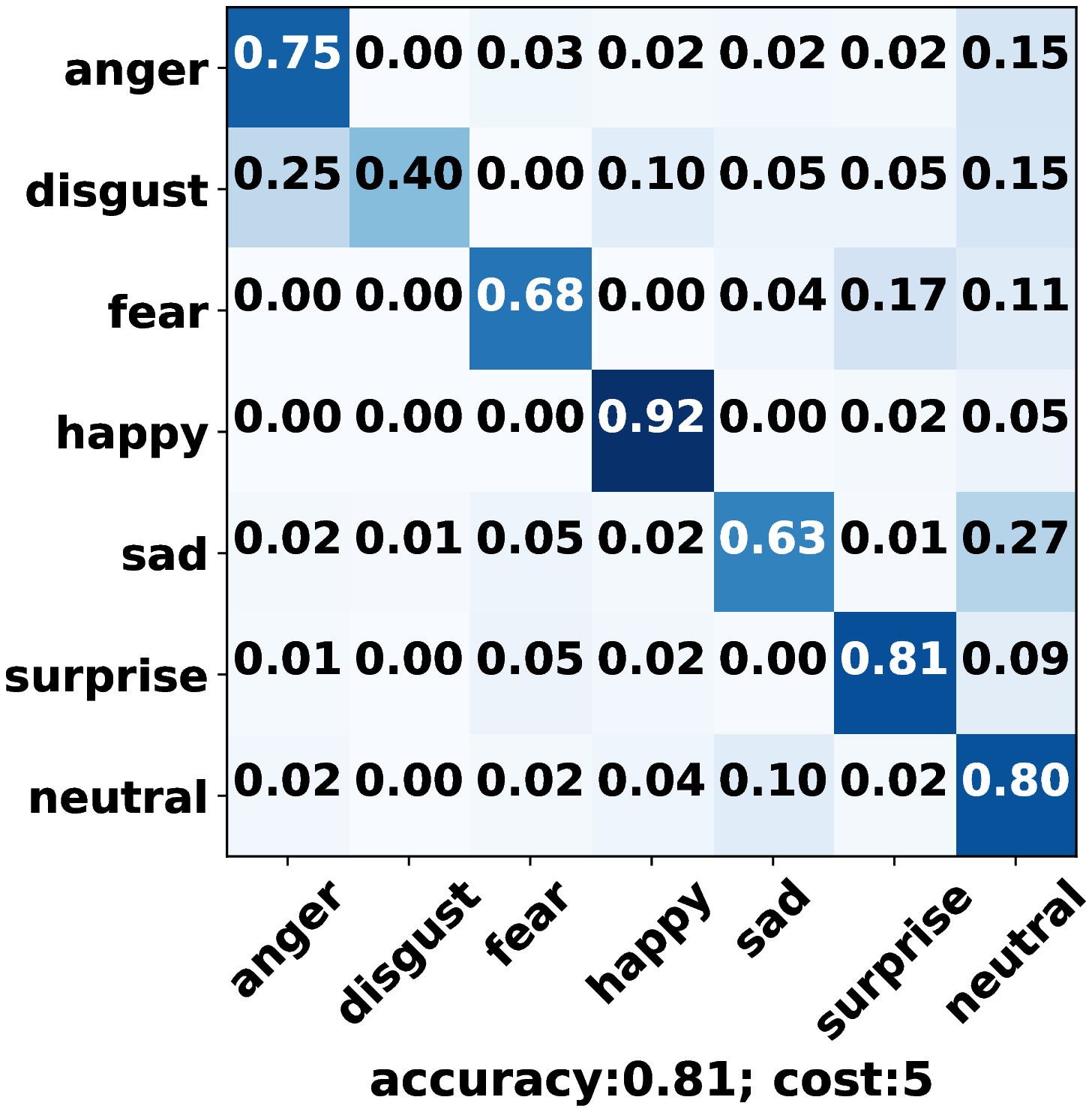}}
\end{subfigure}
\begin{subfigure}[(Simple) Majority Vote]{\label{fig:cost_a}\includegraphics[width=0.30\linewidth]{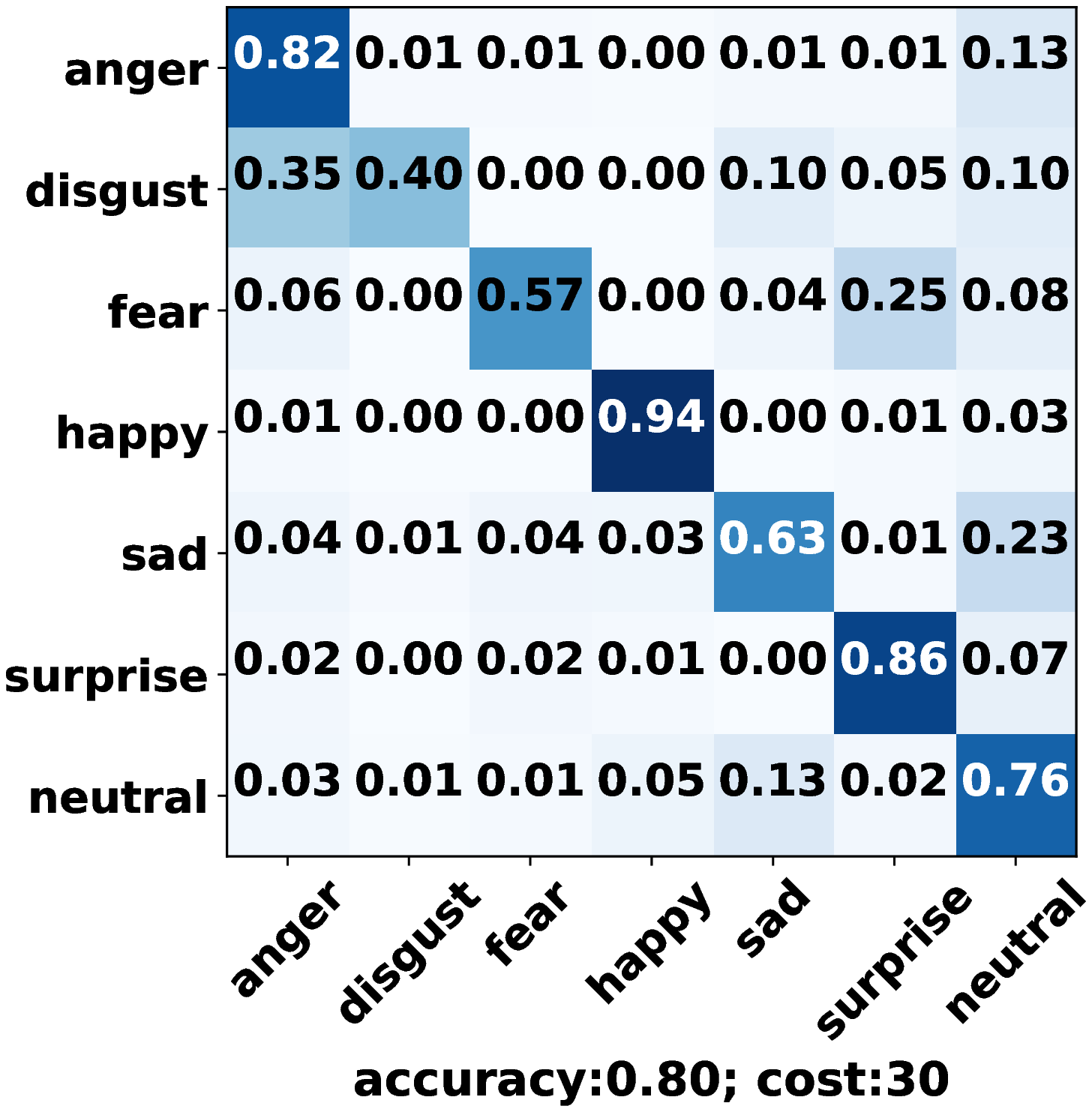}}
\end{subfigure}
\begin{subfigure}[Majority Vote]{\label{fig:cost_b}\includegraphics[width=0.30\linewidth]{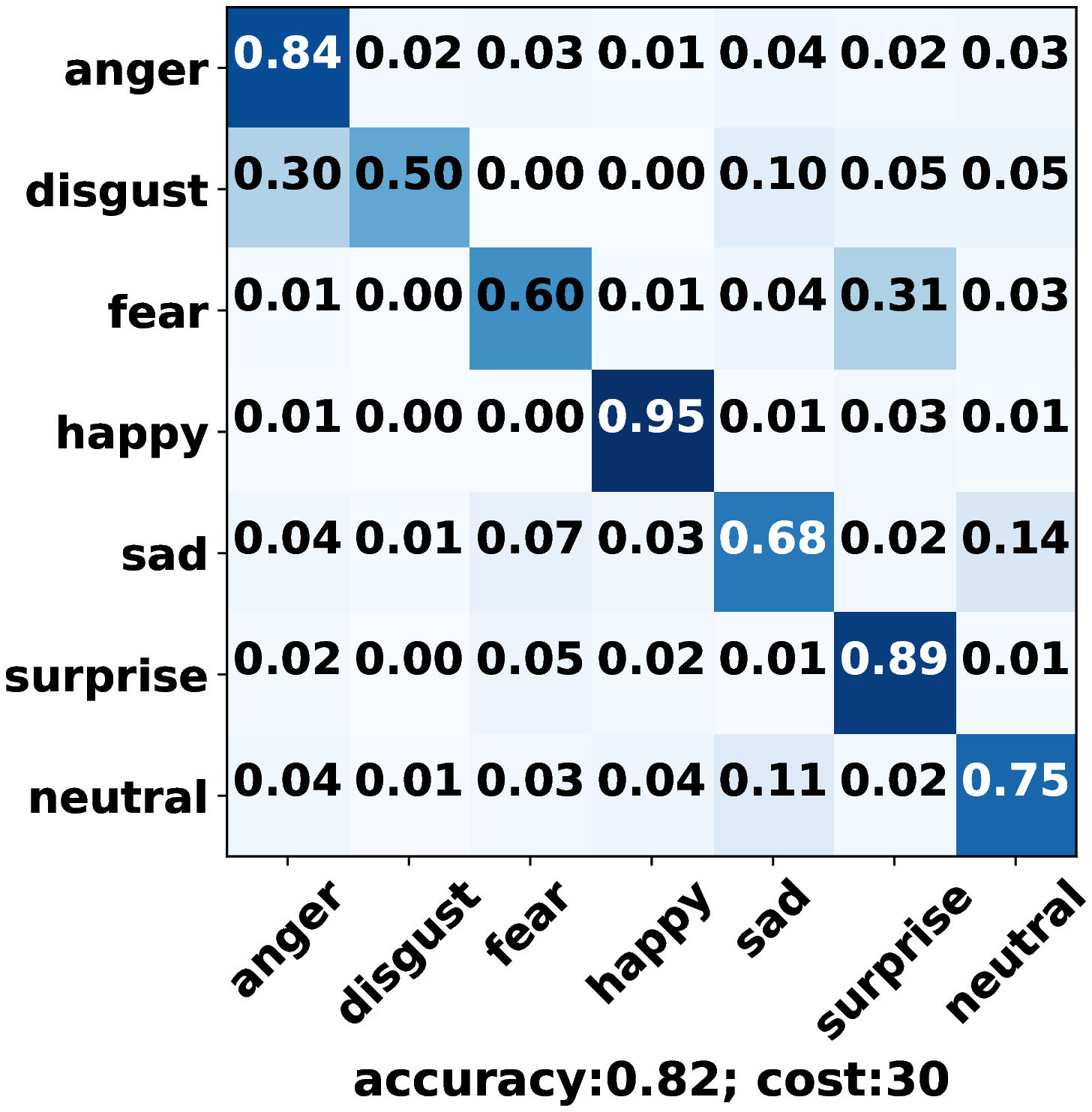}}
\end{subfigure}
\begin{subfigure}[\systemname{}]{\label{fig:cost_c}\includegraphics[width=0.30\linewidth]{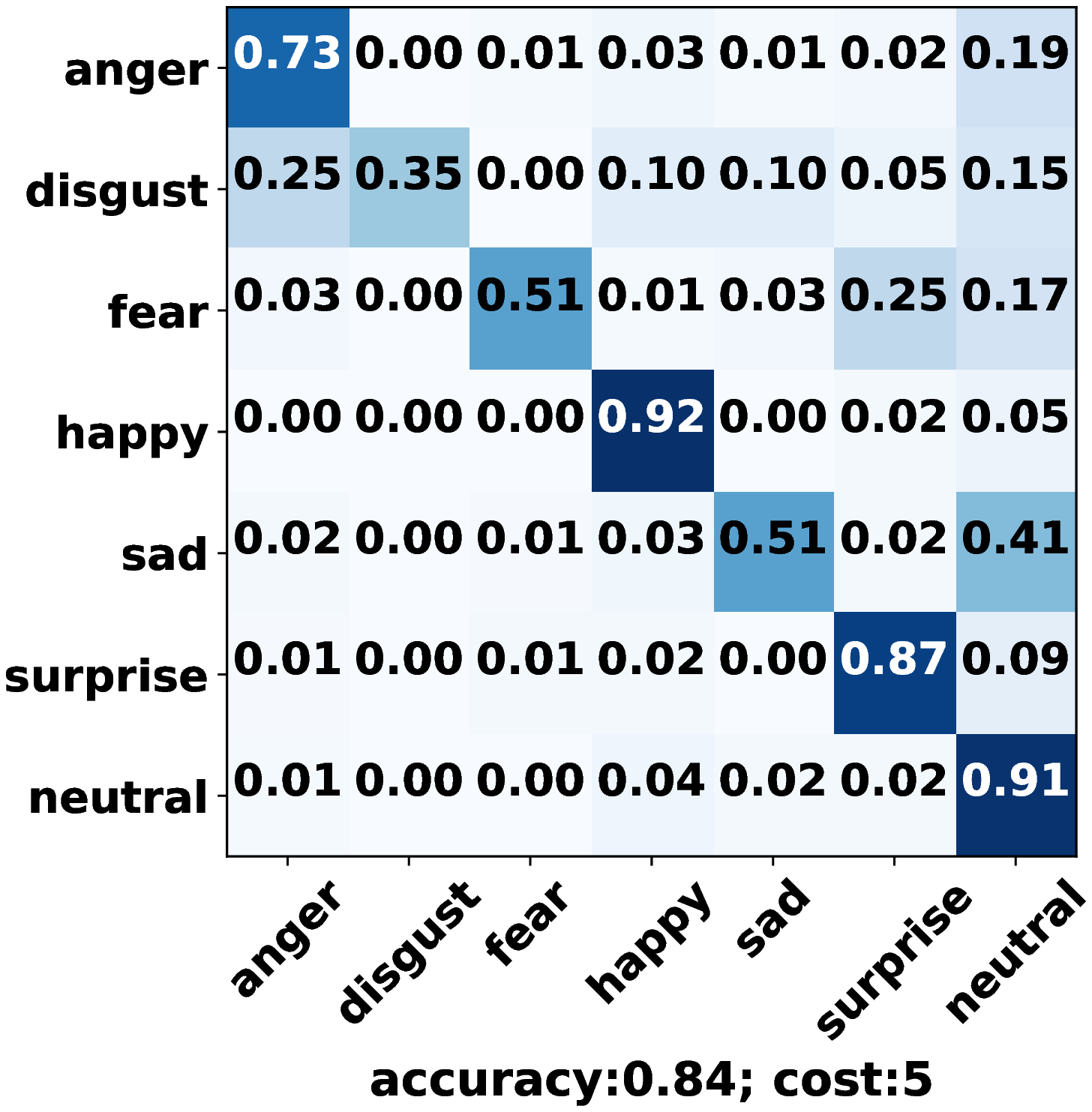}}
\end{subfigure}
		\caption{Confusion matrix annotated with overall accuracy and cost on  FER+ testing. The y-axis corresponds to the true label and x-axis represents the predicted label. Each entry in a confusion matrix is the likelihood that its corresponding label in x-axis is predicted given the corresponding true label in y-axis.
	For example, the 0.87 in (i) means that for all surprise images, \systemname{} correctly predicts 87\% of them as surprise, }\label{fig:FAME:confusionmatrix}
\end{figure}

Figure \ref{fig:FAME:confusionmatrix} shows the confusion matrix of \systemname{}, along with all ML services and the other approaches (namely, mixture of experts, simple cascade, (simple majority vote), and majority vote).
Among all the four services, we first note that there is an accuracy disparity for different facial emotions.
In fact, GitHub (CNN) gives the highest accuracy on anger images (0.73\%), fear (0.81\%), happy (0.90\%) and sad (0.60\%), Face++ is best at disgust emotion (0.60\%) and surprise (0.85\%), while Microsoft is best at neutral (90\%).
Meanwhile, GitHub (CNN) gives a poor performance for neutral images, Face++ can hardly tell the differences between fear and surprise, and Google has a hard time distinguishing between anger and disgust images.
This implies bias (and thus strength and weakness) from each ML API, leading to opportunities for  optimization. 
We would also like to note that such biases may be of independent interest and explored for fairness study in the future.

We notice that the mixture of expert approach has the same confusion matrix as the Microsoft API.
This is because the simple mixture of experts simply learns to always use the Microsoft API.
Noting that we use a simple linear gating on the raw image space, this  probably implies that Microsoft API has the best performance on any subspace in the raw image space produced by any hyperplane. 
More complicated mixture of experts approaches may lead to better performance, but requires more training complexity.
Again, unlike \systemname{}, mixture of experts does not allow users to specify their own budget/accuracy  constraints.

Simple cascade approach allows accuracy cost trade-offs. 
As shown in Figure \ref{fig:FAME:Cost}(f), while reaching the same accuracy as the best commercial API (Microsoft), it only asks for half of the cost.
In fact, simple cascade uses GitHub (CNN) and Microsoft as the base service and add-on service with a fixed threshold for all labels.
As a result, compared to Microsoft API, the prediction accuracy of neutral images drops significantly, while the accuracy on all the other labels increases, and thus resulting in the same accuracy.
\begin{figure}
	\centering
	\includegraphics[width=1\linewidth]{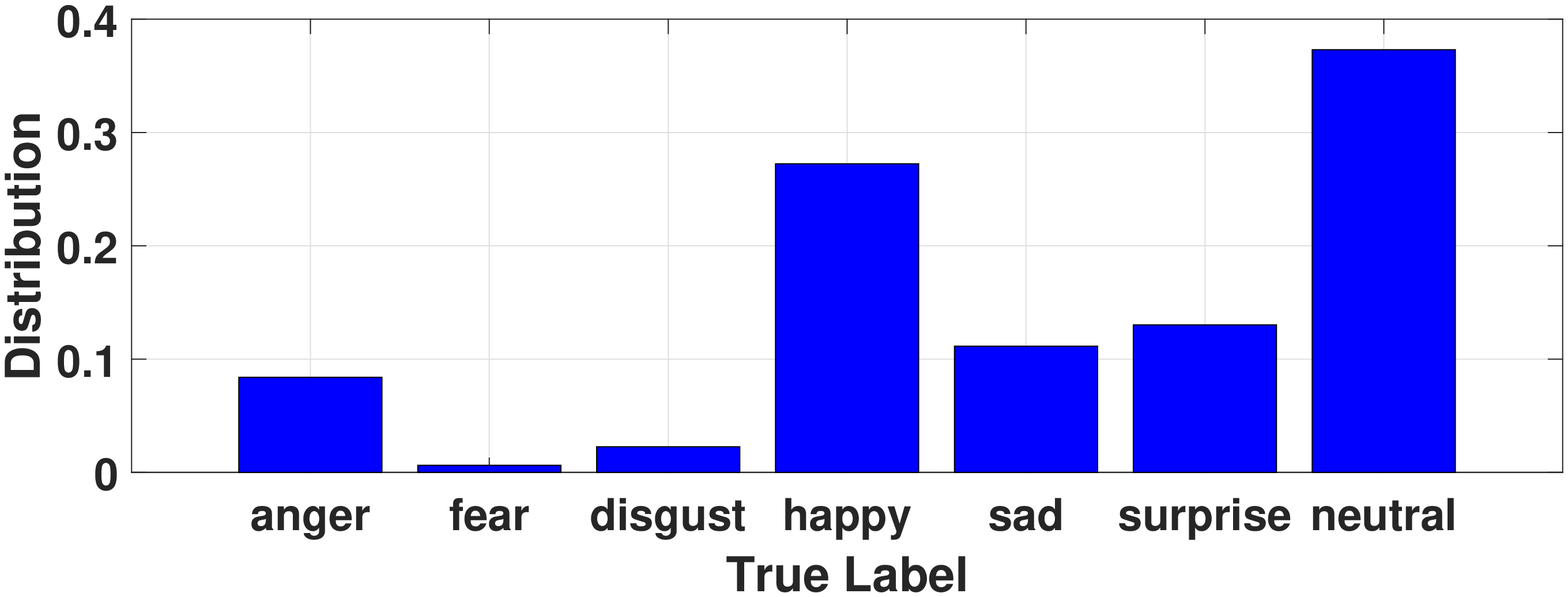}
	\caption{Label distribution on dataset FER+. Most of the facial images are neutral and happy faces, and only a few are fear and disgust.}\label{fig:FAME:FERDataDistribution}
\end{figure}
\systemname{}, also with half of the cost of Microsoft API, actually gives an accuracy (84\%) even higher than that of Microsoft API (81\%).
In fact, \systemname{} identifies that only a vert small portion of images are disgust, and thus slightly sacrifices the accuracy on disgust images to improve the accuracy on all the other images.
Compared to the simple cascade approach in Figure \ref{fig:FAME:Cost} (f), \systemname{}, as shown in Figure \ref{fig:FAME:Cost} (i), produces higher accuracy on all classes of images except disgust images. 
Compared to Microsoft API (Figure \ref{fig:FAME:Cost}), \systemname{} slightly hurts the accuracy on fear, sad, and neutral images, but significantly improve the accuracy on happy and other images.
Note that the strategy learned by \systemname{} depends on the data distribution.
As shown in Figure \ref{fig:FAME:FERDataDistribution},  most images are neutral and happy, and thus a slight drop on neutral images is worthy in exchange of a large improvement on happy images.
Depending on the training data distribution, \systemname{} may have learned different strategies as well. 

Finally we note that while (simple) majority vote gives a poor accuracy (80\% in Figure \ref{fig:FAME:confusionmatrix} (g)), the majority vote approach does lead to an accuracy (82\%) higher than Microsoft API, although it is still lower than \systemname{}'s accuracy (84\%).
In addition, ensemble methods like majority vote need access to all ML APIs, and thus requires a cost of 30\$, which is 5 times as large as the cost of \systemname{}.
Hence, they may not help reduce the cost effectively.

\end{document}